\documentclass[pdflatex, sn-apa]{sn-jnl}

\usepackage{graphicx}%
\usepackage{subcaption}%
\usepackage{multirow}%
\usepackage{amsmath,amssymb,amsfonts}%
\usepackage{amsthm}%
\usepackage{mathrsfs}%
\usepackage[title]{appendix}%
\usepackage[svgnames, dvipsnames]{xcolor}%
\usepackage{textcomp}%
\usepackage{manyfoot}%
\usepackage{booktabs}%
\usepackage{algorithm}%
\usepackage{algorithmicx}%
\usepackage{algpseudocode}%
\usepackage{listings}%
\usepackage{anyfontsize}%
\usepackage{float}
\usepackage{soul}
\usepackage{makecell}
\usepackage{changepage}
\usepackage{pdflscape}
\usepackage{tikz}%
\usetikzlibrary{ext.paths.ortho}

\usepackage[frozencache, cachedir=.]{minted}
\colorlet{custombg}{black!10}
\setminted{autogobble=true, frame=single, fontfamily=tt, escapeinside=||}
\setminted[py]{bgcolor=custombg, fontsize=\small}

\theoremstyle{thmstyleone}%
\newtheorem{theorem}{Theorem}

\newtheorem{lemma}{Lemma}

\theoremstyle{thmstyletwo}%
\newtheorem{remark}{Remark}%

\theoremstyle{thmstylethree}%

\begin{document}

\title[MomentQuant]{MomentQuant: an even more minimalist interval method with linear time complexity for time series classification}

\author[]{\fnm{Johann} \sur{Faouzi}}\email{johann.faouzi@ensai.fr}

\affil[]{Univ Rennes, Ensai, CNRS, CREST - UMR 9194, F-35000 Rennes, France}

\abstract{
    Time series data is very common in many real-world applications and in numerous domains, with increasing interest for automated information extraction using machine learning.
    One of these subfields is time series classification, which consists in assigning a label to each new, unseen time series.
    Many algorithms have been developed over the past decades, with the trade-off between predictive performance and computational cost being consistently discussed.
    Quant, an interval-based algorithm extracting quantiles from recursive, fixed, dyadic intervals, was shown to achieve high accuracy, while being very fast.
    We propose two changes to make this algorithm even faster.
    The first one is a better optimized implementation of the exact same algorithm.
    The second one is to derive approximate quantiles, using the Cornish-Fisher expansion, instead of exact quantiles.
    This change removes the necessity to sort the time series, leading to a smaller computational complexity.
    We call this novel algorithm MomentQuant.
    We provide evidence that our implementation of Quant is faster than the original one, and that MomentQuant is even faster than our implementation of Quant, at the cost of a tiny decrease in predictive performance.
    These improvements are especially relevant for real-life applications, where inference is performed much more often than training.
}

\keywords{time series classification, time series, classification, machine learning, supervised learning, feature extraction, quantiles, moments}

\maketitle

\section{Introduction}\label{sec1}

Time series classification is the supervised learning task of assigning a discrete label to a time-ordered sequence of measurements, and it arises across a wide range of application domains, including health monitoring, industrial process control, astronomy, and human activity recognition.
Because the discriminative structure of a time series can lie in its overall shape, in local patterns, or in its frequency content, depending on the domain, methods that perform consistently well across many kinds of data sets typically combine features extracted from several complementary representations of the input series rather than committing to a single one.
The introduction and steady growth of the University of California, Riverside (UCR) time series archive \citep{dauUCRTimeSeries2019e}, a public collection of benchmark data sets spanning dozens of application domains, made it possible to compare such methods systematically, and has driven two decades of active method development in the field.

This growth has, however, exposed a persistent tension between predictive accuracy and computational cost.
The most accurate methods on the UCR archive are typically large, heterogeneous ensembles combining several complementary representations and classifiers, such as the Hierarchical Vote Collective of Transformation Ensembles (HIVE-COTE) 2.0 \citep{middlehurstHIVECOTE20New2021a}.
Their accuracy gains come at a steep computational price, often requiring hours or days to train and test even on the comparatively small (by modern machine learning standards) UCR data sets, which makes them impractical to scale to larger data sets or to apply repeatedly, e.g., for hyperparameter tuning.
This tension motivated a distinct line of work seeking to reproduce most of that accuracy at a small fraction of the cost, most prominently the Random Convolutional Kernel Transform (ROCKET) algorithm \citep{dempsterROCKETExceptionallyFast2020} and its successor MiniRocket \citep{dempsterMiniRocketVeryFast2021a}, which summarize each series through a large bank of (random or fixed) convolutional kernels and train a linear classifier on the resulting features, reaching near-state-of-the-art accuracy while being orders of magnitude cheaper to train than the heterogeneous ensembles above.

Quant \citep{dempsterQuantMinimalistInterval2024} is a recent, markedly simpler entry in this second line of work.
Rather than convolutional kernels, it computes a single type of feature, empirical quantiles, from a fixed, hierarchical set of intervals of each series, over four complementary representations of the series (the raw series, a smoothed first-order difference, the second-order difference, and the magnitude of its discrete Fourier transform), and feeds the concatenated quantiles, unchanged, to an off-the-shelf tree ensemble classifier.
Despite this minimalism, and despite using a single feature type where most interval-based methods before it combined several, Quant matches the accuracy of the most accurate interval-based methods on the UCR archive, and stays close in aggregate accuracy to the very best, much more expensive heterogeneous ensembles.
Quant only needs, by the original authors' own account, under fifteen minutes of combined training and inference time on a single central processing unit (CPU) core to process the $142$ univariate data sets of the UCR archive \citep{dempsterQuantMinimalistInterval2024}.

Although Quant's overall speed is undeniable, its computational complexity has not been established so far.
The original publication has a short section on this topic, but does not provide an in-depth analysis \citep{dempsterQuantMinimalistInterval2024}.
The authors mention that they ``\emph{treat the computational cost of sorting the values as an upper bound on the cost of computing the quantiles: $\mathcal{O}(l \cdot \log(l))$, where $l$ is the time series length}''.
However, the subseries from each interval are also sorted, not just the whole time series, in order to compute the quantiles.
Moreover, the authors do not prove that the cost of computing the quantiles is upper bounded by the cost of sorting the values.
The authors also provide the computational complexity of the training of the classification step of Quant, and their results demonstrate that the training the classification step is the longest part of the whole pipeline.
Overall, the original publication overlooks the computational complexity of the transformation step, as if it was irrelevant compared to the computational complexity of the classification step.
We will demonstrate that this analysis does not hold when considering only the inference phase of the algorithm, which is very relevant for real-life applications of any machine learning model, and is too often overlooked in the time series classification literature.
We will also provide an in-depth analysis of the computational complexity of Quant, both theoretical and practical.

Quant's overall speed does not mean that every part of its reference implementation\footnote{\url{https://github.com/angus924/quant}} is equally well suited to every setting that it is run in.
The implementation commits to a single, fixed computational structure: one call to PyTorch's batched \mintinline{py}{torch.quantile} function per interval, amortized over the whole batch of series at once.
This is a natural design for a tensor library built around batched, hardware-accelerated execution.
However, it is not well-matched to either end of the series-length spectrum on a single CPU core, the setting that Quant's own headline runtime figures above were obtained in, as we detail in \autoref{sec3}.
For short series, the fixed per-call dispatch overhead of this design, paid once per interval regardless of how little work that interval actually requires, comes to dominate the total cost.
For long series, the same design instead pays a growing sorting cost, since exactly computing a set of quantiles from $l$ raw values requires first sorting them, which is an $\mathcal{O}(l \cdot \log(l))$ operation.

This paper addresses both regimes, but its main contribution targets the long-series one.
We reimplement Quant from scratch in NumPy, both to reproduce the original algorithm faithfully as an \emph{exact} mode, which we further speed up for short series (\autoref{sec5}) by choosing between two different loop orderings for the same computation, guided by a full theoretical cost analysis (\autoref{sec4}). Then, we introduce a genuinely new \emph{approximate} mode aimed at long series.
Instead of sorting each interval to obtain its exact quantiles, this approximate mode estimates them directly from that interval's own sample mean, variance, skewness, and excess kurtosis.
These four quantities are computable in a single $\mathcal{O}(l)$ pass, via the Cornish-Fisher expansion \citep{cornishMomentsCumulantsSpecification1938, fisherPercentilePointsDistributions1960}, a classical asymptotic correction of the standard normal quantile for a distribution's departure from normality.
Doing so trades the exact mode's $\mathcal{O}(l \cdot \log(l))$ sorting cost for an $\mathcal{O}(l)$ moment-based one, at the cost of a small, and, as we show empirically, largely controllable approximation error.
Concretely, this paper makes the following contributions:
\begin{itemize}
    \item A precise theoretical cost model, with proofs, for the exact mode and its two natural loop orderings (\autoref{sec4} and \autoref{sec5}), and for the new approximate mode (\autoref{sec6}), including a closed-form asymptotic characterization of how each mode's total cost scales with series length.
    \item A moment-based approximate mode for Quant built on the Cornish-Fisher expansion, together with an empirical study of its quantile-approximation fidelity and of the resulting accuracy and runtime trade-off, conditioned on series length, across the $142$ univariate data sets of the UCR archive (\autoref{sec6} and \autoref{sec8}).
    \item An automatic dispatch heuristic for each mode (exact and approximate), calibrated per machine, that selects between the loop orderings, based on the series length at hand (\autoref{sec6}, \autoref{sec8}).
    \item A comprehensive, single-threaded and multithreaded analysis of an optimization that was mentioned in \citep{dempsterQuantMinimalistInterval2024}, but neither implemented nor evaluated (\autoref{sec9}).
\end{itemize}

The remainder of this paper is organized as follows.
\autoref{sec2} provides some background, with an overview of the time series classification literature, the interval-based methods, and Quant.
In \autoref{sec3}, we explain the specific limitations of Quant's reference implementation that motivate this paper.
\autoref{sec4} develops a theoretical cost model for processing a single series under Quant's exact quantile computation.
\autoref{sec5} uses this model to choose, for the exact mode, between two loop orderings depending on series length.
In \autoref{sec6}, we introduce the moment-based approximate mode built on the Cornish-Fisher expansion, together with its own cost model.
\autoref{sec7} presents our experimental setup, while \autoref{sec8} presents all our results.
We investigate the optimization mentioned in Quant's original publication in \autoref{sec9}, before concluding in \autoref{sec10}.

\section{Background}\label{sec2}

We divide this section dedicated to the relevant background into three parts.
First, we provide an overview of the time series classification literature.
Then, we focus on the specific family of algorithms that Quant belongs to.
Finally, we provide an in-depth presentation of the Quant algorithm.
For a recent review of the time series classification literature, we refer the readers to \citep{middlehurstBakeReduxReview2024}.
We will not cover deep learning in this section and refer the readers to \citep{ismailfawazDeepLearningTime2019a} and \citep{mohammadifoumaniDeepLearningTime2024}.

\subsection{Time series classification}

Numerous algorithms for time series classification have been developed over the past decades.
These methods can be grouped in different families based on their structures.

Distance-based methods rely on computing (dis)similarity scores between samples.
For classification, such classic methods include nearest-neighbor methods \citep{fixDiscriminatoryAnalysisNonparametric1989, coverNearestNeighborPattern1967} and support vector machines \citep{cortesSupportVectorNetworks1995a}.
For time series classification, specific distances and kernels have been developed.
Dynamic Time Warping (DTW) \citep{sakoeDynamicProgrammingAlgorithm1978, berndtUsingDynamicTime1994} is an elastic distance that uses dynamic programming to find the optimal alignment between two time series by computing the minimum path through a cost matrix consisting of the pairwise point-wise squared differences.
Several variants of DTW have been developed, some of them adding a region constraint on the possible set of paths \citep{sakoeDynamicProgrammingAlgorithm1978, itakuraMinimumPredictionResidual1975} and some others adding weights penalizing alignments with high phase differences \citep{jeongWeightedDynamicTime2011, herrmannAmercingIntuitiveEffective2023}.
Global alignment kernels \citep{cuturiKernelTimeSeries2007, cuturiFastGlobalAlignment2011a} are kernels specific to time series that can be used with support vector machines for classification.

Feature-based approaches consist in computing statistics from the whole time series.
For time series classification, a standard classification algorithm is then built on top of these derived statistics.
The canonical time series characteristics (Catch22) \citep{lubbaCatch22CAnonicalTimeseries2019} are 22 features that have been determined to be discriminative on the UCR data sets \citep{dauUCRTimeSeries2019e}, and a decision tree was used to perform classification.
The Time Series Feature Extraction based on Scalable Hypothesis Tests (TSFresh) algorithm \citep{christTimeSeriesFeatuRe2018a} is a set of nearly 800 features extracted from time series.
This set of features can be pruned using statistical tests.
The Random forest \citep{breimanRandomForests2001a} and AdaBoost \citep{freundExperimentsNewBoosting1996} algorithms were investigated to perform classification using these extracted features.

Shapelets are subseries extracted from the training time series and used to differentiate time series.
To do so, the most commonly used metric is the minimum of the Euclidean distances between the shapelet and all the subseries, of the same length as the shapelet, extracted from the time series.
For time series classification, shapelets were first investigated in \citep{yeTimeSeriesShapelets2011} and the transformation was followed by a decision tree to perform classification.
The Shapelet Transform Classifier (STC) \citep{hillsClassificationTimeSeries2014a} extracts all the possible shapelets from the training set before selecting the most discriminative ones, followed by an ensemble of classifiers.
Several refinements have been made since the release of its original version to improve its performance and scalability, notably performing a random search for the shapelets and using a single classification algorithm on top of the transformation \citep{bostromBinaryShapeletTransform2017, bostromEvaluatingImprovementsShapelet2016a}.
The Random Dilated Shapelet Transform (RDST) algorithm \citep{guillaumeRandomDilatedShapelet2022} adds two novel elements to existing shapelet-based approaches: dilation and two new features (the position of the minimum distance and the number of occurrences of the shapelet).

Dictionary-based methods, similarly to shapelet-based approaches, also extract subseries from time series.
However, no distance between the subseries and the time series is computed.
Instead, each subseries is turned into a short sequence of discrete symbols, which is usually called a word.
The frequencies of all the words extracted from a time series are then computed to obtain the new representation of this time series.
For time series classification, a standard machine learning classification algorithm is applied on top of this transformation.
There exist two main symbolic representations of time series.
The first one, in the time domain, is the Symbolic Aggregate approXimation (SAX) \citep{linExperiencingSAXNovel2007a}, which performs dimensionality reduction first in the time domain (using the mean from non-overlapping windows) and then in the value domain (using discretization based on quantiles).
The second one, in the frequency domain, is the Symbolic Fourier Approximation (SFA) \citep{schaferSFASymbolicFourier2012}, which computes the discrete Fourier transform of the time series, selects a subset of the Fourier coefficients, and discretizes them (using quantiles).
Several algorithms in this family have been developed, mostly using the SFA representation.
The Bag-of-SFA-Symbols (BOSS) model \citep{schaferBOSSConcernedTime2015} extracts subseries from a time series using overlapping windows, then transforms each subseries into a word using SFA, and finally the word frequencies are computed.
An ensemble of BOSS models is used in practice, with different values for several hyperparameters.
The Word Extraction for Time Series Classification (WEASEL) algorithm \citep{schaferFastAccurateTime2017a} involves several new changes compared to BOSS.
Notably, the selection of the Fourier coefficients is supervised (using ANOVA tests), the quantization of SFA is also supervised (using information gain), bigrams and multiple window lengths are considered, a subset of words is selected in a supervised fashion (using chi-squared tests), and the classification step is replaced with logistic regression.
WEASEL was later refined in a new version called WEASEL 2.0 \citep{schaferWEASEL20Random2023}, adding notably a novel dilation mapping, using less supervised selection methods in SFA, and replacing logistic regression with a Ridge classification algorithm.

Convolution-based methods rely on the convolution operator to extract features from time series using kernels.
However, it has been shown that using random kernels instead of learned kernels is much faster but still very effective for time series classification.
This strategy was introduced with the ROCKET algorithm \citep{dempsterROCKETExceptionallyFast2020}.
ROCKET generates numerous random kernels (with random length, weights, bias, padding and dilation), applies the convolution for each of them, and extracts two aggregate features for each of them: the maximum value and the proportion of positive values.
Finally, a Ridge classifier is trained on these extracted features.
ROCKET has been extended in two versions.
The first one is MiniRocket \citep{dempsterMiniRocketVeryFast2021a}, which involves much less randomness in the generation of the kernels than ROCKET and only the proportion of positive values is derived.
The second extension is MultiRocket \citep{tanMultiRocketMultiplePooling2022}, which adopts the improvements of MiniRocket but also includes two notable new changes: more features are derived, and half of the convolutions are applied to the first-order difference of the time series.
A model combining both dictionary- and convolution-based approaches, called HYbrid Dictionary-ROCKET Architecture (Hydra) \citep{dempsterHydraCompetingConvolutional2023}, was later proposed.
The kernels are aggregated into groups, the best matching kernel among each group is recorded, and their frequencies are computed.

Hybrid algorithms combine algorithms from different families in order to cover as many data sets and problems as possible.
HIVE-COTE \citep{linesTimeSeriesClassification2018a} is a heterogeneous ensemble consisting of five algorithms, each from a different representation.
This algorithm was updated shortly after to improve its scalability: HIVE-COTE 1.0 \citep{bagnallUsagePerformanceHierarchical2020a} uses four algorithms instead of five, and faster algorithms from each family.
HIVE-COTE 2.0 \citep{middlehurstHIVECOTE20New2021a} addresses further scalability issues and updates its components to use better, faster algorithms.

\subsection{Interval-based methods}

The pipelines of most interval-based methods are very similar.
Intervals are defined to extract subseries from the whole series.
From each subseries, descriptive statistics (such as the mean, the variance, higher-order moments, quantiles, etc.) are extracted.
Additionally, in several methods, multiple representations (such as the first-order difference and the discrete Fourier transform) of the whole series are used in order to extract more diverse features.
All the extracted features are then concatenated to obtain the design matrix, which is finally fed to a standard machine learning classification.
Tree-based algorithms are the most commonly used classification algorithms with interval-based methods.

The Time Series Forest (TSF) algorithm \citep{dengTimeSeriesForest2013} selects several random intervals, computes three statistics for each interval (the mean, the standard deviation and the slope) and all the features are concatenated to train a decision tree.
This process is repeated several times (with different random intervals and different trees) to build an ensemble model, with the final prediction being obtained using majority voting.

The TSF algorithm has received two extensions.
Supervised Time Series Forest (STSF) \citep{cabelloFastAccurateTime2020} includes several representations of the time series (raw, first-order difference, and discrete Fourier transform) and a supervised method for selecting the most discriminative interval features using a decision tree.
Several decision trees are built using this process, and the final prediction is obtained using majority voting.
Randomized STSF (RSTSF) \citep{cabelloFastAccurateExplainable2024} extends STSF with more randomness, and builds a single design matrix used to train an extremely randomized trees model \citep{geurtsExtremelyRandomizedTrees2006a}.

The Random Interval Spectral Ensemble (RISE) \citep{flynnContractRandomInterval2019} is similar to TSF, but was designed for tackling audio problems, and thus computes spectral features (periodogram and auto-regression), which are known to be discriminatory for such problems, instead of time-domain features.
Contrary to TSF, a single interval is randomly selected for each tree instead of several ones.

The Canonical Interval Forest (CIF) algorithm \citep{middlehurstCanonicalIntervalForest2020} is another extension of TSF with more features being extracted.
In addition to the three features of TSF, the 22 Catch22 features \citep{lubbaCatch22CAnonicalTimeseries2019} are also included.
An ensemble of trees is built on top of the extracted features.
Additional diversity is obtained by considering only a subset of the 25 features for each tree, similarly to what is done in a random forest \citep{breimanRandomForests2001a}.
The Diverse Representation Canonical Interval Forest (DrCIF) algorithm \citep{middlehurstHIVECOTE20New2021a} extends CIF with two additional time series representations: periodograms and first-order differences.

\subsection{Quant}

Quant \citep{dempsterQuantMinimalistInterval2024} is an interval-based method with a transformation step followed by a classification step.
The classification algorithm used is extremely randomized trees \citep{geurtsExtremelyRandomizedTrees2006a}.
Two other algorithms were also investigated in the original publication: random forests \citep{breimanRandomForests2001a} and Ridge \citep{hoerlRidgeRegressionApplications1970}.
For the rest of this section, we focus on the transformation step of Quant.
More generally, we use the term \emph{Quant} for both the transformation step and the pipeline of the transformation and classification steps, with the context implicitly indicating which one.

Quant is an interval-based method with the following characteristics:
\begin{itemize}
    \item It uses four distinct representations of the time series: the raw representation, the (smoothed) first-order difference, the second-order difference, and the discrete Fourier transform (in practice, the magnitudes of the complex Fourier coefficients).
    \item It derives fixed dyadic intervals at several levels.
    Starting from any representation of the whole series, the whole series is considered, then the first half and the second half of the whole series are considered, then the four quarters of the whole series are considered, etc.
    We call these intervals the base intervals.
    Moreover, Quant also includes shifted intervals for any level excluding the first one, with the shift being equal to half the interval length.
    The \emph{depth}, that is the number of levels that are considered, and denoted by $d$, is a hyperparameter of Quant, and its default value is $d=6$.
    \autoref{fig:quant_intervals} illustrates the set of intervals considered by Quant.
    \item The descriptive statistics computed are quantiles.
    Instead of using a fixed number of quantiles per interval, which would be independent of the interval length, Quant defines a \emph{quantile divisor} such that the number of quantiles is proportional to the interval length.
    Denoting by $m$ the length of any interval and $\nu$ the quantile divisor, Quant computes $k = m / \nu$ evenly-spaced quantiles from the minimum to the maximum, that is the $(0, 1 / (k-1), \ldots, (k - 2)/(k - 1), 1)$ quantiles, per interval.
    \item The interval mean is subtracted to every other quantile.
    The intuition is that the extracted features (that is, the quantiles) represent both the distribution of the values in the interval and the distribution of values in the interval relative to the mean.
    \autoref{fig:quant_quantiles} illustrates the quantiles extracted by Quant.
    This idea has already been used in other families of time series classification algorithms, notably in dictionary-based methods where the subseries are often normalized in order to consider the relative (to the interval) values of the subseries instead of the absolute values.
\end{itemize}

\begin{figure}
    \includegraphics[width=\textwidth]{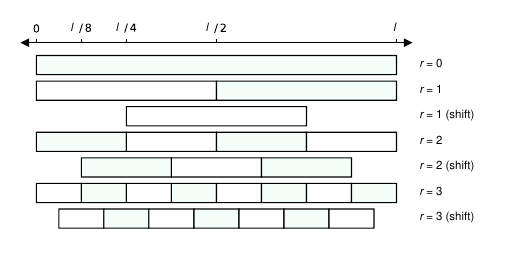}
    \caption{
        Illustration of the set of intervals that are considered by Quant.
        In this example, the value of the depth is $d=4$.
        For every level $r \in \{0, \ldots, d - 1\}$, base intervals of length $l \cdot 2^{-r}$ are considered.
        For every level $r \in \{1, \ldots, d - 1\}$, shifted intervals of length $l \cdot 2^{-r}$ are considered, with the shift being equal to half the interval length (that is, $l \cdot 2^{-r - 1}$).
        The figure is adapted from \citep{dempsterQuantMinimalistInterval2024} and the corresponding author has allowed its reuse.}
    \label{fig:quant_intervals}
\end{figure}

\begin{figure}
    \includegraphics[width=\textwidth]{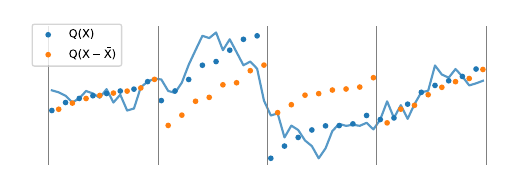}
    \caption{
        Illustration of the quantiles derived by Quant.
        This example has 4 intervals, corresponding to the base intervals at the dyadic level $r = 2$.
        Quantiles are (possibly interpolated) values of the intervals.
        Every other quantile is centered, which is illustrated by the color of the dots: blue dots represent raw quantiles, while orange dots represent centered quantiles.
        Raw quantiles quantify the distribution of the absolute values in the interval, while centered quantiles quantify the distribution of the relative values in the interval.
        The figure is from \citep{dempsterQuantMinimalistInterval2024} and the corresponding author has allowed its reuse.}
    \label{fig:quant_quantiles}
\end{figure}

In practice, there are a few more subtleties, and we believe that it is important to mention them:
\begin{itemize}
    \item The four representations of a series do not have the same length.
    If $l$ is the length of the raw representation, then the lengths of the (smoothed) first-order difference, the second-order difference and the discrete Fourier transform are $l - 1$, $l - 2$, and $\lfloor l / 2 \rfloor + 1$ respectively, where $\lfloor \cdot \rfloor$ is the floor function.
    \item The first-order difference is smoothed with a length-$5$ centered moving average, applied after padding both ends of the differenced series by $2$ using edge-value (replicate) padding, which restores its original (post-differencing) length.
    \item The depth of Quant is capped by the series length.
    Indeed, it would be impossible to split a subseries of length 1.
    In practice, if $\lfloor \log_2(l) \rfloor < d - 1$, then the value of the depth that is used is $\lfloor \log_2(l) \rfloor + 1$, and otherwise the provided $d$ value is used.
    In the general case, the value of the depth used is thus:
    $$
        e = \min \left( d, \lfloor \log_2(l) \rfloor + 1 \right)
    $$
    For $d=6$, it means that the series of length $l < 32$ have fewer than $6$ dyadic levels.
    \item All the intervals of any level $r \in \{1, \ldots, e - 1\}$ are not exactly equal-sized, unless $l$ is exactly divisible by $2^r$.
    Given the implementation chosen in Quant, the lengths of any two intervals of the same level differ by at most $\pm 1$.
    Let $q_r$ and $s_r$ be the quotient and the remainder of the Euclidean division of $l$ by $2^r$ respectively:
    $$
        q_r = \lfloor l \cdot 2^{-r} \rfloor \qquad \text{and} \qquad s_r = l - q_r \cdot 2^r
    $$
    There are exactly $2^r - s_r$ intervals of length $q_r$ and $s_r$ intervals of length $q_r + 1$.
    \item The exact value for the shift of the shifted intervals of any level $r \in \{1, \ldots, e - 1\}$ is $\lceil l \cdot 2^{-r - 1} \rceil$.
    \item At the last level $r = e - 1$, shifted intervals are included if and only if the median of the base interval lengths of this level is greater than $1$, which is equivalent to $l \geq 1.5 \cdot 2^{e - 1}$.
    \item There are three different cases (two edge cases and the normal case) to compute the quantiles:
    \begin{enumerate}
        \item If a subseries is of length $m = 1$, the single value of the subseries is returned.
        \item If a subseries is of length $m$ larger than $1$ and lower than or equal to the quantile divisor $\nu$, that is $1 < m \leq \nu$, then only the median is computed (and no centering is performed).
        \item Otherwise, the actual number of quantiles is equal to $1 + \lfloor (m - 1) / \nu \rfloor$.
        However, in the general case, the quantiles are not exact values from the subseries.
        For instance, consider a subseries of length $11$.
        If the number of quantiles is equal to $5$, the three quartiles would correspond to the following indices (using zero-based numbering) of the sorted values: $2.5$, $5$, and $7.5$.
        However, two indices are not integers ($2.5$ and $7.5$).
        When an index is not an integer, linear interpolation is performed between the lower and upper bounds.
    \end{enumerate}
\end{itemize}

\section{Limitations of Quant's reference implementation}\label{sec3}

Quant's reference implementation\footnote{\url{https://github.com/angus924/quant}} is written in Python and commits to a single, fixed computational structure for extracting interval quantiles.
First, given the length of each time series $l$ and the maximum depth $d$, the fixed endpoints of all the intervals are computed (not shown).
Each interval issues one call to the \mintinline{py}{f_quantile} function, which invokes the \mintinline{py}{torch.quantile} function on the entire batch of time series at once, as illustrated in \autoref{listing:1}.
This process is repeated for all the intervals using a for loop (over all the intervals), as illustrated in \autoref{listing:2}.

\begin{listing}
    \begin{minted}{py}
    def f_quantiles(X, div=4):
        m = X.shape[-1]  # m is the interval width
        num_quantiles = 1 + (m - 1) // div  # Number of quantiles computed
        if num_quantiles == 1:  # If a single quantile is computed
            # Compute the median
            quantiles = X.quantile(torch.tensor([0.5]), dim=-1)
        else:  # If several quantiles are computed
            # Compute the quantiles for the base intervals
            quantiles = X.quantile(torch.linspace(0, 1, num_quantiles), dim=-1)
            # Center every other quantile
            quantiles[..., 1::2] = quantiles[..., 1::2] - X.mean(-1)
        return quantiles
    \end{minted}
    \caption{
        Simplified version of the \mintinline{py}{f_quantiles} Python function used to compute the quantiles for all the samples for a given interval in Quant's reference implementation.
    }
    \label{listing:1}
\end{listing}

\begin{listing}
    \begin{minted}{py}
    features = []  # Instantiate an empty list
    for a, b in intervals:  # For each interval
         # Compute the quantiles for this interval
        features.append(f_quantile(X[..., a:b], div = self.div))
    # Concatenate all the quantiles into a single tensor
    features = torch.cat(features, -1)
    \end{minted}
    \caption{
        Simplified version of the for loop, over the intervals, used to compute the quantiles for all the samples and all the intervals in Quant's reference implementation.
    }
    \label{listing:2}
\end{listing}

This implementation has an obvious utility for researchers: it is extremely simple (very few lines of code) and easily readable.
Nonetheless, it might not be optimal in practice.
Indeed, this design amortizes the cost of every quantile computation across the full batch of samples, and is a natural choice for a tensor library built around batched, hardware-accelerated execution such as graphical processing units (GPU).
On CPU, however, this single fixed structure exposes two distinct inefficiencies at opposite ends of the series-length spectrum, both of which stem from the same root cause: the number of \mintinline{py}{torch.quantile} invocations is fixed by the total number of intervals (which is bounded by the value of $d$ in practice), independent of how much actual numerical work each invocation performs.

For short series, interval widths are small, so each \mintinline{py}{torch.quantile} call does very little genuine sorting and interpolation work.
What dominates instead is the fixed, per-call cost of going through PyTorch's dispatcher.
This overhead is paid once per interval, regardless of interval width, and is largely unavoidable within PyTorch's general-purpose tensor-operation model.

For long series, the opposite failure mode appears.
For the first depth level, the quantiles for the whole time series are computed.
The \mintinline{py}{torch.quantile} function sorts the input to compute the quantiles.
This sorting operation has an $\mathcal{O}(l \cdot \log l)$ cost just to sort the whole time series.
This cost does not even include sorting all the subseries extracted.

Quant's fixed structure is not well-matched to either extreme of time series length by design: it always pays per-call dispatch overhead proportional to the total number of intervals (which is the predominant term for short series), and always pays a total sort cost (which is the predominant term for long series), with no mechanism to trade one off against the other.

We propose two distinct solutions to improve the runtimes of Quant in both settings (short and long series).
But first, we provide an analysis of the theoretical complexity of processing a single time series.

\section{Complexity of processing a single series}\label{sec4}

In this section, we investigate the theoretical complexity of processing a single time series.
First, we focus on the processing of only the raw series.
Then, we explain how to generalize the results for the whole processing, which includes the four different representations of the series.
For any given representation, there are two dominant terms in the processing: sorting the values in each interval and extracting the interpolated (and possibly centered) quantiles from the sorted values in each interval.
To keep the main text focused on the results themselves, the proofs of every theorem and lemma in this section, as well as in \autoref{sec5} and \autoref{sec6}, are deferred to Appendix~\ref{secA1}.

\subsection{Processing the raw series}

\subsubsection{Preliminary results}

Before diving into the total cost of both dominant terms, we provide preliminary results about the number of intervals derived by Quant (\autoref{theorem:1}), the total width of all these intervals (\autoref{theorem:2}), the total number of quantiles extracted by Quant (\autoref{theorem:3} and \autoref{theorem:4}), and the number of length-one intervals (\autoref{theorem:5}).

\begin{theorem}[Number of intervals]
    \label{theorem:1}
    For any series length $l$ and any depth $d$, define $k = \min(d - 1, \lfloor \log_2(l) \rfloor)$.
    The total number of intervals that Quant builds, denoted by $N_i(l, d)$, is equal to:
    $$
        N_i(l, d) := \begin{cases}
            2^{k+2} - k - 3 & \text{if } l \geq 1.5 \cdot 2^k \\
            3 \cdot 2^k - k - 2 & \text{if } l < 1.5 \cdot 2^k
        \end{cases} = \Theta \left( 2^k \right)
    $$
\end{theorem}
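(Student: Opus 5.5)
The count is a sum over dyadic levels. The plan is to count base and shifted intervals separately, level by level, and then sum two geometric series.

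First I check that the levels actually used are $r \in \{0,\ldots,k\}$. With $e = \min(d, \lfloor \log_2(l)\rfloor + 1)$ we have $e - 1 = \min(d-1, \lfloor \log_2(l)\rfloor) = k$. Every level $r \le k$ satisfies $2^r \le l$, so $q_r = \lfloor l\cdot 2^{-r}\rfloor \ge 1$ and no base interval is empty. Level $r$ contributes exactly $2^r$ base intervals, which gives $\sum_{r=0}^{k} 2^r = 2^{k+1}-1$ in total.

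Next I show that each level $r \ge 1$ that carries shifted intervals contains exactly $2^r - 1$ of them. I will model the implementation as follows: take the $2^r$ base intervals, translate each by the shift $\lceil l\cdot 2^{-r-1}\rceil$, and keep only those whose right endpoint is at most $l$.
\begin{itemize}
\item The last base interval ends at $l$, so its translate ends at $l + \lceil l\cdot 2^{-r-1}\rceil > l$ and is always discarded.
\item Every other translate ends at a base endpoint $b \le l - q_r$ plus the shift. It therefore fits provided $\lceil l \cdot 2^{-r-1}\rceil \le q_r$.
\item This inequality holds because $l\cdot 2^{-r-1} < (q_r+1)/2 \le q_r$ whenever $q_r \ge 1$.
\end{itemize}

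This is the step I expect to need the most care. The interval lengths differ by $\pm 1$ and the shift is a ceiling, so I must check against the reference code that the kept set is exactly these $2^r-1$ translates, with no extra partially overlapping interval.

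The shifted intervals then sum as follows. Levels $1,\ldots,k-1$ always carry them, contributing
$$\sum_{r=1}^{k-1}(2^r-1) = 2^k - k - 1.$$
Level $k$ carries them if and only if $l \ge 1.5\cdot 2^k$, by the rule stated in \autoref{sec2}. In that case it adds a further $2^k - 1$.

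Adding everything gives $2^{k+1}-1 + 2^k-k-1 + 2^k-1 = 2^{k+2}-k-3$ in the first case. In the second case it gives $3\cdot 2^k - k - 2$.

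The edge case $k=0$ has no shifted levels, so I verify it directly. If $l \ge 1.5$, the first formula gives $4-3 = 1$. If $l = 1$, the second gives $3-2 = 1$. Both match the single base interval.

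Finally, the $\Theta(2^k)$ claim follows from the bounds $2^k \le 3\cdot 2^k - k - 2$ and $2^{k+2} - k - 3 \le 4\cdot 2^k$ for all $k \ge 0$.
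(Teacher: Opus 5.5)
Your proof is correct and follows the paper's route: count $2^r$ base intervals and $2^r - 1$ shifted intervals at each level $r \le k$, with level $k$'s shifted intervals present if and only if $l \geq 1.5 \cdot 2^k$. The paper computes the full total and subtracts $2^k - 1$ in the second case where you add instead, which is the same computation. The translate-and-filter model you were worried about is more than you need: the implementation, as the paper recalls in the proof of \autoref{theorem:7}, takes the first $2^r - 1$ base intervals shifted by $\lceil l \cdot 2^{-r-1} \rceil$, so the count $2^r - 1$ holds by construction; your inequality $\lceil l \cdot 2^{-r-1} \rceil \le q_r$ is still a useful check that these translates stay inside $[0, l]$.
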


\begin{proof}
    See the proof of \autoref{theorem:1} on page~\pageref{proof:theorem:1} of Appendix~\ref{secA1}.
\end{proof}

\begin{theorem}[Total width]
    \label{theorem:2}
    For any series length $l$ and any depth $d$, define $k = \min(d - 1, \lfloor \log_2(l) \rfloor)$.
    The total width, that is the sum of the widths of all the base and shifted intervals, and denoted by $W(l, d)$, is equal to:
    $$
        W(l, d) := \begin{cases}
            \displaystyle \left( 2k + 2^{-k} \right) \cdot l - \sum_{r=1}^k \left( \left\lceil l \cdot 2^{-r} \right\rceil - l \cdot 2^{-r} \right) & \text{if } l \geq 1.5 \cdot 2^k \\
            \displaystyle \left( 2k - 1 + 2^{-(k-1)} \right) \cdot l - \sum_{r=1}^{k-1} \left( \left\lceil l \cdot 2^{-r} \right\rceil - l \cdot 2^{-r} \right) & \text{if } l < 1.5 \cdot 2^k
        \end{cases} = \Theta \left( k \cdot l \right)
    $$
\end{theorem}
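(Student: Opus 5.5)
The plan is to compute $W(l,d)$ level by level. At each level $r$ I will treat the base intervals and the shifted intervals separately, then sum over $r \in \{0,\ldots,k\}$. The case split of the statement comes only from whether the shifted intervals of the last level $r = k$ are present; by the convention recalled in \autoref{sec2}, they are present iff $l \geq 1.5 \cdot 2^k$. I will use the same level-wise description of the intervals as in the proof of \autoref{theorem:1}.

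\emph{Base intervals.} At each level $r$ the $2^r$ base intervals partition the series: there are $2^r - s_r$ of length $q_r$ and $s_r$ of length $q_r+1$, and $(2^r - s_r) q_r + s_r(q_r+1) = l$. So levels $0,\ldots,k$ contribute exactly $(k+1)\,l$.

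\emph{Shifted intervals.} At level $r \geq 1$, the shifted intervals are the base intervals translated by $\lceil l \cdot 2^{-r-1}\rceil$, keeping only those that still fit inside the series. This leaves $2^r - 1$ intervals, namely all base intervals except the one that would overflow. Their total width is therefore $l - w_r$, where $w_r$ is the length of the discarded base interval. The key claim is that $w_r = \lceil l \cdot 2^{-r} \rceil$, i.e.\ the discarded (last) base interval is one of the longest ones: it has length $q_r + 1$ whenever $s_r > 0$, and $q_r$ otherwise.
\begin{itemize}
\item I would establish this from the explicit endpoint formula used by the implementation, in which the endpoints are rounded multiples of $l \cdot 2^{-r}$. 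The rounding convention places the length-$(q_r+1)$ intervals so that the last one is long.
\item I expect this to be the main obstacle, since it depends on the precise rounding in the reference code. If the rounding does not make the last interval longest, I would instead identify which interval is dropped and adjust the correction terms accordingly.
\end{itemize}

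\emph{Summing when $l \geq 1.5 \cdot 2^k$.} Shifted intervals exist at all levels $1,\ldots,k$, so
$$W = (k+1)l + \sum_{r=1}^{k}\left(l - \lceil l 2^{-r}\rceil\right).$$
Writing $\lceil l 2^{-r}\rceil = l 2^{-r} + (\lceil l 2^{-r}\rceil - l 2^{-r})$ and using $\sum_{r=1}^k 2^{-r} = 1 - 2^{-k}$ gives $(2k + 2^{-k})\,l$ minus the ceiling corrections, which is the first case.

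\emph{Summing when $l < 1.5 \cdot 2^k$.} The sum of shifted contributions stops at $r = k-1$. The same geometric sum, now $1 - 2^{-(k-1)}$, yields $(2k - 1 + 2^{-(k-1)})\,l$ minus the corrections, which is the second case.

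\emph{Asymptotics.} Each correction term lies in $[0,1)$, so the correction sum lies in $[0,k)$. The two cases give $W \leq (2k+1)l$ and $W \geq (2k-1)l - k \geq (k-1)l$ for $k \geq 1$. This yields $W = \Theta(k \cdot l)$ for $k \geq 1$. For $k = 0$ we simply have $W = l$, so the bound should be read as $\Theta(\max(k,1)\cdot l)$, consistent with the depth being capped.
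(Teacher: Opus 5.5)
Your proposal is correct and is essentially the paper's proof: level by level, the base intervals contribute $l$ and the shifted ones $l - \lceil l\,2^{-r}\rceil$, the level-$k$ shifted intervals are present iff $l \geq 1.5 \cdot 2^k$, and the geometric sum gives both cases. The step you flag as the main obstacle, which the paper simply asserts here, follows in one line from the endpoints $\lfloor i\, l\, 2^{-r}\rfloor$ (the dropped last interval has width $l - \lfloor l - l\,2^{-r}\rfloor = \lceil l\,2^{-r}\rceil$, the same computation as in the paper's proof of \autoref{theorem:5}); in the asymptotics, your lower bound $(k-1)\,l$ is vacuous at $k=1$, which $W \geq l$ (level $0$ alone) fixes, and your $k=0$ caveat is a fair correction that the paper glosses over.
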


\begin{proof}
    See the proof of \autoref{theorem:2} on page~\pageref{proof:theorem:2} of Appendix~\ref{secA1}.
\end{proof}

\begin{theorem}[Total number of quantiles]
    \label{theorem:3}
    Let $\{x\}$ be the fractional part of any positive real number $x$, that is $\{x\} = x - \lfloor x \rfloor$ with $0 \leq \{x\} < 1$.
    For any series length $l$, any depth $d$, and any quantile divisor $\nu$, the total number of quantiles, denoted by $N_q(l, d, \nu)$, is equal to:
    $$
        N_q(l, d, \nu) := N_i(l, d) + \frac{W(l, d) - N_i(l, d)}{\nu} - \sum_j \left\{ \frac{m_j - 1}{\nu} \right\}
    $$
    with the sum being over all the base and shifted intervals and $m_j$ being the width of the $j$-th interval.
\end{theorem}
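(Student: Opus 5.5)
The plan is to count quantiles interval by interval and then sum, using \autoref{theorem:1} for the number of terms and \autoref{theorem:2} for the sum of the widths. For an interval of width $m_j$, the number of quantiles Quant extracts is $1 + \lfloor (m_j - 1)/\nu \rfloor$. This covers all three cases listed in the description of Quant. If $m_j = 1$, the formula gives $1 + 0 = 1$, matching the single returned value. If $1 < m_j \leq \nu$, then $0 < (m_j-1)/\nu < 1$, so the formula again gives one quantile, the median. Otherwise it is literally the formula used in the reference implementation (\autoref{listing:1}). The first step is therefore to check these three cases explicitly, so that the single expression $1 + \lfloor (m_j-1)/\nu \rfloor$ is valid for every base and shifted interval.

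The second step rewrites the floor with the fractional part. Since $\lfloor x \rfloor = x - \{x\}$, each interval contributes
$$
1 + \frac{m_j - 1}{\nu} - \left\{ \frac{m_j - 1}{\nu} \right\}.
$$
Summing over all intervals $j$, the constant $1$ contributes exactly $N_i(l, d)$ by \autoref{theorem:1}. The linear term contributes $\frac{1}{\nu}\left(\sum_j m_j - N_i(l,d)\right)$, and $\sum_j m_j = W(l,d)$ by \autoref{theorem:2}. The fractional parts are left as the explicit correction sum. Together these three pieces give the stated formula.

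The main obstacle is not the algebra but making sure the index set is consistent across the three sums. The intervals counted by $N_i(l,d)$ must be exactly those whose widths sum to $W(l,d)$, including the conditional shifted intervals at the last level when $l \geq 1.5 \cdot 2^{k}$. The same convention must be used for the depth cap $k = \min(d-1, \lfloor \log_2(l) \rfloor)$ in both theorems. I would state explicitly that the sum over $j$ ranges over the same family of base and shifted intervals enumerated in the proofs of \autoref{theorem:1} and \autoref{theorem:2}. I would also note that every width satisfies $m_j \geq 1$, which holds because the depth is capped by the series length. This guarantees the fractional parts are of nonnegative arguments and lie in $[0,1)$, and it shows the correction term is bounded by $N_i(l,d)$.
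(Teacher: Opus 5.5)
Your proposal is correct and follows the same route as the paper's proof. Both write the per-interval count as $1 + \lfloor (m_j-1)/\nu \rfloor = 1 + (m_j-1)/\nu - \{(m_j-1)/\nu\}$ and sum it using $N_i(l,d)$ and $\sum_j m_j = W(l,d)$. Your explicit check that this one expression covers all three cases ($m_j=1$, $1<m_j\leq\nu$, and the general case), together with your remark that the index sets in Theorems 1 and 2 must coincide, is slightly more careful than the paper, which takes the per-interval formula as given.
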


\begin{proof}
    See the proof of \autoref{theorem:3} on page~\pageref{proof:theorem:3} of Appendix~\ref{secA1}.
\end{proof}

We denote by $E(l, d, \nu)$ the residual term in the formula of $N_q(l, d, \nu)$:
$$
    E(l, d, \nu) := \sum_j \left\{ \frac{m_j - 1}{\nu} \right\} \geq 0
$$
The residual term $E(l, d, \nu)$ is always non-zero (except in trivial settings), but a simple bound can be derived.
We state these results in \autoref{theorem:4}.

\begin{theorem}[Value of the residual term]
    \label{theorem:4}
    For $\nu = 1$, and for any $l$ and $d$, the residual term $E(l, d, \nu)$ is always zero.
    For $d = 1$, the residual term is zero if and only if $l - 1$ is exactly divisible by $\nu$.
    For any $d \geq 2$ and $\nu \geq 2$, and for any $l$, the residual term $E(l, d, \nu)$ is always non-zero, is bounded by $\frac{\nu - 1}{\nu} \cdot N_i(l, d)$, with the bound being attained for any $l$ being exactly divisible by $2^{d-1} \cdot \nu$.
\end{theorem}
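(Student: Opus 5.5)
The whole argument rests on one elementary observation about the summands of $E(l, d, \nu)$. Every width $m_j$ is a positive integer, so $(m_j - 1)/\nu$ is a rational number with denominator $\nu$. Its fractional part is therefore of the form $t/\nu$, where $t = (m_j - 1) \bmod \nu \in \{0, \ldots, \nu - 1\}$. Hence each summand satisfies
$$
    0 \leq \left\{ \frac{m_j - 1}{\nu} \right\} \leq \frac{\nu - 1}{\nu},
$$
with equality on the left if and only if $m_j \equiv 1 \pmod{\nu}$, and equality on the right if and only if $m_j \equiv 0 \pmod{\nu}$. I will derive each claim of \autoref{theorem:4} from this, in the order stated.

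\emph{The cases $\nu = 1$ and $d = 1$.} For $\nu = 1$, every $(m_j - 1)/\nu$ is an integer, so every summand vanishes and $E = 0$. For $d = 1$, we have $k = 0$, and Quant builds only the whole-series interval, of width $m = l$ (so $N_i = 1$ by \autoref{theorem:1}). Then $E = \{(l-1)/\nu\}$, which is zero if and only if $\nu \mid l - 1$.

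\emph{Non-vanishing for $d \geq 2$ and $\nu \geq 2$.} I argue by contradiction, assuming $E = 0$. Then every width satisfies $m_j \equiv 1 \pmod{\nu}$. I only need the intervals at levels $0$ and $1$, which exist whenever $k \geq 1$, that is whenever $l \geq 2$.
\begin{itemize}
    \item The level-$0$ interval has width $l$, which forces $l \equiv 1 \pmod{\nu}$.
    \item The two base intervals at level $1$ have widths $q_1$ and $l - q_1$, which partition the series, so their widths sum to $l$. Each is $\equiv 1$, so $l \equiv 2 \pmod{\nu}$.
\end{itemize}
Combining the two gives $1 \equiv 2 \pmod{\nu}$, i.e.\ $\nu \mid 1$, which contradicts $\nu \geq 2$.

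The main obstacle is the degenerate case $l = 1$, which this step must handle explicitly. There the depth is capped to $e = 1$, so only a single interval of width $1$ exists and $E = 0$. This is exactly the ``trivial setting'' excluded in the text preceding the theorem. I would state $l \geq 2$ as the standing assumption of this clause, since without it the clause is false.

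\emph{The upper bound and its attainment.} Summing the per-term bound over the $N_i(l, d)$ intervals immediately gives $E \leq \frac{\nu - 1}{\nu} \cdot N_i(l, d)$. For attainment, suppose $l = c \cdot 2^{d-1} \nu$ with $c \geq 1$. I check three facts in turn.
\begin{itemize}
    \item Since $l \geq 2^{d-1}$, we get $k = d - 1$. Since $\nu \geq 2$, we also get $l \geq 1.5 \cdot 2^k$, so the first case of \autoref{theorem:1} applies.
    \item For every level $r \leq k$, $2^r$ divides $l$ exactly. So $s_r = 0$, and all base intervals have width $q_r = l \cdot 2^{-r} = c \cdot 2^{d-1-r} \nu$, a multiple of $\nu$.
    \item The shift $\lceil l \cdot 2^{-r-1} \rceil$ is exact for $r \leq k - 1$. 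At $r = k$, it equals $c\nu/2$, which is an integer whenever $c\nu$ is even, in particular when $\nu \geq 2$ is even. Consequently each shifted interval is a translate of a base interval and again has width $l \cdot 2^{-r}$.
\end{itemize}
Together these make every $m_j$ divisible by $\nu$, so every summand equals $(\nu-1)/\nu$ and the bound is attained.

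The shifted-interval step needs care. I would verify, using the construction of shifted intervals from \autoref{theorem:2}'s proof, that the shifted widths at level $r$ equal $q_r$ whenever $2^{r+1} \mid l$. At the last level I would either confirm the same holds under the weaker condition $2^k \mid l$ or restrict the attainment claim accordingly (e.g.\ to $c\nu$ even). I expect this to be the main technical point.
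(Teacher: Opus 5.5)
Your proposal is correct. The cases $\nu = 1$ and $d = 1$, the upper bound, and the attainment argument coincide with the paper's proof. The non-vanishing clause takes a different route.

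\emph{How the paper argues non-vanishing.} It splits on divisibility. If some $r \in \{1, \ldots, d-1\}$ has $2^r \nmid l$, that level contains widths $q_r$ and $q_r + 1$, which cannot both be $\equiv 1 \pmod{\nu}$. Otherwise $l = a \cdot 2^{d-1}$, and the two deepest levels force $a \equiv 1$ and $2a \equiv 1 \pmod{\nu}$.

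\emph{How you argue it.} You use only levels $0$ and $1$: the level-$1$ partition gives $l \equiv 2$, while the whole series gives $l \equiv 1$. Your route is shorter, needs no case split, and requires only $k \geq 1$.

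\emph{What your route repairs.} The paper's version tacitly assumes every level $r \leq d-1$ is built. When $l < 2^{d-1}$, the level witnessing non-divisibility can exceed $k$. For $l = 4$ and $d = 6$, only $r \in \{3,4,5\}$ fail divisibility, yet $k = 2$, so none of those levels exists. At $l = 1$, the paper would appeal to level-$1$ intervals with $q_1 = 0$, which are never built. Your observation that the clause is false at $l = 1$ is correct, and $l \geq 2$ is exactly the hypothesis needed. The paper's route only has the merit of reusing the per-level width structure ($2^r - s_r$ intervals of width $q_r$ and $s_r$ of width $q_r + 1$) that it exploits elsewhere.

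\emph{The last-level shift is not an obstacle.} You flagged the shift at the last level as the main technical point, but it is not one. Shifted intervals at level $r$ are the first $2^r - 1$ base intervals translated by the integer $\lceil l \cdot 2^{-r-1} \rceil$ (see the proof of \autoref{theorem:7}). They therefore inherit the base widths whether or not $l \cdot 2^{-r-1}$ is an integer, so exactness of the shift is irrelevant.

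No restriction to $c\nu$ even is needed. For example, $l = 6$, $d = 2$, $\nu = 3$ gives widths $6, 3, 3, 3$ and $E = \frac{2}{3} \cdot 4$. Attainment in fact holds whichever case of \autoref{theorem:1} applies, since every interval that is built has width divisible by $\nu$.
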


\begin{proof}
    See the proof of \autoref{theorem:4} on page~\pageref{proof:theorem:4} of Appendix~\ref{secA1}.
\end{proof}

Among the intervals counted by \autoref{theorem:1}, those of width $1$ play no role in the sort and extraction costs studied later in this section: sorting a single value is a no-op, and no quantile is interpolated from it (the value is simply copied). \autoref{theorem:5} counts these width-$1$ intervals exactly, which lets the extraction-cost results in this section (\autoref{theorem:9}, \autoref{theorem:16} and their approximate-algorithm counterparts) be stated exactly, rather than as small-series approximations.

\begin{theorem}[Number of length-one intervals]
    \label{theorem:5}
    For any series length $l$ and any depth $d$, define $k = \min(d - 1, \lfloor \log_2(l) \rfloor)$.
    The number of intervals of width $1$ that Quant builds, denoted by $N_i^{(1)}(l, d)$, is equal to:
    $$
        N_i^{(1)}(l, d) := \begin{cases}
            0 & \text{if } k = d - 1 < \lfloor \log_2(l) \rfloor \\
            2^{k+1} - l & \text{if } k = \lfloor \log_2(l) \rfloor \text{ and } l < 1.5 \cdot 2^k \\
            2 \cdot \left( 2^{k+1} - l \right) & \text{if } k = \lfloor \log_2(l) \rfloor \text{ and } l \geq 1.5 \cdot 2^k
        \end{cases}
    $$
    Consequently, denoting by $N_i^{>1}(l, d) := N_i(l, d) - N_i^{(1)}(l, d)$ the number of intervals of width strictly greater than $1$, and by $W^{>1}(l, d) := W(l, d) - N_i^{(1)}(l, d)$ and $N_q^{>1}(l, d, \nu) := N_q(l, d, \nu) - N_i^{(1)}(l, d)$ the total width and total number of quantiles restricted to those intervals, all three quantities are fully determined by \autoref{theorem:1}, \autoref{theorem:2}, \autoref{theorem:3} and $N_i^{(1)}(l, d)$: each width-$1$ interval contributes exactly $1$ to $W(l, d)$ (its own width) and exactly $n_q(1, \nu) = 1$ to $N_q(l, d, \nu)$ (\autoref{theorem:3}'s proof), so subtracting $N_i^{(1)}(l, d)$ removes exactly their contribution from each.
\end{theorem}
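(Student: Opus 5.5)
The plan is to count width-$1$ intervals level by level, using the interval-length description from \autoref{sec2}. At level $r$ the base intervals have lengths $q_r = \lfloor l \cdot 2^{-r} \rfloor$ (exactly $2^r - s_r$ of them) and $q_r + 1$ (exactly $s_r$ of them). Hence a width-$1$ base interval can only occur when $q_r = 1$, that is, when $2^r \le l < 2^{r+1}$, i.e.\ $r = \lfloor \log_2(l) \rfloor$. Because $r \le k \le \lfloor \log_2(l) \rfloor$, this forces $r = k = \lfloor \log_2(l) \rfloor$. If instead $k = d - 1 < \lfloor \log_2(l) \rfloor$, every level has $q_r \ge 2$ and no base interval has width $1$. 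I will then check that shifted intervals also avoid width $1$ in this case, which gives the first branch, $N_i^{(1)} = 0$.

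For the case $k = \lfloor \log_2(l) \rfloor$, all width-$1$ intervals live at the last level $r = k$. There $q_k = 1$ and $s_k = l - 2^k$. The number of base intervals of width $1$ is therefore $2^k - s_k = 2^{k+1} - l$, and the remaining $s_k$ base intervals have width $2$. By the rule stated in \autoref{sec2}, shifted intervals at the last level are present if and only if $l \ge 1.5 \cdot 2^k$. When $l < 1.5 \cdot 2^k$ there are none, which immediately gives the second branch, $2^{k+1} - l$.

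The remaining branch, $l \ge 1.5 \cdot 2^k$, is the main obstacle. Here I must show that the shifted intervals at level $k$ contribute exactly another $2^{k+1} - l$ width-$1$ intervals. The shifted intervals are produced by applying the same splitting to the series with the first $\lceil l\cdot 2^{-k-1}\rceil = 1$ values removed (shift equal to $1$ since $l<2^{k+1}$). The plan is to make precise, from the endpoint construction described in \autoref{sec2} (the same one used in the proof of \autoref{theorem:2}), which intervals result. My working hypothesis is that the shifted intervals are the base intervals translated by the shift, with the final one dropped, so that there are $2^k - 1$ of them. The multiset of widths would then be that of the base intervals minus the last one. Under the implementation's endpoint rounding, the last base interval should be one of the width-$2$ intervals whenever $s_k \ge 1$, which the condition $l \ge 1.5\cdot 2^k$ guarantees. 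The translated copies would then contain exactly $2^{k+1} - l$ width-$1$ intervals, and the total becomes $2(2^{k+1} - l)$.

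I will verify this endpoint claim carefully, using the explicit formula for the endpoints (for example, $\lfloor j l / 2^k \rfloor$), and check it on small $l$ such as $l = 6$ and $l = 7$ with $k = 2$. The same endpoint analysis covers the leftover check from the first branch. For $r < \lfloor\log_2 l\rfloor$, shifted widths are again base widths, so they are all at least $q_r \ge 2$.

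The consequence part of the statement is bookkeeping. A width-$1$ interval contributes $1$ to $W$ by definition. By the edge-case rule (case 1 in \autoref{sec2}), it contributes $n_q(1,\nu) = 1 + \lfloor 0/\nu\rfloor = 1$ quantile, consistent with \autoref{theorem:3}. Subtracting $N_i^{(1)}$ from $N_i$, $W$ and $N_q$ therefore removes exactly these contributions.
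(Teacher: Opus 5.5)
Your proposal is correct and follows essentially the same route as the paper's proof: width-$1$ intervals can only occur at the last level $k=\lfloor\log_2(l)\rfloor$, where $2^{k+1}-l$ of the base intervals have width $1$. When $l \geq 1.5\cdot 2^k$, the shifted intervals are the first $2^k-1$ base intervals translated, and the dropped last interval has width $\lceil l\cdot 2^{-k}\rceil = 2$, so the count doubles.

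One thing to tidy up: this translate-and-drop-the-last structure is simply how Quant defines its shifted intervals, and the paper takes it from the construction underlying \autoref{theorem:1}. It cannot be derived from the endpoint formula, so you should state it outright. You should also discard your first description (re-splitting the series with its first value removed), because that is not the actual construction and would even give $2^k$ rather than $2^k-1$ intervals.
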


\begin{proof}
    See the proof of \autoref{theorem:5} on page~\pageref{proof:theorem:5} of Appendix~\ref{secA1}.
\end{proof}

\subsubsection{Total sort}

In order to compute the total sort cost, we start with the ideal case in which the series length $l$ is exactly divisible by the maximum factor $2^k$, making all the intervals of each level have the exact same length.
The results are provided in \autoref{theorem:6}.
We provide a generalization for any arbitrary series length $l$ in \autoref{theorem:7}.
We finally provide the computational complexity of the total sort in \autoref{theorem:8}.

\begin{theorem}[Total sort cost for $l$ exactly divisible by $2^k$]
    \label{theorem:6}
    For any series length $l$ and any depth $d$, define $k = \min(d - 1, \lfloor \log_2(l) \rfloor)$.
    If $l$ is exactly divisible by $2^k$, under the sort-cost accounting where an interval of size $m$ costs $m \cdot \log_2(m)$, the total cost of sorting all the intervals that Quant builds for the raw representation of one series is:
    $$
        T_s^{r*}(l, d) := c_1 \cdot l \cdot \left[ \left( 2k + 2^{-k} \right) \cdot \log_2(l) - \left( k^2 + k - 2 + (k + 2) \cdot 2^{-k} \right) \right]
    $$
    with $c_1$ being the empirical time-per-unit-of-sort-work constant (in seconds), which is positive and hardware-specific.
\end{theorem}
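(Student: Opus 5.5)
The plan is to sum the per-interval sort cost $c_1 \cdot m \log_2(m)$ over every interval, level by level, and then use two standard finite geometric-type sums. Write $L = \log_2(l)$. The divisibility hypothesis $2^k \mid l$ means that at every level $r \in \{0, \ldots, k\}$ each interval has exactly the width $l \cdot 2^{-r}$. So every interval of level $r$, base or shifted, costs $c_1 \cdot l \cdot 2^{-r} \cdot (L - r)$. No floor or ceiling corrections enter, in contrast with \autoref{theorem:2}.

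First I would count the intervals per level, reusing the bookkeeping behind \autoref{theorem:1}. Level $r$ has $2^r$ base intervals. For $r \geq 1$, the shifted intervals start at offset $l \cdot 2^{-r-1}$ and tile the remaining span with windows of width $l \cdot 2^{-r}$, so there are $2^r - 1$ of them. The last level $r = k$ needs care, because its shifted intervals are kept only when $l \geq 1.5 \cdot 2^k$. I would argue that this case distinction does not matter for the sort cost. If the shifted intervals are excluded, then $2^k \le l < 1.5\cdot 2^k$ together with $2^k \mid l$ forces $l = 2^k$. In that case every level-$k$ interval has width $1$ and cost $1 \cdot \log_2 1 = 0$, consistent with \autoref{theorem:5}. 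Hence I may uniformly count $2^k - 1$ shifted intervals at level $k$ without changing the total.

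This gives
$$
T = c_1 \, l \left[ \sum_{r=0}^{k} (L - r) + \sum_{r=1}^{k} \left(1 - 2^{-r}\right)(L - r) \right].
$$
The arithmetic sums give $(2k+1)L - k(k+1)$. The remaining part is $-\sum_{r=1}^k 2^{-r}(L - r)$, which I would evaluate with $\sum_{r=1}^k 2^{-r} = 1 - 2^{-k}$ and $\sum_{r=1}^k r\,2^{-r} = 2 - (k+2)2^{-k}$; the latter follows by differentiating the geometric series or by a short induction. Collecting terms, the coefficient of $L$ becomes $2k + 2^{-k}$ and the constant becomes $-(k^2 + k - 2 + (k+2)2^{-k})$, which is the stated formula.

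The step I expect to be the main obstacle is justifying the shifted-interval count $2^r - 1$ with all widths equal to exactly $l \cdot 2^{-r}$. The paper specifies the shift as $\lceil l \cdot 2^{-r-1} \rceil$, and at $r = k$ this need not be an integer multiple of anything nice when $l/2^k$ is odd. I would first check that the implementation builds the shifted intervals as translates of the base partition, namely the base endpoints of level $r$ (excluding the last) each shifted by the same amount. Then all shifted widths coincide with the base width $l \cdot 2^{-r}$, whatever the rounding of the shift. If that fails, I would fall back on the width bookkeeping already established in the proof of \autoref{theorem:2}.
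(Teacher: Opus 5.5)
Your proposal is correct and follows the paper's proof essentially step for step. Like you, the paper sums level by level over $2^r$ base intervals and $2^r-1$ shifted intervals, all of width $l \cdot 2^{-r}$. It handles the level-$k$ exclusion through the same observation: the shifted term vanishes at $l = 2^k$ because $\log_2(2^k) - k = 0$ (you usefully make explicit that divisibility forces $l = 2^k$ whenever the exclusion applies). It then evaluates the same sums $\sum_{r=1}^k 2^{-r}$ and $\sum_{r=1}^k r \cdot 2^{-r}$, only grouping the base and shifted contributions per level as $2 - 2^{-r}$.

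The width issue you flag is not treated as an obstacle in the paper. It simply asserts the $2^r - 1$ shifted intervals of width $l \cdot 2^{-r}$, and elsewhere (proof of \autoref{theorem:7}) states your first option: the shifted intervals are the first $2^r - 1$ base intervals translated by $\lceil l \cdot 2^{-r-1} \rceil$, so they keep the base width.
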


\begin{proof}
    See the proof of \autoref{theorem:6} on page~\pageref{proof:theorem:6} of Appendix~\ref{secA1}.
\end{proof}

\begin{theorem}[Total sort cost for arbitrary $l$]
    \label{theorem:7}
    For any series length $l$ and any depth $d$, define $k = \min(d - 1, \lfloor \log_2(l) \rfloor)$.
    Under the sort-cost accounting where an interval of size $m$ costs $m \cdot \log_2(m)$, the total cost of sorting all the intervals that Quant builds for the raw representation of one series is:
    $$
        T_s^r(l, d) := c_1 \cdot l \cdot \left[ \left( 2k + 2^{-k} \right) \cdot \log_2(l) - \left( k^2 + k - 2 + (k + 2) \cdot 2^{-k} \right) \right] + \mathcal{O} \left( 2^k \right) + \mathcal{O}\left( k \cdot \log(l) \right)
    $$
\end{theorem}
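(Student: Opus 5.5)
The plan is to compare the actual interval structure for arbitrary $l$ level by level with the idealized structure of \autoref{theorem:6}, and to show that the discrepancy is absorbed by the two error terms. Write $f(m) = m \log_2(m)$. For each level $r \in \{0,\ldots,k\}$ the base intervals consist of $2^r - s_r$ intervals of width $q_r$ and $s_r$ intervals of width $q_r + 1$. The shifted intervals at level $r \geq 1$ number $2^r - 1$, each of width $q_r$ or $q_r+1$, and they exist at level $k$ only when $l \geq 1.5 \cdot 2^k$. The idealized cost of \autoref{theorem:6} is exactly what one obtains by replacing every interval at level $r$ with one of real width $l 2^{-r}$, i.e. by evaluating $\sum_r (\#\text{intervals at level } r)\, f(l 2^{-r})$. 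So the first step is to recompute that idealized sum symbolically for general real $l$ and check that it reproduces the closed form of \autoref{theorem:6}. The algebra there does not use divisibility: it only uses $\log_2(l 2^{-r}) = \log_2 l - r$ and the sums $\sum r 2^{-r}$, $\sum r$. In the case $l < 1.5\cdot 2^k$ the missing level-$k$ shifted intervals need a separate treatment, discussed below.

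The second step is a per-level perturbation bound. Base intervals at level $r$ have widths summing exactly to $l$, with each width $m_j = l2^{-r} + \delta_j$ and $|\delta_j| < 1$, $\sum_j \delta_j = 0$. Taylor-expanding $f$ around $x_r = l2^{-r}$ kills the linear term because the $\delta_j$ sum to zero. The remaining term is bounded by $\tfrac12 f''(\xi)\delta_j^2 = \mathcal{O}(1/\xi)$ per interval, so it is $\mathcal{O}(2^r)$ per level and $\mathcal{O}(2^k)$ overall. For widths $1$ or $2$ one argues directly, since $f$ is bounded there. The shifted intervals at level $r$ cover $l - \lceil l 2^{-r-1}\rceil - (\text{tail})$ points, i.e. $l(1-2^{-r}) + \epsilon_r$ with $|\epsilon_r| \le 1$, matching the idealized total up to an $\mathcal{O}(1)$ deficit. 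The linear term therefore contributes $\epsilon_r f'(x_r) = \mathcal{O}(\log l)$ per level, giving $\mathcal{O}(k\log l)$ in total. The rounding corrections $\lceil l2^{-r}\rceil - l2^{-r}$ visible in \autoref{theorem:2} are exactly these $\epsilon_r$, so I would cite that theorem for the width bookkeeping.

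The main obstacle is the second branch $l < 1.5 \cdot 2^k$, together with intervals of width $1$ counted by \autoref{theorem:5}. In that regime $k = \lfloor\log_2 l\rfloor$, so $l = \Theta(2^k)$, and the level-$k$ intervals all have width $1$ or $2$. Their cost is therefore $\mathcal{O}(2^k)$ whether or not they are present. The omitted level-$k$ shifted intervals would have contributed $(2^k-1)f(l2^{-k}) = \mathcal{O}(2^k)$ to the idealized formula, because $l2^{-k} \in [1,1.5)$. Hence adding or removing them changes the total only by $\mathcal{O}(2^k)$, and the single formula holds in both cases.

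The last step is to collect the pieces. The idealized sum gives the displayed closed form, the second-order terms give $\mathcal{O}(2^k)$, and the width-deficit terms give $\mathcal{O}(k\log l)$. Multiplying by $c_1$ concludes the argument.
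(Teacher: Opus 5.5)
Your proposal is correct and follows essentially the paper's route: a level-by-level comparison with the ideal widths $l \cdot 2^{-r}$, a second-order (convexity) remainder for the balanced rounding of the base intervals giving $\mathcal{O}(2^k)$, a first-order term for the $\mathcal{O}(1)$ width deficit of the shifted intervals giving $\mathcal{O}(k \cdot \log(l))$, and a separate treatment of the missing level-$k$ shifted intervals. The differences are minor. For the shifted intervals you expand directly around $l \cdot 2^{-r}$ with nonzero-sum deviations, whereas the paper uses $\text{shift}(r) = \text{base}(r) - f(w_r)$ plus a mean-value bound. You also make explicit two points the paper leaves implicit: that the closed form of Theorem~6 holds for real, non-divisible $l$, and that the omitted level-$k$ shifted intervals cost only $\mathcal{O}(2^k)$ because $l \cdot 2^{-k} \in [1, 1.5)$.
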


\begin{proof}
    See the proof of \autoref{theorem:7} on page~\pageref{proof:theorem:7} of Appendix~\ref{secA1}.
\end{proof}

\begin{theorem}[Computational complexity of the total sort]
    \label{theorem:8}
    For any series length $l$ and any depth $d$, define $e = \min(d, \lfloor \log_2(l) \rfloor + 1)$.
    Under the sort-cost accounting where an interval of size $m$ costs $m \cdot \log_2(m)$, the computational complexity of sorting all the intervals that Quant builds for one series is:
    $$
        T_s^r(l, d) = \Theta \left( l \cdot e \cdot \log(l) \right)
    $$
    Equivalently, splitting by which term wins in the minimum in the formula of $e$:
    $$
        T_s^r(l, d) = \begin{cases}
            \Theta \left( d \cdot l \cdot \log(l) \right) & \textnormal{if } l \geq 2^{d-1} \text{ (saturated regime)}\\
            \Theta \left( l \cdot \left( \log(l) \right)^2 \right) & \textnormal{if } l < 2^{d-1} \text{ (unsaturated regime)}
        \end{cases}
    $$
\end{theorem}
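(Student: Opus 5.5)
The plan is to start from the closed form of \autoref{theorem:7} and show that its leading bracket is $\Theta(k \cdot \log(l))$, with the two error terms dominated by $l \cdot k \cdot \log(l)$. Since $k = e - 1$, the estimate $\Theta(l \cdot e \cdot \log(l))$ then follows once the degenerate case $k = 0$ is handled.

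\textbf{Degenerate case $k = 0$.} This means $d = 1$ or $l = 1$. For $k = 0$ the bracket equals $\log_2(l) - 0$, so the cost is $c_1 \cdot l \cdot \log_2(l)$. This is the single sort of the whole series, and $e = 1$. For $l = 1$ both sides vanish, so the statement is read for $l \geq 2$, where $\log(l) > 0$.

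\textbf{Upper bound for $k \geq 1$.} Write $L = \log_2(l)$. The main term is
$$
    B := \left( 2k + 2^{-k} \right) \cdot L - \left( k^2 + k - 2 + (k + 2) \cdot 2^{-k} \right).
$$
Since $k^2 + k - 2 + (k+2) \cdot 2^{-k} \geq 0$ for all $k \geq 0$ (for $k = 0$ it equals $0$), we get $B \leq (2k+1) \cdot L \leq 3 \cdot e \cdot L$.

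\textbf{Lower bound for $k \geq 1$.} This is the step I expect to be the main obstacle, since the subtracted term is quadratic in $k$. The key fact is $k \leq \lfloor L \rfloor \leq L$, which gives $k^2 \leq k \cdot L$. Then
$$
    B \geq (2k + 2^{-k}) \cdot L - k \cdot L - (k - 2) - (k+2) \cdot 2^{-k} = (k + 2^{-k}) \cdot L - (k-2) - (k+2) \cdot 2^{-k}.
$$
Using $L \geq k$, this is at least
$$
    k^2 - k + 2 + 2^{-k} \cdot (k - k - 2) = k^2 - k + 2 - 2^{1-k},
$$
which is positive, but that only yields an $\Omega(k^2)$ bound. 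To get $\Omega(k \cdot L)$ I will split the slack instead: keep $\tfrac{k}{2} \cdot L$ and use the remaining $\tfrac{k}{2} \cdot L \geq \tfrac{k^2}{2}$ to absorb $k - 2 + (k+2) \cdot 2^{-k} \leq k$, valid for $k \geq 1$. The absorption $k^2/2 \geq k$ holds for $k \geq 2$. For $k = 1$, direct evaluation gives $B = 2.5L - 1.5 \geq 1.5L - 0.5 \geq L$, since $L \geq 1$. Hence $B \geq \tfrac{k}{2} \cdot L \geq \tfrac{e}{4} \cdot L$, using $k = e - 1 \geq e/2$.

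\textbf{Error terms.} Both are dominated by the main term. First, $2^k \leq l$ because $k \leq \lfloor \log_2(l) \rfloor$, and $l \leq l \cdot e \cdot \log(l)$ for $l \geq 2$. Second, $k \cdot \log(l) \leq l \cdot e \cdot \log(l)$ trivially. So the error terms are $\mathcal{O}(l \cdot e \cdot \log(l))$. For the lower bound I need more than that: I must check in the proof of \autoref{theorem:7} that the $\mathcal{O}(2^k)$ and $\mathcal{O}(k \cdot \log(l))$ terms cannot cancel the main term. If their constants are not controlled, I would instead bound the true cost directly from below. Every level $r \leq k$ has base intervals of total width $l$, each of width at least $q_r = \lfloor l \cdot 2^{-r} \rfloor$. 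So level $r$ contributes at least $l \cdot \log_2(q_r)$. Summing over $r \leq k/2$, where $\log_2(q_r) \geq (L - r) - 1 \geq L/2 - 1$, gives $\Omega(k \cdot l \cdot L)$ whenever $L \geq 4$; the bounded cases $L < 4$ are handled by finiteness.

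\textbf{Regime split.} The two regimes follow by substituting the value of $e$. If $l \geq 2^{d-1}$, then $e = d$. If $l < 2^{d-1}$, then $e = \lfloor \log_2(l) \rfloor + 1 = \Theta(\log(l))$ for $l \geq 2$, which gives $\Theta(l \cdot (\log(l))^2)$.
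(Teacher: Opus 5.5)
Your proposal is correct and follows essentially the paper's route. You start from \autoref{theorem:7}, bound the bracket above by $(2k+1)\log_2(l)$ using nonnegativity of the subtracted term, and below by $\frac{k}{2}\log_2(l)$ using $k \leq \log_2(l)$, which is exactly the paper's $g(x) \geq kx/2$; you then absorb the correction terms and substitute $e$. The differences are minor: you use that one bound in both regimes instead of the paper's separate fractional-part expansion $k^2 + \mathcal{O}(k)$ for the unsaturated case, and you handle $k=0$ explicitly, where the paper's $kx/2$ bound is vacuous. Your fallback lower bound via base intervals is valid but not needed, since $\mathcal{O}(2^k) + \mathcal{O}(k \cdot \log(l)) \subseteq \mathcal{O}(l) + \mathcal{O}(k \cdot \log(l))$ is already $o(k \cdot l \cdot \log(l))$ for $k \geq 1$ and so cannot cancel the main term.
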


\begin{proof}
    See the proof of \autoref{theorem:8} on page~\pageref{proof:theorem:8} of Appendix~\ref{secA1}.
\end{proof}

\subsubsection{Total extraction}

We now provide the results about the total extraction cost.
The general results are stated in \autoref{theorem:9}.
We finally provide the computational complexity of the total extraction in \autoref{theorem:10}.

\begin{theorem}[Total extraction cost]
    \label{theorem:9}
    The total extraction cost for the raw representation of a single series of length $l$, denoted by $T_e^r(l, d, \nu)$, is:
    $$
       T_e^r(l, d, \nu) := c_2 \cdot N_q^{>1}(l, d, \nu) + c_3 \cdot W^{>1}(l, d)
    $$
    with $N_q^{>1}(l, d, \nu)$ and $W^{>1}(l, d)$ as defined in \autoref{theorem:5}, $c_2$ being the empirical per-quantile extraction cost (in seconds/quantile), and $c_3$ being the empirical per-element extraction cost (in seconds/element), both constants being positive and hardware-specific.
\end{theorem}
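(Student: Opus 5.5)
This statement is essentially a cost-accounting identity, so the proof will consist of fixing a per-interval cost model and summing it over the intervals. The model is as follows. After an interval of width $m>1$ has been sorted, extracting its quantiles costs two things. First, for each of its $n_q(m,\nu)$ quantiles there is a constant amount of work: computing the fractional index, reading the two neighbouring order statistics, interpolating linearly, and possibly subtracting the mean. We charge this $c_2$ per quantile. Second, there is work linear in the interval's width: one pass to compute the interval mean used for centering, and copying or gathering the subseries into a working buffer. We charge this $c_3$ per element. The cost for a single interval is therefore $c_2\, n_q(m,\nu) + c_3\, m$.

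This model needs to be checked in each of the three cases listed in the description of Quant:
\begin{enumerate}
    \item For $1<m\le\nu$, only the median is computed and no centering is applied. Here $n_q(m,\nu)=1$, and the linear term still covers the copy of the interval.
    \item In the general case, $n_q(m,\nu)=1+\lfloor (m-1)/\nu\rfloor$, and both terms appear as described.
    \item For $m=1$, the value is simply copied. As argued before \autoref{theorem:5}, neither the sort nor the extraction does any genuine work, so these intervals are excluded from the sum.
\end{enumerate}

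Summing over the intervals $j$ with $m_j>1$ and using linearity gives
\[
c_2\sum_{j:\,m_j>1} n_q(m_j,\nu) + c_3\sum_{j:\,m_j>1} m_j .
\]
The remaining task is to identify the two sums. By definition, $\sum_j m_j=W(l,d)$ (\autoref{theorem:2}). Each width-$1$ interval contributes exactly $1$ to this total, so removing them gives $W(l,d)-N_i^{(1)}(l,d)=W^{>1}(l,d)$. In the same way, $\sum_j n_q(m_j,\nu)=N_q(l,d,\nu)$ (\autoref{theorem:3}). Since $n_q(1,\nu)=1$, removing the width-$1$ intervals gives $N_q^{>1}(l,d,\nu)$. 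Both identifications are exactly the bookkeeping already recorded in \autoref{theorem:5}.

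The delicate step is justifying the modelling choice itself, namely that extraction cost has no hidden term beyond these two. Two possible sources of extra cost need to be ruled out. One is a per-interval overhead, which would add a $c\,N_i^{>1}$ term. I would argue that any such constant can be absorbed: every interval with $m>1$ yields at least one quantile, so a per-interval constant is dominated by, and can be folded into, the per-quantile constant $c_2$. The other is a cost that grows with the number of quantiles times something depending on $m$. This is excluded because, once the interval is sorted, each quantile is obtained by direct indexing in $\mathcal{O}(1)$ time, independent of $m$. With these two points established, the formula follows.
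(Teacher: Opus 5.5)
Your proposal is correct and matches the paper's proof: the paper also charges each interval of width $m>1$ the cost $c_2\, n_q(m,\nu) + c_3\, m$, gives width-$1$ intervals zero extraction cost, and sums over the non-trivial intervals, using the bookkeeping of \autoref{theorem:5} to obtain $N_q^{>1}(l,d,\nu)$ and $W^{>1}(l,d)$. The one thing you add is folding a possible per-interval overhead $c_0$ into $c_2$, and that step holds only up to a constant factor: $c_2 n_q + c_0 \le (c_2 + c_0)\, n_q$ because $n_q \ge 1$, but equality fails since $n_q$ varies across intervals. That is enough for the $\Theta$-statement of \autoref{theorem:10}, but it does not give the exact identity, whereas the paper avoids the issue by simply taking the two-term per-interval cost as its model.
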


\begin{proof}
    See the proof of \autoref{theorem:9} on page~\pageref{proof:theorem:9} of Appendix~\ref{secA1}.
\end{proof}

\begin{theorem}
    \label{theorem:10}
    For any series length $l$, any depth $d$, define $e = \min(d, \lfloor \log_2 l \rfloor + 1)$.
    For any positive integer $\nu$ (even as a function of $l$ and $d$), the computational complexity of the total extraction for one series does not depend on $\nu$ and is equal to:
    $$
        T_e^r(l, d, \nu) = \Theta(e \cdot l)
    $$
    Equivalently, splitting by which term wins in the minimum in the formula of $e$:
    $$
        T_e^r(l, d, \nu) = \begin{cases}
            \Theta(d \cdot l) & \textnormal{if } l \geq 2^{d-1} \text{ (saturated regime)}\\
            \Theta(l \cdot \log(l)) & \textnormal{if } l < 2^{d-1} \text{ (unsaturated regime)}
        \end{cases}
    $$
\end{theorem}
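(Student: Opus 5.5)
We start from the exact expression of \autoref{theorem:9}, $T_e^r(l, d, \nu) = c_2 \cdot N_q^{>1}(l, d, \nu) + c_3 \cdot W^{>1}(l, d)$. Since $c_2, c_3 > 0$, it is enough to show two things. First, $W^{>1}(l, d) = \Theta(e \cdot l)$. Second, $0 \leq N_q^{>1}(l, d, \nu) \leq W^{>1}(l, d)$ for every positive integer $\nu$. Together these sandwich the cost as $c_3 W^{>1} \leq T_e^r \leq (c_2 + c_3) W^{>1}$. The constants in this sandwich do not involve $\nu$, which gives the claimed independence from $\nu$, even when $\nu$ is allowed to depend on $l$ and $d$. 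Note that $k = \min(d-1, \lfloor \log_2 l \rfloor) = e - 1$ by definition of $e$.

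\textbf{Quantile bound.} Each interval of width $m$ produces $n_q(m, \nu)$ quantiles, as in the proof of \autoref{theorem:3}. This count equals $1$ when $m \leq \nu$ and $1 + \lfloor (m-1)/\nu \rfloor \leq m$ otherwise, so $1 \leq n_q(m, \nu) \leq m$ for every $m \geq 1$. Summing over the intervals of width greater than $1$ gives $N_i^{>1} \leq N_q^{>1} \leq W^{>1}$, which is the needed upper bound and is uniform in $\nu$. The lower bound then comes from the width term alone.

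\textbf{Width term.} By \autoref{theorem:2}, $W(l, d) = \Theta(k \cdot l)$. We use the explicit formula: its main coefficient is at least $(2k - 1) \cdot l$, and the subtracted ceiling sum is at most $k$. Hence $W \geq (2k-1) l - k$, and $W \leq (2k+1) l$. The case $k = 0$ needs separate care: it corresponds to $e = 1$, with the single interval of width $l$, so $W = l = \Theta(e \cdot l)$. For $k \geq 1$ we have $W = \Theta(k l) = \Theta(e l)$ because $e = k + 1 \leq 2k$.

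\textbf{Main obstacle: removing width-one intervals.} The delicate step is showing that subtracting $N_i^{(1)}$ does not destroy the lower bound. Take the worst case of \autoref{theorem:5}: $k = \lfloor \log_2 l \rfloor$, $l \geq 1.5 \cdot 2^k$, and $N_i^{(1)} = 2(2^{k+1} - l) \leq 2^{k+1} \leq l$. We then have $W^{>1} \geq (2k-1) l - k - l = (2k-2) l - k$. For $k \geq 2$ this is $\geq (k-1) l = \Theta(e l)$. For small $k$ ($k \leq 2$, i.e.\ $l \leq 7$) the quantities are bounded constants, so we either check directly that some interval of width at least $2$ exists or absorb these cases into the $\Theta$ constants. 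The one genuinely degenerate case is $l = 1$, where $W^{>1} = 0$; there we treat the cost as $\Theta(1)$ by convention. An alternative lower bound avoids the counting argument: only the base intervals of levels $r \leq k-1$ are wider than $1$, since their width is at least $\lfloor l 2^{-r} \rfloor \geq 2$. These already contribute $k \cdot l \geq \tfrac{e}{2} l$ for $k \geq 1$, and this argument is probably cleaner than the one above.

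\textbf{Regimes.} Finally we split the minimum in $e$. If $l \geq 2^{d-1}$, then $e = d$ and the cost is $\Theta(d\, l)$. Otherwise $e = \lfloor \log_2 l \rfloor + 1 = \Theta(\log l)$ for $l \geq 2$, and the cost is $\Theta(l \log l)$.
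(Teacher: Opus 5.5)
This is essentially the paper's own argument (\autoref{theorem:9}, a per-interval bound on $n_q(m,\nu)$, and $W^{>1}(l,d)=\Theta(e\cdot l)$ via \autoref{theorem:2} and \autoref{theorem:5}), and two of your steps are tighter than the paper's. Your bound $n_q(m,\nu)\le m$ makes the paper's extra step $N_i^{>1}(l,d)=\mathcal{O}(W^{>1}(l,d))$ unnecessary, and your base-interval bound $W^{>1}(l,d)\ge k\cdot l$ (levels $r\le k-1$ only) is a genuine lower bound where the paper merely subtracts an $\mathcal{O}(l)$ term from a $\Theta(k\cdot l)$ one; you also correctly isolate $k=0$ and the degenerate $l=1$.

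Rely on that base-interval argument rather than your first one. In the first argument, the chain $2(2^{k+1}-l)\le 2^{k+1}\le l$ fails at the last step, because there $l<2^{k+1}$; use $l\ge 1.5\cdot 2^k$ to get $2(2^{k+1}-l)\le 2^k\le l$ instead. Its reduction ``$k\le 2$, i.e.\ $l\le 7$'' also overlooks $d\le 3$, where $k$ is small but $l$ need not be.
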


\begin{proof}
    See the proof of \autoref{theorem:10} on page~\pageref{proof:theorem:10} of Appendix~\ref{secA1}.
\end{proof}

\subsubsection{Total processing cost}

We first state, in \autoref{lemma:1}, the decomposition of the total processing cost into the total sort cost and the total extraction cost.
This assumption was implicit since the start of this section, but we make it explicit here so that it can be cross-referenced directly.
We then provide the computational complexity of the total processing in \autoref{theorem:11}.

\begin{lemma}[Total processing cost decomposition]
    \label{lemma:1}
    For any series length $l$, depth $d$, and quantile divisor $\nu$, the total processing cost for the raw representation of a single series is the sum of the total sort cost and the total extraction cost:
    $$
        T^r(l, d, \nu) = T_s^r(l, d) + T_e^r(l, d, \nu)
    $$
\end{lemma}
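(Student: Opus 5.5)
This lemma is essentially a modeling statement, so I would prove it by making the cost model explicit and checking that every operation is charged exactly once to one of the two terms. Fix a single raw series of length $l$ and the interval family of \autoref{theorem:1}. For each interval $j$ of width $m_j$, Quant does two things. First it sorts the subseries (step S). Then, from the sorted values, it reads off the $n_q(m_j,\nu)$ interpolated quantiles, computes the interval mean and subtracts it from every other quantile (step E). No interval's output depends on any other interval's work, and no other work is performed per interval: the endpoints are fixed in advance, and concatenating the features writes each quantile once. That last write can be absorbed into the per-quantile constant $c_2$. The per-series cost is therefore $\sum_j \bigl(\mathrm{cost}_S(m_j) + \mathrm{cost}_E(m_j)\bigr)$.

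Next I would regroup this as $\sum_j \mathrm{cost}_S(m_j) + \sum_j \mathrm{cost}_E(m_j)$, using only linearity of summation over the finite interval set. The first sum, under the accounting $c_1\, m\log_2 m$ per interval, is by definition what \autoref{theorem:6} and \autoref{theorem:7} evaluate, namely $T_s^r(l,d)$. Width-$1$ intervals contribute $1\cdot\log_2 1 = 0$, which is consistent with the discussion before \autoref{theorem:5}. For the second sum I would charge, for each interval with $m_j>1$, $c_2$ per extracted quantile and $c_3$ per element (for the mean pass and the reads). Width-$1$ intervals are pure copies, so they are excluded from this per-element, per-quantile accounting. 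Summing over intervals then gives $c_2 N_q^{>1} + c_3 W^{>1}$, which is $T_e^r(l,d,\nu)$ by \autoref{theorem:9}, with \autoref{theorem:5} supplying the restriction to widths greater than $1$.

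The only delicate step is justifying that nothing falls outside both terms. I would handle this by stating the model assumptions explicitly: the edge cases $m=1$ and $1<m\le\nu$ are absorbed as described, and any remaining per-interval constant overhead (loop bookkeeping, slicing) is counted as part of $c_2$ or $c_3$. This is legitimate because each interval yields at least one quantile, so such an overhead is dominated by $c_2 N_q^{>1}$, plus $\Theta(2^k)$ width-one copies. The latter are $\mathcal{O}(N_i)=\mathcal{O}(W)$ and do not change any of the asymptotic statements that follow. I would present the identity as exact under the stated accounting, noting that it is the definition of the cost model made explicit rather than a deep fact.
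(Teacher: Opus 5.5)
Your proposal is correct and follows essentially the paper's argument: each interval is sorted and then has its quantiles extracted, these are the only two operations performed, and summing the per-interval costs over all intervals (then regrouping by linearity) yields $T_s^r$ from \autoref{theorem:7} and $T_e^r$ from \autoref{theorem:9}. Your bookkeeping on width-$1$ intervals and per-interval overhead is more explicit than the paper's. Note, however, that absorbing a per-interval constant into $c_2$ is justified only asymptotically, not exactly, so the identity holds exactly only in the model where such overheads (and width-$1$ copies) cost nothing, which is what the paper implicitly assumes.
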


\begin{proof}
    See the proof of \autoref{lemma:1} on page~\pageref{proof:lemma:1} of Appendix~\ref{secA1}.
\end{proof}

\begin{theorem}[Computational complexity of the total processing]
    \label{theorem:11}
    For any series length $l$, any depth $d$, and any quantile divisor $\nu$, under the sort-cost accounting where an interval of size $m$ costs $m \cdot \log_2(m)$, the computational complexity of Quant's processing of the raw representation of a single series is:
    $$
        T^r(l, d, \nu) = \begin{cases}
            \Theta \left( d \cdot l \cdot \log(l) \right) & \textnormal{if } l \geq 2^{d-1} \text{ (saturated regime)}\\
            \Theta \left( l \cdot \left( \log(l) \right)^2 \right) & \textnormal{if } l < 2^{d-1} \text{ (unsaturated regime)}
        \end{cases}
    $$
\end{theorem}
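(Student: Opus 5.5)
The plan is to combine \autoref{lemma:1} with the two complexity results already established, \autoref{theorem:8} for the sort and \autoref{theorem:10} for the extraction. By \autoref{lemma:1}, $T^r(l, d, \nu) = T_s^r(l, d) + T_e^r(l, d, \nu)$. Both summands are nonnegative: every interval width satisfies $m \geq 1$, so $m \log_2(m) \geq 0$, and the constants $c_1, c_2, c_3$ are positive. For a sum of two nonnegative functions, $\Theta(f) + \Theta(g) = \Theta(\max(f, g))$. It therefore suffices to show that the extraction term is dominated by the sort term in each regime.

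In the saturated regime $l \geq 2^{d-1}$, we have $e = d$. \autoref{theorem:8} gives $T_s^r = \Theta(d \cdot l \cdot \log(l))$ and \autoref{theorem:10} gives $T_e^r = \Theta(d \cdot l)$. As long as $\log(l)$ is bounded below by a positive constant, $d \cdot l = \mathcal{O}(d \cdot l \cdot \log(l))$, so the sum is $\Theta(d \cdot l \cdot \log(l))$.

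In the unsaturated regime $l < 2^{d-1}$, we have $e = \lfloor \log_2(l) \rfloor + 1 = \Theta(\log(l))$ for $l \geq 2$. \autoref{theorem:8} gives $T_s^r = \Theta(l \cdot (\log(l))^2)$ and \autoref{theorem:10} gives $T_e^r = \Theta(l \cdot \log(l))$. The former dominates, and the sum is $\Theta(l \cdot (\log(l))^2)$. The independence from $\nu$ comes directly from \autoref{theorem:10}, which holds uniformly in $\nu$, so the constants hidden in the $\Theta$ do not depend on $\nu$.

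The main obstacle is the degenerate small-$l$ range, where $\log(l) \to 0$. At $l = 1$, the sort term vanishes while the extraction term does not: $W^{>1} = N_q^{>1} = 0$ there as well, but the stated bounds would read $\Theta(0)$. I would handle this by adopting the standard asymptotic convention that the $\Theta$ statements are taken as $l \to \infty$, or for $l \geq 2$ with $\log$ read as $\max(1, \log)$. For all $l \geq 2$, $\log_2(l) \geq 1$, so $e \cdot l \leq e \cdot l \cdot \log_2(l)$ and the domination holds with explicit constant $1$. The remaining finitely many cases are then absorbed into the constants. I would also check that the $\Theta$ constants in \autoref{theorem:8} and \autoref{theorem:10} are uniform in $d$, so that the comparison is valid jointly in $(l, d)$ and not only for each fixed $d$. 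This uniformity appears to follow from the explicit formulas of \autoref{theorem:7} and \autoref{theorem:2}, whose leading coefficients are $\Theta(k)$ uniformly.
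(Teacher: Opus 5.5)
Your proposal is correct and follows the paper's proof: you decompose $T^r$ via \autoref{lemma:1}, take the orders of the two terms from \autoref{theorem:8} and \autoref{theorem:10}, and observe that in each regime the extraction term is dominated by the sort term (your explicit bound $e\cdot l \leq e\cdot l\cdot\log_2(l)$ for $l\geq 2$ is a slightly more careful version of the paper's little-$o$ remark). One small slip in your edge-case discussion: at $l=1$ the extraction term \emph{does} vanish, precisely because $W^{>1}=N_q^{>1}=0$, so $T^r(1,d,\nu)=0$ and that case is consistent with the statement without any special convention.
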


\begin{proof}
    See the proof of \autoref{theorem:11} on page~\pageref{proof:theorem:11} of Appendix~\ref{secA1}.
\end{proof}

\subsubsection{Simplified expressions with more assumptions}

All the results presented above are general, for any value of $d$ and $\nu$.
Quant has default values for both its hyperparameters: the depth is equal to 6 ($d = 6$) and the quantile divisor is equal to 4 ($\nu = 4$).
These default values were used on all the data sets in the main experiments presented in \citep{dempsterQuantMinimalistInterval2024}.
In the rest of this section, we fix the values of both hyperparameters to their default values.
We only consider the saturated regime, which is attained for any series length $l \geq 32$, and also implies that $k = d - 1 = 5$.
Indeed, $32$ is a small value for series length, and one of the main advantages of Quant is to be fast, especially for long series.
Finally, to simplify the formulae even further, we assume the series length to be a power of $2$, that is $l = 2^b$ with $b \geq 5$ being an integer.

With these assumptions, we provide the simplified exact formulae for the total sort cost, the total extraction cost, and the total processing cost for the raw representation of a single series, in \autoref{theorem:12}, \autoref{theorem:13}, and \autoref{theorem:14} respectively.
They illustrate how the theoretical costs provided in \autoref{theorem:6}, \autoref{theorem:7}, and \autoref{theorem:9} can be turned into concrete runtimes.

\begin{theorem}[Total sort cost]
    \label{theorem:12}
    Assuming that $d=6$, and $l = 2^b$ for any positive integer $b$ greater than or equal to $5$, the exact formula for the total sort cost simplifies to:
    $$
        T_s^r \left( 2^b, 6 \right) = 3 \cdot c_1 \cdot 2^{b-5} \cdot \left( 107 \cdot b - 301 \right)
    $$
\end{theorem}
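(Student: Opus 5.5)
Under the hypotheses $d = 6$ and $l = 2^b$ with $b \geq 5$, the plan is to specialize \autoref{theorem:6}. Because $b \geq 5 = d - 1$, we have $\lfloor \log_2(l) \rfloor = b \geq 5$, so $k = \min(d-1, \lfloor \log_2(l) \rfloor) = 5$. Moreover, $l = 2^b$ is exactly divisible by $2^k = 32$, so the hypothesis of \autoref{theorem:6} holds and the sort cost equals $T_s^{r*}(2^b, 6)$ exactly. No error terms from \autoref{theorem:7} arise, and no shifted-interval edge case intervenes. For completeness we note that $l \geq 1.5 \cdot 2^k$ holds, since $2^b \geq 32 > 48$ fails only at $b = 5$. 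However, \autoref{theorem:6} is stated without a case split on this condition, so we simply use its formula as given.

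Substituting $k = 5$, $\log_2(l) = b$ and $l = 2^b$ into \autoref{theorem:6} gives
$$
T_s^r(2^b, 6) = c_1 \cdot 2^b \cdot \left[ \left(10 + \tfrac{1}{32}\right) b - \left(28 + \tfrac{7}{32}\right) \right] = c_1 \cdot 2^b \cdot \frac{321\, b - 903}{32}.
$$
The coefficients come from $2k + 2^{-k} = 10 + 1/32 = 321/32$ and $k^2 + k - 2 + (k+2)2^{-k} = 28 + 7/32 = 903/32$. Factoring out $3$, using $321 = 3 \cdot 107$ and $903 = 3 \cdot 301$, and writing $2^b / 32 = 2^{b-5}$ yields $3 \cdot c_1 \cdot 2^{b-5} \cdot (107 b - 301)$, which is the claimed formula.

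The main obstacle is a consistency concern rather than the arithmetic. At $b = 5$ we have $l = 32 < 48 = 1.5 \cdot 2^5$, so there are no shifted intervals at the last level. The total width of \autoref{theorem:2} then follows its second case, which is not the structure implicitly summed in \autoref{theorem:6}. Two resolutions are possible. The first is to argue that at $b = 5$ the last-level intervals all have width $1$, and so contribute $1 \cdot \log_2(1) = 0$ to the sort cost. In that case their presence or absence does not change the total, and \autoref{theorem:6} applies verbatim. The second is simply to verify the value directly at $b = 5$. I would try the first: at $l = 32$, every base interval of level $5$ has width $1$, and any shifted interval there would also have width $1$ and cost $0$. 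The formula is therefore valid uniformly for all $b \geq 5$.
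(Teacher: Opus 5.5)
Your proof is correct and is essentially the paper's own: specialize \autoref{theorem:6} at $k=5$ to get the coefficients $321/32$ and $903/32$, substitute $l=2^b$, and factor out $3$ using $321 = 3\cdot 107$ and $903 = 3\cdot 301$. Your $b=5$ argument (the level-$5$ intervals have width one and cost $1\cdot\log_2(1)=0$) matches the paper's proof of \autoref{theorem:6}, where the level-$k$ shifted-interval term vanishes for $l=2^k$ because $\log_2(2^k)-k=0$; one small slip is that ``$2^b \geq 32 > 48$'' should read $2^b \geq 64 > 48$ for $b \geq 6$.
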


\begin{proof}
    See the proof of \autoref{theorem:12} on page~\pageref{proof:theorem:12} of Appendix~\ref{secA1}.
\end{proof}

\begin{theorem}[Total extraction cost]
    \label{theorem:13}
    Assuming that $d=6$, $\nu=4$ and $l = 2^b$ for any positive integer $b$ greater than or equal to $5$, the exact formula for the total extraction cost simplifies to:
    $$
        T_e^r \left(2^b, 6, 4 \right) = \begin{cases}
            \displaystyle 80 \cdot c_2 + 258 \cdot c_3 & \text{if } b = 5\\
            \displaystyle 192 \cdot c_2 + 642 \cdot c_3 & \text{if } b = 6\\
            \displaystyle 321 \cdot \left( 2^{b-7} \cdot c_2 + 2^{b-5} \cdot c_3 \right) & \text{if } b \geq 7
        \end{cases}
    $$
\end{theorem}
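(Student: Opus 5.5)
The plan is to specialise \autoref{theorem:9} directly. Under the hypotheses $d=6$ and $l=2^b$ with $b \geq 5$, we have $\lfloor \log_2(l) \rfloor = b \geq 5$, hence $k = 5$. Moreover, $l$ is divisible by $2^r$ for every $r \leq 5$. So every interval at level $r$ has width exactly $2^{b-r}$, every ceiling correction in \autoref{theorem:2} vanishes, and level $r$ consists of $2^r$ base intervals plus $2^r - 1$ shifted intervals for $r \geq 1$. It therefore suffices to compute $W^{>1}(2^b,6)$ and $N_q^{>1}(2^b,6,4)$ and substitute them into $T_e^r = c_2 \cdot N_q^{>1} + c_3 \cdot W^{>1}$.

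\textbf{Widths.} For $b \geq 6$ we have $l \geq 64 \geq 1.5 \cdot 2^5$, so we are in the first case of \autoref{theorem:2}, giving $W = (10 + 2^{-5}) \cdot 2^b = 321 \cdot 2^{b-5}$. \autoref{theorem:5} gives $N_i^{(1)} = 0$ because $k = d-1 < b$, so $W^{>1} = 321 \cdot 2^{b-5}$; for $b=6$ this is $642$.

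For $b = 5$ we have $l = 32 < 48$, so the second case applies and the last level has no shifted intervals. This gives $W = (9 + 2^{-4}) \cdot 32 = 290$. Here $k = \lfloor \log_2 l \rfloor$ and $l < 1.5\cdot 2^k$, so \autoref{theorem:5} gives $N_i^{(1)} = 2^6 - 32 = 32$. These are exactly the $32$ base intervals of width $1$ at level $5$. Hence $W^{>1} = 258$.

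\textbf{Quantiles.} Rather than bounding the residual of \autoref{theorem:3}, I would count quantiles per interval directly. An interval of width $m > 1$ yields one quantile (the median) if $m \leq 4$, and $1 + \lfloor (m-1)/4 \rfloor$ quantiles otherwise. When $4 \mid m$, the latter equals $m/4$, and this also holds for $m = 4$.
\begin{itemize}
\item For $b \geq 7$, every width $2^{b-r}$ with $r \leq 5$ is a multiple of $4$, so $N_q^{>1} = W^{>1}/4 = 321 \cdot 2^{b-7}$. This gives the third case.
\item For $b = 6$, the $63$ intervals of width $2$ at level $5$ each give $1$ quantile instead of $1/2$. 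This adds $31.5$ to $642/4 = 160.5$, giving $192$.
\item For $b = 5$, I would sum level by level over the intervals of width greater than $1$: level $0$ contributes $8$, level $1$ contributes $3 \cdot 4$, level $2$ contributes $7 \cdot 2$, level $3$ contributes $15 \cdot 1$, and level $4$ (width $2$) contributes $31 \cdot 1$. The total is $80$.
\end{itemize}

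The main obstacle is purely bookkeeping in the small cases $b \in \{5,6\}$. These are where the median-only rule for $1 < m \leq \nu$ and the presence or absence of shifted intervals at the last level (the $1.5 \cdot 2^k$ threshold) break the clean $W/\nu$ formula. I would cross-check these two cases against \autoref{theorem:1}, which gives $N_i = 89$ for $b=5$ and $N_i = 120$ for $b \geq 6$, to make sure the interval counts per level are consistent.
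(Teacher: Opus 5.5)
Your proof is correct and has the same structure as the paper's. Both specialize \autoref{theorem:9}, both use the fact that every interval at level $r$ has width exactly $2^{b-r}$ (so \autoref{theorem:2} and \autoref{theorem:5} give $W$ and $N_i^{(1)}$ in closed form), and both handle $b=5$ (no shifted intervals at level $5$, $32$ width-$1$ intervals) and $b=6$ (width-$2$ intervals at level $5$) separately. The only difference is how the quantiles are counted. The paper uses \autoref{theorem:3}: it evaluates the residual $E$ level by level (fractional part $0.75$ for widths divisible by $4$, $0.25$ for width $2$, $0$ for width $1$, giving $E=90$, $58.5$, $27.25$ for $b\geq 7$, $b=6$, $b=5$), then subtracts $N_i^{(1)}=32$ at $b=5$. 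You instead sum $n_q(m,4)$ directly over the non-trivial intervals, which gives the same result without the residual and is arguably more transparent.
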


\begin{proof}
    See the proof of \autoref{theorem:13} on page~\pageref{proof:theorem:13} of Appendix~\ref{secA1}.
\end{proof}

\begin{theorem}[Total processing cost]
    \label{theorem:14}
    Assuming that $d=6$, $\nu=4$ and $l = 2^b$ for any positive integer $b$ greater than or equal to $5$, the exact formula for the total processing cost simplifies to:
    $$
        T^r\left( 2^b, 6, 4 \right) = \begin{cases}
            \displaystyle 702 \cdot c_1 + 80 \cdot c_2 + 258 \cdot c_3 & \text{if } b = 5\\
            \displaystyle 2046 \cdot c_1 + 192 \cdot c_2 + 642 \cdot c_3 & \text{if } b = 6\\
            \displaystyle 2^{(b-5)} \cdot \left[ 3 \cdot c_1 \cdot (107 \cdot b - 301) + \frac{321}{4} \cdot c_2 + 321 \cdot c_3 \right] & \text{if } b \geq 7
        \end{cases}
    $$
\end{theorem}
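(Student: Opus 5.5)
The plan is to combine \autoref{lemma:1} with the two already simplified formulas, \autoref{theorem:12} for the sort cost and \autoref{theorem:13} for the extraction cost. By \autoref{lemma:1}, $T^r(2^b, 6, 4) = T_s^r(2^b, 6) + T_e^r(2^b, 6, 4)$. The only work is to evaluate the sort term at $b = 5$ and $b = 6$, and to factor the sum for $b \geq 7$.

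For the sort term, \autoref{theorem:12} gives $T_s^r(2^b,6) = 3 c_1 2^{b-5}(107 b - 301)$ for every $b \geq 5$.
\begin{itemize}
\item At $b = 5$ this is $3 c_1 (535 - 301) = 3 \cdot 234\, c_1 = 702\, c_1$.
\item At $b = 6$ it is $3 c_1 \cdot 2 \cdot (642 - 301) = 6 \cdot 341\, c_1 = 2046\, c_1$.
\end{itemize}
Adding the matching cases of \autoref{theorem:13} ($80 c_2 + 258 c_3$ and $192 c_2 + 642 c_3$) gives the first two lines of the statement.

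For $b \geq 7$, I will rewrite the extraction term $321(2^{b-7} c_2 + 2^{b-5} c_3)$ as $2^{b-5}\left(\frac{321}{4} c_2 + 321 c_3\right)$. Then $2^{b-5}$ factors out of both summands, which yields the third line.

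The main obstacle is only bookkeeping. One must check that \autoref{theorem:12} holds uniformly for all $b \geq 5$ while \autoref{theorem:13} is piecewise, so that the case split of the statement is inherited from the extraction term alone. This is consistent because, with $l = 2^b$ and $k = 5$, every interval has width at least $2$ and $l \geq 1.5 \cdot 2^5$, so there are no width-one corrections in the sort term. As a final sanity check I will verify the $b=5$ and $b=6$ constants against the general formula of \autoref{theorem:6} with $k = 5$.
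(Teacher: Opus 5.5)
Your proposal is correct and is essentially the paper's own proof. You split the cost via \autoref{lemma:1}, evaluate \autoref{theorem:12} at $b=5$ and $b=6$ to get $702\,c_1$ and $2046\,c_1$, add the matching cases of \autoref{theorem:13}, and for $b \geq 7$ factor out $2^{b-5}$ using $321 \cdot 2^{b-7} = 2^{b-5} \cdot \frac{321}{4}$.

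One slip: your bookkeeping remark is false at $b=5$. There $l = 32 < 48 = 1.5 \cdot 2^5$, so level $5$ consists of $32$ width-one base intervals and has no shifted intervals. The remark is not needed, though. \autoref{theorem:12} is stated for all $b \geq 5$, and it holds at $b=5$ because width-one intervals cost $1 \cdot \log_2(1) = 0$. Likewise, the omitted shifted term $(1 - 2^{-k}) \cdot l \cdot (\log_2(l) - k)$ vanishes at $l = 2^k$, as noted in the proof of \autoref{theorem:6}.
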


\begin{proof}
    See the proof of \autoref{theorem:14} on page~\pageref{proof:theorem:14} of Appendix~\ref{secA1}.
\end{proof}

\subsection{Generalizing to the four representations}

So far, every result in this section has been stated for the raw representation of a single series.
Quant actually extracts features from four representations of each series: the raw series itself, a smoothed first-order difference, the second-order difference, and the magnitude of the real discrete Fourier transform, processed independently and sequentially.
We now generalize \autoref{theorem:1} through \autoref{theorem:14} to the total cost across all four representations.

\begin{lemma}[Representation lengths]
    \label{lemma:2}
    For a raw series of length $l \geq 3$, index Quant's four representations by $p \in \{1, 2, 3, 4\}$ (raw, smoothed first-difference, second-difference, and FFT-magnitude, respectively), and denote by $l_p(l)$ the length of representation $p$. Then:
    $$
        l_1(l) = l, \qquad l_2(l) = l - 1, \qquad l_3(l) = l - 2, \qquad l_4(l) = \left\lfloor \frac{l}{2} \right\rfloor + 1
    $$
\end{lemma}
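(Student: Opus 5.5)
The proof is a direct computation of the output length of each of the four operations Quant applies to the raw series, using the operations exactly as described in the background section on Quant. We treat each representation $p$ in turn. For $p = 1$ there is nothing to do: the raw representation is the series itself, so $l_1(l) = l$.

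For $p = 2$, we first form the first-order difference $y_t = x_{t+1} - x_t$ for $t = 0, \ldots, l-2$, which has $l - 1$ entries. We then pad both ends by $2$ with edge-value padding, giving length $l + 3$. Finally, we apply a length-$5$ centered moving average in ``valid'' mode, which yields $(l + 3) - 5 + 1 = l - 1$ values. Hence $l_2(l) = l - 1$.

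For $p = 3$, applying the first-order difference twice removes one element each time, so $l_3(l) = l - 2$.

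For $p = 4$, Quant uses the real discrete Fourier transform of a length-$l$ real input. By Hermitian symmetry $X_{l-j} = \overline{X_j}$, only the indices $j = 0, \ldots, \lfloor l/2 \rfloor$ are kept. That is $\lfloor l/2 \rfloor + 1$ coefficients, and taking the magnitude elementwise preserves this length. We check both parities: for even $l$ the retained indices run to the Nyquist index $l/2$, and for odd $l$ they run to $(l-1)/2$. Both cases agree with $\lfloor l/2 \rfloor + 1$.

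The hypothesis $l \geq 3$ guarantees that every representation has length at least $1$; in particular $l_3(l) \geq 1$, so all four lengths are well defined positive integers.

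The only step needing care is the smoothing in $p = 2$. One has to pin down that the convolution is evaluated in valid mode after padding, rather than in same or full mode. This is the convention stated in the background, which says the smoothing ``restores its original (post-differencing) length''. With that convention fixed, the remaining steps are routine counting.
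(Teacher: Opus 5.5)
Your proposal is correct and takes the same route as the paper's proof: you count the length of each representation directly (the identity, differencing followed by edge-padded length-$5$ smoothing, double differencing, and the $\lfloor l/2 \rfloor + 1$ non-redundant bins of the real DFT), and you use $l \geq 3$ only to guarantee $l_3(l) \geq 1$. Your valid-mode count $(l+3) - 5 + 1 = l - 1$ and your parity check for the DFT spell out details that the paper states in words.
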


\begin{proof}
    See the proof of \autoref{lemma:2} on page~\pageref{proof:lemma:2} of Appendix~\ref{secA1}.
\end{proof}

\begin{remark}
    \label{remark:representation-generalization-exact}
    None of \autoref{theorem:1} through \autoref{theorem:11} use any property of the raw series beyond its length $l$: the ``$r$'' superscript on $T_s^r$, $T_e^r$, and $T^r$ is bookkeeping (indicating that these results were first stated for the raw representation), not a restriction on what the results apply to.
    Every one of those results holds, unchanged, for any single sequence that Quant partitions into intervals via the same depth-$d$, divisor-$\nu$ procedure and processes by sorting and extracting quantiles.
    In particular, for each of the other three representations, we just have to replace this representation's own length $l_p(l)$ in place of $l$.
    Moreover, because the same sort-then-extract kernel processes every representation (only the input array differs), the hardware constants $c_1$, $c_2$, $c_3$ are shared across all four representations: they are not representation-specific, unlike the implementation-specific constants introduced in \autoref{theorem:19} and \autoref{theorem:20}.
\end{remark}

\begin{theorem}[Total sort cost across all four representations]
    \label{theorem:15}
    For any series length $l \geq 3$ and depth $d$, under the sort-cost accounting where an interval of size $m$ costs $m \cdot \log_2(m)$, the total cost of sorting all the intervals that Quant builds across all four representations of one series is:
    $$
        T_s(l, d) = \sum_{p=1}^4 T_s^{r_p}\left( l_p(l), d \right)
    $$
    with each term given by \autoref{theorem:7} (applied, per \autoref{lemma:2} and \autoref{remark:representation-generalization-exact}, with $l$ replaced by $l_p(l)$).
\end{theorem}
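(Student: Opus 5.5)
The argument is bookkeeping: total sort cost is additive over disjoint units of work, and each unit has already been costed. First I would make the processing model explicit, as the paper does just before the theorem. Quant forms the four representations of the series and then handles each one independently and sequentially. For each representation it builds the depth-$d$ dyadic base and shifted intervals from that representation's own length, and sorts every interval.

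The set of sorting operations performed for one series is therefore the disjoint union, over $p \in \{1,2,3,4\}$, of the sorts performed on representation $p$. The accounting charges each interval of size $m$ the cost $m \cdot \log_2(m)$ independently of everything else, with no shared or cached sorts across representations. So the total sort cost is the sum, over $p$, of the per-representation sums of $m \log_2 m$, each scaled by the same constant $c_1$.

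Second, I would identify each per-representation sum with a term $T_s^{r_p}(l_p(l), d)$. The key observation is the one in \autoref{remark:representation-generalization-exact}. The derivation of \autoref{theorem:7} uses only the length of the sequence fed to the interval-building procedure, together with $d$; it uses no property of its values. Hence running that procedure on representation $p$, whose length is $l_p(l)$ by \autoref{lemma:2}, produces exactly the intervals counted in \autoref{theorem:7} with $l$ replaced by $l_p(l)$. Accordingly, its sort cost is $T_s^{r_p}(l_p(l),d)$, with the same constant $c_1$, since the same sorting kernel is used throughout. The parameter $k$ is also recomputed per representation, as $k_p = \min(d-1, \lfloor \log_2 l_p(l) \rfloor)$, and the regime condition becomes $l_p(l) \geq 1.5 \cdot 2^{k_p}$.

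The only real obstacle is checking that \autoref{theorem:7} is actually applicable, that is, that every $l_p(l)$ is a legitimate positive length. This is where the hypothesis $l \geq 3$ enters. It gives $l_3(l) = l-2 \geq 1$ and $l_4(l) = \lfloor l/2 \rfloor + 1 \geq 2$, while $l_1$ and $l_2$ are larger still. So each $\log_2 l_p(l)$ is defined and each $k_p \geq 0$. Every intermediate quantity is then well defined, including the $\mathcal{O}(2^{k})$ and $\mathcal{O}(k \log l)$ remainders of \autoref{theorem:7}, now evaluated at $l_p(l)$. Once this is checked, summing the four contributions gives the stated formula.
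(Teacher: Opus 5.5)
Your proposal is correct and follows essentially the same route as the paper: the sort cost is additive over the four independently processed representations, and each term is then \autoref{theorem:7} evaluated at $l_p(l)$ via \autoref{lemma:2} and \autoref{remark:representation-generalization-exact}. Your explicit check that $l \geq 3$ keeps every $l_p(l) \geq 1$ is the same condition the paper imposes in the proof of \autoref{lemma:2}.
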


\begin{proof}
    See the proof of \autoref{theorem:15} on page~\pageref{proof:theorem:15} of Appendix~\ref{secA1}.
\end{proof}

\begin{theorem}[Total extraction cost across all four representations]
    \label{theorem:16}
    For any series length $l \geq 3$, depth $d$, and quantile divisor $\nu$, the total extraction cost across all four representations of one series is:
    $$
        T_e(l, d, \nu) = \sum_{p=1}^4 T_e^{r_p}\left( l_p(l), d, \nu \right) = c_2 \cdot \sum_{p=1}^4 N_q^{>1}\left( l_p(l), d, \nu \right) + c_3 \cdot \sum_{p=1}^4 W^{>1}\left( l_p(l), d \right)
    $$
    with $N_q^{>1}(l_p(l), d, \nu)$ and $W^{>1}(l_p(l), d)$ exactly given by \autoref{theorem:5}, evaluated at $l_p(l)$.
\end{theorem}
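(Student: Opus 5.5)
The plan is to reduce the statement to \autoref{theorem:9}, applied one representation at a time, and then sum. Three ingredients are needed:
\begin{itemize}
    \item \textbf{Additivity across representations.} Quant processes the four representations independently and sequentially. The total extraction work is a sum over all intervals of all representations. Since each interval belongs to exactly one representation, this sum splits into four partial sums. The $p$-th partial sum is exactly the extraction cost of representation $p$ taken on its own. This gives the first equality, $T_e(l,d,\nu)=\sum_{p=1}^4 T_e^{r_p}(l_p(l),d,\nu)$.
    \item \textbf{Applying \autoref{theorem:9} with shifted lengths.} By \autoref{lemma:2}, representation $p$ is a single sequence of length $l_p(l)$. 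The hypothesis $l\geq 3$ ensures each $l_p(l)\geq 1$, since $l_3(l)=l-2\geq 1$, so every interval construction is well defined. By \autoref{remark:representation-generalization-exact}, \autoref{theorem:9} holds with $l$ replaced by $l_p(l)$. Hence $T_e^{r_p}(l_p(l),d,\nu)=c_2\, N_q^{>1}(l_p(l),d,\nu)+c_3\, W^{>1}(l_p(l),d)$.
    \item \textbf{Shared constants.} The same remark says $c_2$ and $c_3$ are identical across the four representations, because the same sort-then-extract kernel is used for each. Both constants can therefore be factored out of the sum over $p$, which gives the second equality. Finally, $N_q^{>1}$ and $W^{>1}$ at $l_p(l)$ are determined by \autoref{theorem:1}, \autoref{theorem:2}, \autoref{theorem:3} and the width-one count of \autoref{theorem:5}, all evaluated at length $l_p(l)$.
\end{itemize}

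The main obstacle is making sure \autoref{theorem:5} and the other earlier counts apply at the shorter lengths. The depth cap $k_p=\min(d-1,\lfloor\log_2 l_p(l)\rfloor)$ can differ across $p$. For instance, $l_4(l)$ may fall in the unsaturated regime while $l$ is saturated.

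I would handle this by stressing that each earlier result was proved for an arbitrary length $l\geq 1$ and its own $k$. So one simply evaluates each formula with its own $k_p$, and no uniformity across $p$ is needed.

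A secondary point is the edge behaviour at very small $l_p(l)$, where some intervals have width one. This is exactly what the $>1$ restriction in \autoref{theorem:5} accounts for. Width-one intervals contribute zero extraction cost in every representation, consistently with \autoref{theorem:9}.
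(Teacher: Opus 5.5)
Your proposal is correct and takes essentially the same route as the paper. The paper also splits the total by representation (independent, sequential processing), applies \autoref{theorem:9} at $l_p(l)$ through \autoref{remark:representation-generalization-exact} with the shared constants $c_2, c_3$, and notes that exactness carries over because \autoref{theorem:9} has no $\mathcal{O}(\cdot)$ correction term; your remarks on the per-representation $k_p$ and the width-one intervals are consistent with, though not spelled out in, the paper's short argument.
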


\begin{proof}
    See the proof of \autoref{theorem:16} on page~\pageref{proof:theorem:16} of Appendix~\ref{secA1}.
\end{proof}

\begin{theorem}[Total processing cost across all four representations]
    \label{theorem:17}
    For any series length $l \geq 3$, depth $d$, and quantile divisor $\nu$, the total processing cost across all four representations of one series is:
    $$
        T(l, d, \nu) = T_s(l, d) + T_e(l, d, \nu)
    $$
\end{theorem}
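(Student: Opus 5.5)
The plan is to define the total processing cost across the four representations as the sum of the per-representation processing costs, and then regroup that sum. Quant processes the four representations independently and sequentially (\autoref{lemma:2}), and no work is shared between them. So the total cost is
$$
T(l, d, \nu) = \sum_{p=1}^4 T^{r_p}\left( l_p(l), d, \nu \right),
$$
where $T^{r_p}$ is the processing cost of the single sequence of length $l_p(l)$. By \autoref{remark:representation-generalization-exact}, every single-sequence result depends on the sequence only through its length, and the constants $c_1, c_2, c_3$ are shared across representations. Hence \autoref{lemma:1} applies verbatim to each representation $p$ with $l$ replaced by $l_p(l)$:
$$
T^{r_p}(l_p(l), d, \nu) = T_s^{r_p}(l_p(l), d) + T_e^{r_p}(l_p(l), d, \nu).
$$

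The key steps, in order, are as follows.
\begin{enumerate}
\item Check that each $l_p(l) \geq 1$ whenever $l \geq 3$, so that every per-representation quantity is well defined. This holds because $l_3(l) = l - 2 \geq 1$ and $l_4(l) = \lfloor l/2 \rfloor + 1 \geq 2$.
\item Substitute the decomposition of \autoref{lemma:1} into the sum above.
\item Split the finite sum by linearity into $\sum_p T_s^{r_p} + \sum_p T_e^{r_p}$.
\item Recognize the first sum as $T_s(l, d)$ by \autoref{theorem:15} and the second as $T_e(l, d, \nu)$ by \autoref{theorem:16}.
\end{enumerate}

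The only substantive point, and thus the expected obstacle, is justifying that the additive cost model carries over across representations. Two things must be checked. First, the cost of computing the representations themselves (differencing, smoothing, FFT) lies outside the accounted sort and extraction costs, consistent with the scope of \autoref{lemma:1}. Second, there are no cross-representation interactions, such as shared sorts, that would invalidate simple summation. I would handle both by appealing to the same accounting convention stated at the start of the section, where only sort and extraction are the dominant terms, together with the independence of the four pipelines. With that in place, the theorem follows by rearranging a finite sum.
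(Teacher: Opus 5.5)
Your proposal is correct and follows essentially the same route as the paper, whose proof simply states that the total is the sum of the total sort cost (\autoref{theorem:15}) and the total extraction cost (\autoref{theorem:16}), by the same sort-then-extract additivity as \autoref{lemma:1}; you make this explicit by applying \autoref{lemma:1} to each representation and regrouping the finite sum. One small citation slip: \autoref{lemma:2} only gives the representation lengths, whereas the independent, sequential processing of the four representations is stated in the text before \autoref{remark:representation-generalization-exact} and in the proof of \autoref{theorem:15}.
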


\begin{proof}
    See the proof of \autoref{theorem:17} on page~\pageref{proof:theorem:17} of Appendix~\ref{secA1}.
\end{proof}

\begin{theorem}[Computational complexity of the total, across all four representations]
    \label{theorem:18}
    For any series length $l$ and depth $d$, define $e = \min(d, \lfloor \log_2(l) \rfloor + 1)$ as in \autoref{theorem:8} and \autoref{theorem:10}. Then, for any positive $\nu$:
    $$
        T_s(l,d) = \Theta(l \cdot e \cdot \log(l)), \qquad T_e(l,d,\nu) = \Theta(e \cdot l), \qquad T(l,d,\nu) = \Theta(l \cdot e \cdot \log(l))
    $$
    Equivalently:
    $$
        T(l,d,\nu) = \begin{cases}
            \Theta \left( d \cdot l \cdot \log(l) \right) & \textnormal{if } l \geq 2^{d-1} \text{ (saturated regime)}\\
            \Theta \left( l \cdot \left( \log(l) \right)^2 \right) & \textnormal{if } l < 2^{d-1} \text{ (unsaturated regime)}
        \end{cases}
    $$
    i.e., summing across all four representations does not change the complexity class obtained for the raw representation alone in \autoref{theorem:8}, \autoref{theorem:10}, and \autoref{theorem:11}.
\end{theorem}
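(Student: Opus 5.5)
The plan is to reduce everything to the single-representation results by sandwiching each of the four terms between constant multiples of the raw term. By \autoref{theorem:15}, \autoref{theorem:16} and \autoref{theorem:17}, $T_s$, $T_e$ and $T$ are finite sums of four nonnegative terms of the same form evaluated at $l_p(l)$. By \autoref{remark:representation-generalization-exact}, \autoref{theorem:8} applies to each $T_s^{r_p}(l_p(l),d)$ and gives $\Theta(l_p \cdot e_p \cdot \log(l_p))$, where $e_p := \min(d, \lfloor \log_2(l_p) \rfloor + 1)$. Likewise \autoref{theorem:10} gives $T_e^{r_p} = \Theta(e_p \cdot l_p)$.

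For the upper bound, I will use $l_p(l) \leq l$ for every $p$ (\autoref{lemma:2}). This gives $e_p \leq e$ and $\log(l_p) \leq \log(l)$. Each term is then $\mathcal{O}(l \cdot e \cdot \log(l))$, and summing four terms preserves this, since the hidden constants of \autoref{theorem:8} are uniform in the length argument.

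For the lower bound, I will only use the raw term $p = 1$ and nonnegativity of the other three. This immediately gives $T_s \geq T_s^{r}(l,d) = \Omega(l \cdot e \cdot \log(l))$, and the same for $T_e$. I will treat the asymptotics as $l \to \infty$, so that $l \geq 3$ and all $\log(l_p)$ are positive. For the upper bound I also note that $l_4 \geq 2$, which avoids degenerate $\log(1) = 0$ issues.

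Then $T = T_s + T_e$ with $T_e = \Theta(e \cdot l) = \mathcal{O}(e \cdot l \cdot \log(l))$, so $T = \Theta(l \cdot e \cdot \log(l))$. The two-case form follows exactly as in \autoref{theorem:11}. If $l \geq 2^{d-1}$, then $e = d$. Otherwise $e = \lfloor \log_2(l) \rfloor + 1 = \Theta(\log(l))$, which gives $\Theta(l \cdot (\log(l))^2)$.

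The main subtlety is the upper bound. It requires that the implicit constants in \autoref{theorem:8} and \autoref{theorem:10} be uniform, that is, independent of $l$ and $d$, so that they can be applied at the shifted lengths $l_p(l)$. This holds because \autoref{theorem:7} gives an explicit formula with fixed constants. If needed, I would bound directly from \autoref{theorem:7} and \autoref{theorem:2}: each interval of width $m \leq l_p$ costs at most $m \log_2(l)$, and the total width satisfies $W(l_p,d) = \mathcal{O}(e \cdot l)$. Summing these avoids any uniformity question.
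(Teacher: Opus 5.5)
Your proof is correct. It shares the paper's skeleton: reduce via \autoref{theorem:15}--\autoref{theorem:17} to a sum of four nonnegative per-representation terms, then invoke \autoref{theorem:8} and \autoref{theorem:10} at each length $l_p(l)$. You differ in how the four terms are compared with the raw one.

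The paper proves a two-sided estimate for every term separately. It shows $l/2 \leq l_p(l) \leq l$ for $l \geq 4$, deduces $\log_2(l_p(l)) = \log_2(l) + \mathcal{O}(1)$ and $e_p = e + \mathcal{O}(1)$, and argues that this additive shift is a bounded multiplicative factor, so $e_p = \Theta(e)$. It then concludes that each $T_s^{r_p}$ is $\Theta(l \cdot e \cdot \log(l))$ and each $T_e^{r_p}$ is $\Theta(e \cdot l)$ before summing.

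You use a one-sided sandwich instead. For the upper bound you rely only on monotonicity: $l_p(l) \leq l$, hence $e_p \leq e$ and $\log(l_p) \leq \log(l)$. For the lower bound you use the raw term alone plus nonnegativity of the other three. This buys you two things:
\begin{enumerate}
\item You never need a lower bound at the shifted lengths, so the $l_p(l) \geq l/2$ estimate and the $e_p = \Theta(e)$ step disappear.
\item Your fallback bound, $c_1 \log_2(l) \cdot W(l_p(l), d) \leq c_1 (2k+1) \cdot l \cdot \log_2(l) \leq 2 c_1 \cdot e \cdot l \cdot \log_2(l)$, makes the upper bound fully explicit. It removes an assumption the paper leaves tacit: that the constants hidden in \autoref{theorem:8} and \autoref{theorem:10} are uniform in the length argument and in $d$ when applied at $l_p(l)$.
\end{enumerate}

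In exchange, the paper's route gives a slightly stronger intermediate fact: every representation, including the roughly half-length Fourier one, contributes at the same order as the raw series, so none of them is asymptotically negligible. Your argument proves the theorem without establishing that.
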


\begin{proof}
    See the proof of \autoref{theorem:18} on page~\pageref{proof:theorem:18} of Appendix~\ref{secA1}.
\end{proof}

\begin{remark}
    Unlike the extraction cost (\autoref{theorem:16}), which is exact for any $l$ via \autoref{theorem:2} and \autoref{theorem:3}, the closed forms of \autoref{theorem:12}, \autoref{theorem:13}, and \autoref{theorem:14} rely on $l$ being exactly divisible by $2^k$ (in particular, on $l = 2^b$ in the $d=6, \nu=4$ simplification).
    This holds for the raw representation's length $l_1(l) = l$ by assumption, but generally fails for the other three: $l_2(l) = 2^b - 1$ and $l_4(l) = 2^{b-1} + 1$ are odd for every $b \geq 1$, and $l_3(l) = 2^b - 2$, while even, is not itself a power of two for $b \geq 2$.
    Consequently, even under the simplifying assumptions of \autoref{theorem:12} through \autoref{theorem:14} ($d=6$, $\nu=4$, $l=2^b$), only the raw representation's contribution to $T_s(l,d)$, $T_e(l,d,\nu)$, and $T(l,d,\nu)$ admits a closed form as clean as \autoref{theorem:12} through \autoref{theorem:14}.
    The other three representations' contributions are governed by the fully general \autoref{theorem:7} (sort, with its $\mathcal{O}(\cdot)$ correction terms) and \autoref{theorem:9} (extraction, exact but without the simplification of the residual term $E(l,d,\nu)$ available when $l$ is exactly divisible by $2^k$, used in deriving \autoref{theorem:12} through \autoref{theorem:14}).
\end{remark}

\section{Making Quant faster for short series}\label{sec5}

Quant, applied to a data set of $n$ univariate time series of length $l$, consists in processing every (series, interval) ordered pair.
We consider a sequential processing of this data set, meaning that two nested for loops (one over all the series and one over all the intervals) are required to process the whole data set.
The question that remains to be answered is: Which loop should be the outer loop?
We analyze the computational complexities of both approaches to determine in which settings each approach is optimal.

Before comparing each approach, we also make explicit an assumption that was already implicit in \autoref{theorem:6} through \autoref{theorem:14}: the constants $c_1$, $c_2$, and $c_3$ were defined as \emph{empirical, hardware-specific} constants, and nothing in the proofs of \autoref{theorem:7}, \autoref{theorem:9}, or \autoref{theorem:11} assumed a particular way of executing the sort and extraction operations, as only the \emph{number} of operations was counted.
We now make this dependence on the implementation explicit, and derive the total-cost formula for a data set of $n$ series for each of Quant's two natural implementation strategies.

The total cost of either strategy is linear (with an intercept, i.e., affine) in the number of series $n$:
$$
    \text{total cost} = \text{setup cost} + (\text{marginal cost} \times \text{number of series})
$$
As an analogy, one can think of baking cookies: there are constant (i.e., independent of the number of cookies) costs such as preheating the oven, and there are marginal (i.e., proportional to the number of cookies) costs such as making each cookie from the dough and placing it on the baking sheet. Here, a cookie is a single series, and we need to derive the setup cost (coming from the structure of the code) and the marginal cost (processing a single series) of each strategy. As we show below, the two strategies differ sharply in how much of their cost is genuinely a shared, amortizable setup cost.

To state the marginal cost of either strategy, it is convenient to name the sort-work quantity that was left as an anonymous bracket expression in \autoref{theorem:7}. We define:
$$
    \Sigma(l, d) := l \cdot \left[ \left( 2k + 2^{-k} \right) \cdot \log_2(l) - \left( k^2 + k - 2 + (k+2) \cdot 2^{-k} \right) \right] + \mathcal{O}\left( 2^k \right) + \mathcal{O}\left( k \cdot \log(l) \right)
$$
with $k = \min(d-1, \lfloor \log_2(l) \rfloor)$, so that \autoref{theorem:7} reads $T_s^r(l,d) = c_1 \cdot \Sigma(l,d)$. For each implementation strategy $v \in \{I, S\}$ (interval-outer and series-outer, defined below), we write $c_1^{(v)}, c_2^{(v)}, c_3^{(v)}$ for that implementation's own hardware-specific sort and extraction constants, and define its per-series total processing cost as
$$
    T^{r,(v)}(l, d, \nu) := c_1^{(v)} \cdot \Sigma(l,d) + c_2^{(v)} \cdot N_q^{>1}(l,d,\nu) + c_3^{(v)} \cdot W^{>1}(l,d)
$$
i.e., \autoref{lemma:1}'s single-series total cost, evaluated with implementation $v$'s own constants ($N_q^{>1}$ and $W^{>1}$ as in \autoref{theorem:5}, per \autoref{theorem:9}).

\subsection{Outer loop over the intervals}

In Quant's reference implementation, the outer loop is the loop over the intervals: for each of the $N_i(l,d)$ intervals, a single vectorized (NumPy) call either sorts and extracts quantiles from that interval (if its width is greater than $1$) or directly copies its single value (if its width is $1$), across all $n$ series at once. We refer to this strategy as \emph{interval-outer}, or version $(I)$.

\begin{theorem}[Total cost of the interval-outer implementation]
    \label{theorem:19}
    For any number of series $n \geq 1$, series length $l$, depth $d$, and quantile divisor $\nu$, the total wall-clock cost of processing the raw representation of $n$ series with the interval-outer implementation is:
    $$
        T^{r,(I)}(n, l, d, \nu) = N_i^{>1}(l,d) \cdot \kappa^{(I)}(l) + n \cdot T^{r,(I)}(l,d,\nu)
    $$
    with $N_i^{>1}(l,d)$ as in \autoref{theorem:5}, and $\kappa^{(I)}(l) \geq 0$ being the hardware-specific per-interval overhead (the Python loop-iteration cost plus the NumPy call-dispatch cost of one vectorized sort-and-extract call, in seconds), a positive constant for a given $l$.
\end{theorem}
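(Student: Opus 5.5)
The plan is to decompose the wall-clock time of the interval-outer strategy according to the structure of its outer loop, and to separate costs that are paid once per interval from costs that are paid once per (series, interval) pair. The outer loop has exactly $N_i(l,d)$ iterations (\autoref{theorem:1}). I will partition them into the $N_i^{(1)}(l,d)$ width-one intervals and the $N_i^{>1}(l,d)$ intervals of width greater than one, with counts taken from \autoref{theorem:5}. For the width-one intervals no sort or quantile call is issued, only a copy. I will argue that this copy is absorbed into the per-series accounting, or is negligible, consistently with how \autoref{theorem:5} already removed these intervals from $W^{>1}$ and $N_q^{>1}$. In effect, I am taking as a modelling convention that the per-interval overhead $\kappa^{(I)}(l)$ is charged only to the vectorized sort-and-extract calls.

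Next I will handle each of the $N_i^{>1}(l,d)$ non-trivial iterations. Its cost splits into two parts. The first is a fixed part, namely the Python iteration plus the NumPy dispatch, which does not depend on $n$ and which I denote $\kappa^{(I)}(l)$. It may depend on $l$ only through array shapes and strides. The second is a data-dependent part, which processes the $n$ rows of that interval. Because the vectorized call performs the same sort and the same interpolation independently on each row, its work is exactly $n$ copies of the single-series work for that interval. Under the sort-cost accounting $m\log_2 m$ and the per-quantile and per-element accounting of \autoref{theorem:9}, this is $c_1^{(I)} m_j\log_2 m_j + c_2^{(I)} n_q(m_j,\nu) + c_3^{(I)} m_j$ per row.

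Summing the data-dependent parts over all intervals of width greater than one, and exchanging the order of summation (intervals, then series), gives $n$ times the single-series sum. By \autoref{theorem:7} the single-series sort sum is $\Sigma(l,d)$. By \autoref{theorem:3} and \autoref{theorem:5} the quantile and width sums are $N_q^{>1}$ and $W^{>1}$. The single-series sum is therefore $T^{r,(I)}(l,d,\nu)$, as defined just before the theorem. Adding the fixed parts gives $N_i^{>1}(l,d)\,\kappa^{(I)}(l)$, and the two pieces together yield the stated affine formula.

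The main obstacle is justifying that the constants are genuinely additive. Concretely, I need to show that one batched call on $n$ rows costs $\kappa^{(I)}(l)$ plus $n$ times the per-row cost, with constants independent of $n$; cache effects and batching could violate this. I would treat this as the defining modelling assumption of the implementation-specific constants $c_i^{(I)}$ and $\kappa^{(I)}$, in the same spirit as the hardware-specific constants of \autoref{theorem:6} and \autoref{theorem:9}. I would state it explicitly, so that the proof reduces to the counting argument above.
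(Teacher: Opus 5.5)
Your proposal is correct and follows essentially the same argument as the paper. Both split the $N_i(l,d)$ outer-loop iterations into trivial and non-trivial ones, charge $\kappa^{(I)}(l)$ only to the $N_i^{>1}(l,d)$ vectorized sort-and-extract calls while treating the width-one copies as negligible, and use row-wise independence of each batched call to obtain $n$ times the single-series accounting of \autoref{theorem:7} and \autoref{theorem:9} with the implementation's own constants. Your explicit statement of the per-row additivity assumption spells out what the paper assumes implicitly when it says each row costs the same as a single length-$m$ array.
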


\begin{proof}
    See the proof of \autoref{theorem:19} on page~\pageref{proof:theorem:19} of Appendix~\ref{secA1}.
\end{proof}

\subsection{Outer loop over the series}

The alternative strategy makes the outer loop the loop over the $n$ series: for each series, a single compiled (Numba) call sorts and extracts quantiles from all $N_i(l,d)$ intervals of that series internally, with no further Python-level looping.
We refer to this strategy as \emph{series-outer}, or version $(S)$.

\begin{theorem}[Total cost of the series-outer implementation]
    \label{theorem:20}
    For any $n \geq 1$, $l$, $d$, $\nu$, the total wall-clock cost of processing the raw representation of $n$ series with the series-outer implementation is:
    $$
        T^{r,(S)}(n, l, d, \nu) = n \cdot \left[ \kappa^{(S)}(l) + T^{r,(S)}(l,d,\nu) \right]
    $$
    with $\kappa^{(S)}(l) \geq 0$ being the hardware-specific per-series call-dispatch overhead (in seconds) of invoking the compiled kernel at series length $l$, a positive constant for a given $l$. Unlike $\kappa^{(I)}(l)$, which counts a dispatch call issued once per non-trivial \emph{interval}, $\kappa^{(S)}(l)$ counts a dispatch call issued once per \emph{series}.
    We do not assume, a priori, that this quantity is independent of $l$ (see \autoref{remark:kappa-S-l-dependence} below).
\end{theorem}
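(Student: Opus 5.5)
The proof is an accounting argument over the execution trace of the series-outer implementation, mirroring the one behind \autoref{theorem:19}. I would write the implementation as a single Python-level loop over the $n$ series. Each iteration issues exactly one call to the compiled Numba kernel on a length-$l$ series. The kernel then loops internally over all $N_i(l,d)$ intervals: for each interval of width greater than $1$ it sorts and extracts quantiles, and for each width-$1$ interval it copies the value. The total wall-clock cost is therefore the sum over the $n$ iterations of the per-iteration cost. There is no work outside this loop apart from allocating the output array, which I would fold into the per-call overhead or treat as negligible. In particular, unlike \autoref{theorem:19}, there is no Python-level iteration over intervals, so no term proportional to $N_i^{>1}(l,d)$ stands outside the factor $n$.

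Next I would decompose a single iteration into two parts. The first is a fixed dispatch part: the Python loop step, argument boxing and unboxing, and the call into compiled code. Its duration depends only on the call structure and on the size of the array passed, hence possibly on $l$ but not on the data values. This defines $\kappa^{(S)}(l) \geq 0$. The second part is the compiled work inside the kernel. I would then invoke the operation-counting argument already used for \autoref{theorem:7}, \autoref{theorem:9} and \autoref{lemma:1}. As the paper notes, those proofs count only the \emph{number} of sort and extraction operations and are agnostic to how they are executed.

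Applied inside the kernel, that argument gives $c_1^{(S)}\Sigma(l,d)$ for sorting and $c_2^{(S)} N_q^{>1}(l,d,\nu) + c_3^{(S)} W^{>1}(l,d)$ for extraction. Here the width-$1$ intervals are excluded exactly as in \autoref{theorem:5}, because their copies are absorbed into the intercept (or are negligible). By definition these terms sum to $T^{r,(S)}(l,d,\nu)$. Since every series has the same length $l$, every iteration has the same interval structure, and summing $n$ identical iteration costs gives $n\left[\kappa^{(S)}(l) + T^{r,(S)}(l,d,\nu)\right]$.

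The main obstacle is justifying that the per-series overhead is a single additive constant independent of the data, and that the in-kernel costs really are governed by constants $c_j^{(S)}$ that may differ from $c_j^{(I)}$. This rests on modelling choices rather than derivation. I would make it rigorous by stating explicitly, as the definition of the cost model, that a compiled call costs an affine function of the operation counts, with an intercept depending only on the argument shape. That intercept is precisely why $\kappa^{(S)}$ is allowed to depend on $l$. With this model the identity is immediate, and the remaining check is that no interval is processed twice and none is skipped, which follows from the kernel iterating once over the interval list underlying \autoref{theorem:1}.
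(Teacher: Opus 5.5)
Your proposal is correct and follows essentially the same argument as the paper: $n$ outer iterations, each paying one compiled-call dispatch overhead $\kappa^{(S)}(l)$ (allowed to depend on $l$ through marshalling an $l$-dependent amount of data) plus the in-kernel sort-and-extract work, which the accounting of \autoref{theorem:7} and \autoref{theorem:9} prices at $T^{r,(S)}(l,d,\nu)$ with the implementation's own constants, then summed over the $n$ identical series. Your explicit affine cost model, and your remarks on output allocation and on each interval being processed exactly once, only spell out modelling assumptions the paper leaves implicit; they do not change the route.
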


\begin{proof}
    See the proof of \autoref{theorem:20} on page~\pageref{proof:theorem:20} of Appendix~\ref{secA1}.
\end{proof}

\begin{remark}
    Unlike $T^{r,(I)}$, the cost $T^{r,(S)}(n,l,d,\nu)$ is exactly proportional to $n$: the series-outer implementation has no analogue of the interval-outer implementation's $n$-independent setup term $N_i(l,d) \cdot \kappa^{(I)}(l)$, because every series requires its own call-dispatch overhead $\kappa^{(S)}(l)$. There is no cost shared, and thus amortizable, across series.
    Conversely, the interval-outer implementation amortizes its per-interval dispatch overhead across all $n$ series at once, at the cost of paying for $N_i(l,d)$ such dispatches regardless of how small $n$ is.
\end{remark}

\begin{remark}[$\kappa^{(S)}(l)$ is not, empirically, $l$-independent]
    \label{remark:kappa-S-l-dependence}
    During our initial experiments, we naturally treated $\kappa^{(S)}$ as a single constant, independent of $l$ and $d$, on the grounds that it counts the dispatch overhead of a single compiled function call, unlike $\kappa^{(I)}(l)$, which is paid once per interval and therefore has an obvious reason to scale with $l$.
    Fitting $\kappa^{(S)}$ this way, however, leaves a residual (observed minus predicted processing cost, divided out over the four representations) that grows smoothly and monotonically with $l$, by roughly two orders of magnitude across the fitting grid, with $r^2 \geq 0.999$ at every individual $l$ (thus a real trend, not measurement noise).
    We therefore model $\kappa^{(S)}(l)$ as depending on $l$, fit with the same $A - B/l$ closed form used for $\kappa^{(I)}(l)$ (\autoref{sec7} and \autoref{sec8}).
    The fit's own $r^2$ against this functional form is low, consistent with a small, close-to-constant residual rather than a strong trend of its own.
    This is different from the approximate algorithm's case, discussed in \autoref{remark:kappa-S-tilde-l-dependence}, where the dependence on $l$ is considerably stronger and this same functional form does not fit well.
    A plausible source of this dependence is that invoking the compiled kernel is not, in practice, a fixed-cost operation: it involves marshalling an $l$-dependent volume of data (the input series and the output feature array) across the Python/compiled-code boundary, a cost that the idealized ``one dispatch, fixed overhead'' argument of \autoref{theorem:20}'s proof does not account for.
    As with $\kappa^{(I)}(l)$'s own closed-form fit, we caution against extrapolating $\kappa^{(S)}(l)$ far outside the interior of the series-length grid that it is fit on: a held-out check of this style of fit shows degraded predictive accuracy near and beyond the grid's boundary.
\end{remark}

\subsection{Comparing both approaches}

\begin{theorem}[Crossover sample size, raw representation]
    \label{theorem:21}
    For the raw representation (\autoref{sec4}) and any $l$, $d$, $\nu$, define the per-series marginal costs $B_I(l,d,\nu) := T^{r,(I)}(l,d,\nu)$ and $B_S(l,d,\nu) := \kappa^{(S)}(l) + T^{r,(S)}(l,d,\nu)$, and the interval-outer setup cost $A_I(l,d) := N_i^{>1}(l,d) \cdot \kappa^{(I)}(l)$ (\autoref{theorem:5}).
    \begin{itemize}
        \item If $B_S(l,d,\nu) \leq B_I(l,d,\nu)$, then $T^{r,(S)}(n,l,d,\nu) \leq T^{r,(I)}(n,l,d,\nu)$ for every $n \geq 1$: the series-outer implementation is at least as fast for every sample size.
        \item If $B_S(l,d,\nu) > B_I(l,d,\nu)$, define the crossover sample size
        $$
            n^*(l,d,\nu) := \frac{A_I(l,d)}{B_S(l,d,\nu) - B_I(l,d,\nu)}
        $$
        Then $T^{r,(S)}(n,l,d,\nu) < T^{r,(I)}(n,l,d,\nu)$ for $n < n^*(l,d,\nu)$, and $T^{r,(I)}(n,l,d,\nu) < T^{r,(S)}(n,l,d,\nu)$ for $n > n^*(l,d,\nu)$.
    \end{itemize}
    In either case, the interval-outer implementation is only worth using once $n$ exceeds a threshold that grows with the number of non-trivial intervals $N_i^{>1}(l,d)$ (hence, other things equal, with $l$).
    For a small enough data set, the series-outer implementation is never worse.
\end{theorem}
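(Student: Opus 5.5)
The plan is to work entirely from the two affine cost formulas already established in \autoref{theorem:19} and \autoref{theorem:20}, rewritten in the notation of the statement. By \autoref{theorem:19}, $T^{r,(I)}(n,l,d,\nu) = A_I(l,d) + n \cdot B_I(l,d,\nu)$. By \autoref{theorem:20}, $T^{r,(S)}(n,l,d,\nu) = n \cdot B_S(l,d,\nu)$. The whole comparison then reduces to the sign of the difference
$$
    D(n) := T^{r,(I)}(n,l,d,\nu) - T^{r,(S)}(n,l,d,\nu) = A_I(l,d) - n \cdot \left( B_S(l,d,\nu) - B_I(l,d,\nu) \right),
$$
which is an affine function of $n$ with intercept $A_I(l,d) \geq 0$. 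The intercept is non-negative because $N_i^{>1}(l,d) \geq 0$ is a count (\autoref{theorem:5}) and $\kappa^{(I)}(l) \geq 0$ by \autoref{theorem:19}.

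For the first case, assume $B_S \leq B_I$. Then the slope term $-n(B_S - B_I)$ is non-negative for every $n \geq 1$. Hence $D(n) \geq A_I \geq 0$, which gives $T^{r,(S)} \leq T^{r,(I)}$ for all $n \geq 1$.

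For the second case, assume $B_S > B_I$. Then $D$ is strictly decreasing in $n$ and vanishes exactly at $n = A_I/(B_S - B_I) = n^*$. So $D(n) > 0$ for $n < n^*$ and $D(n) < 0$ for $n > n^*$, which is the claimed strict ordering on each side. Here $n$ is treated as a real variable, and the integer statement follows by restriction.

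The closing qualitative claims follow from the form of $n^*$. In the second case $n^*$ is proportional to $A_I = N_i^{>1}(l,d) \cdot \kappa^{(I)}(l)$, so it grows with $N_i^{>1}$ when the denominator is held fixed. In the first case the threshold is effectively infinite. I would cite \autoref{theorem:1} and \autoref{theorem:5} for the fact that $N_i^{>1}$ is nondecreasing in $l$, in the sense of ``other things equal''. The phrase ``for a small enough data set, the series-outer implementation is never worse'' is the statement that $D(n) > 0$ for $n < n^*$, or for all $n$ in the first case. This also covers the degenerate edge case $A_I = 0$ with $B_S = B_I$, where $D \equiv 0$ and the two strategies tie, so neither is worse.

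The only real obstacle is interpretive rather than computational. The ``grows with $l$'' claim is not a rigorous monotonicity statement, since $B_S - B_I$ also depends on $l$. I would therefore phrase it explicitly as a statement about the numerator, with the denominator held fixed, and not attempt to prove monotonicity of $n^*$ in $l$.
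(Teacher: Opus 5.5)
Your proof is correct and is essentially the paper's. Both write $T^{r,(I)} = A_I + B_I n$ and $T^{r,(S)} = B_S n$ from \autoref{theorem:19} and \autoref{theorem:20}, then read both cases off the affine difference $A_I - (B_S - B_I)\, n$ with $A_I \geq 0$; the paper phrases the first case as the chain $B_S n \leq B_I n \leq A_I + B_I n$.

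The paper's proof does not address the closing qualitative sentences, so your discussion of them goes beyond it. Note, though, that ``series-outer is never worse for a small enough data set'' holds for no $n \geq 1$ whenever $n^* \le 1$. This includes the case $A_I = 0$ with $B_S > B_I$, where $n^* = 0$, which your edge-case remark does not cover.
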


\begin{proof}
    See the proof of \autoref{theorem:21} on page~\pageref{proof:theorem:21} of Appendix~\ref{secA1}.
\end{proof}

\begin{remark}
    \autoref{theorem:21} formalizes this section's motivating question, for the raw representation in isolation.
    Whether the series-outer implementation is preferable for \emph{every} sample size, or only up to a crossover $n^*(l,d,\nu)$, is itself an empirical question.
    It depends on the sign of $B_S(l,d,\nu) - B_I(l,d,\nu)$, i.e., on which implementation has the lower per-series marginal cost once its own call-dispatch overhead is included.
    We estimate $\kappa^{(I)}(l)$, $\kappa^{(S)}(l)$, and the version-specific $c_1^{(v)}, c_2^{(v)}, c_3^{(v)}$ empirically in \autoref{sec7} and \autoref{sec8}, and use \autoref{theorem:21} to determine, for realistic values of $l$ and $\nu$, which strategy Quant should use as a function of the number of series $n$, and whether short series are better served by one strategy regardless of $n$.

    In practice, however, Quant processes four representations per series (\autoref{sec4}), not the raw representation alone, and \autoref{theorem:21} is deliberately not generalized to that setting the way \autoref{theorem:33} generalizes its approximate-algorithm counterpart: composing a four-representation crossover from \autoref{theorem:21} would require, for each representation $p$, its own separately-fitted $\kappa^{(I)}(l_p(l))$, and an $l$-dependent quantity such as $\kappa^{(I)}(\cdot)$ is comparatively hard to estimate reliably from a finite series-length grid, and unsafe to extrapolate beyond it.
    Our reference implementation's actual dispatch rule sidesteps this: rather than composing $\kappa^{(I)}(l_p(l))$ representation by representation, it estimates a single, pooled, effectively $l$-independent overhead directly from the wall-clock time of the full four-representation call, fitted directly against measured runtimes rather than assembled from the theorem's own per-operation constants.
    \autoref{theorem:21}'s crossover therefore serves in this article as an independent theoretical cross-check of that practical heuristic's soundness, not as the formula the heuristic itself evaluates.
    This pooled overhead is fit once per (mode, depth, div) triple, at the fixed depth and quantile divisor used throughout this article's experiments (\autoref{sec7}).
    It is not portable to a differently configured instance without being refit, and, since \autoref{remark:kappa-S-l-dependence} shows the underlying per-series overhead genuinely varies with $l$, pooling it into a single scalar is itself a simplification of the same kind \autoref{theorem:21} avoids by keeping $\kappa^{(I)}(l)$ explicit in $l$.
    This is an intentional, practical trade-off, made for the reasons given above, and not an oversight.
\end{remark}

\section{Making an approximate version of Quant for long series using moment-based quantiles}\label{sec6}

\autoref{sec3} identified sorting as the dominant cost for long series, and \autoref{theorem:8} confirmed it: the total sort cost is $\Theta(d \cdot l \cdot \log(l))$ in the saturated regime, strictly worse than linear in $l$.
\autoref{sec5} addressed this without changing what Quant computes, only how the computation is dispatched.
In this section, we instead change the computation itself: we propose an approximate variant of Quant that replaces each interval's exact order statistics, which require sorting, with quantile \emph{estimates} obtained from that interval's first four moments via the Cornish-Fisher expansion, computed in a single linear pass with no sorting at all.
This trades exactness for asymptotic speed, and is intended specifically for long series, where \autoref{theorem:8}'s extra $\log(l)$ factor is most costly and where a single long series already carries enough information for a moment-based approximation to be reasonable.
Whether the resulting approximation error is acceptable in practice is an empirical question addressed in \autoref{sec7} and \autoref{sec8}, not in this section.

\subsection{Background: moment-based quantile approximation}\label{sec6.1}

For a sample of $m$ real values, define its raw central moments $M_2 = \sum_i (x_i - \bar x)^2$, $M_3 = \sum_i (x_i - \bar x)^3$, and $M_4 = \sum_i (x_i - \bar x)^4$, where $\bar x$ is the sample mean.
These, together with $\bar x$ itself and $m$, can be computed in a single pass over the $m$ values using the Welford-Pebay online update.
Maintaining running values of $m$, $\bar x$, $M_2$, $M_3$, $M_4$, each new observation updates all five quantities using only the previous running values and the new observation, with a fixed (i.e., independent of $m$) number of arithmetic operations per update.
Unlike sorting, which requires the whole interval to be available at once and whose cost per element grows with the interval's size (an interval of size $m$ costs $m \cdot \log_2(m)$, i.e. $\log_2(m)$ per element, under the accounting used in \autoref{sec4}), this online update touches each element exactly once and performs the same constant amount of work regardless of $m$ or of how many elements have already been processed.
We additionally track the running minimum and maximum alongside the four moments, at the negligible extra cost of two comparisons per element.

From $(m, M_2, M_3, M_4)$, the sample variance, skewness, and excess kurtosis are obtained via:
$$
    \text{var} = \frac{M_2}{m} \qquad\qquad \text{skew} = \frac{M_3 / m}{\text{std}^3} \qquad\qquad \text{exkurt} = \frac{M_4 / m}{\text{var}^2} - 3
$$
with $\text{std} = \sqrt{\text{var}}$, each a fixed number of arithmetic operations, independent of $m$.

The Cornish-Fisher expansion, introduced in \citep{cornishMomentsCumulantsSpecification1938} and given its now-standard explicit form up to the fourth cumulant in \citep{fisherPercentilePointsDistributions1960}, approximates the quantile at probability $q \in (0, 1)$ of a distribution from estimates of its mean, variance, skewness, and excess kurtosis, by adjusting the corresponding standard normal quantile for the distribution's departure from normality.
Writing $z = \Phi^{-1}(q)$ for the standard normal quantile function evaluated at $q$ (itself computed via a fixed-cost rational approximation, independent of $m$ or of $q$'s specific value beyond selecting which of a small, constant number of branches of the approximation to evaluate), the Cornish-Fisher approximation used in this study is:
$$
    w(z) := z + \frac{z^2 - 1}{6} \cdot \text{skew} + \frac{z^3 - 3z}{24} \cdot \text{exkurt} - \frac{2z^3 - 5z}{36} \cdot \text{skew}^2
$$
$$
    \widehat{Q}(q) := \text{mean} + \text{std} \cdot w(z)
$$
Both $w(z)$ and $\widehat Q(q)$ are, like $\Phi^{-1}(q)$ itself, a fixed number of arithmetic operations, independent of $m$: once the moments of an interval are known, approximating any one quantile from them costs the same, regardless of how large the interval is.

The expansion is undefined at the boundary probabilities $q = 0$ and $q = 1$ (where $\Phi^{-1}$ is infinite), so Quant's exact minimum and maximum are used directly at these two positions instead, whenever they are requested.
They are both already available at no extra asymptotic cost, since they are computed in the same linear pass as the moments.
Only the strictly interior quantile positions of an interval's requested layout (i.e., every requested quantile except these two boundary ones, when present) go through the Cornish-Fisher expansion.

Both $w(z)$ and $\widehat Q(q)$ above are obtained by formally inverting an Edgeworth expansion of the distribution function, so the expansion's classical error theory is inherited from Edgeworth expansion theory.
\citet{hillGeneralizedAsymptoticExpansions1968} generalized the Cornish-Fisher expansion to arbitrary order and to limiting distributions other than the normal, and gave the asymptotic order of the truncation error as a function of the number of cumulant terms retained.
This fourth-moment expansion such as the one above is, in this classical framework, accurate up to a residual term that vanishes as the underlying distribution approaches normality (equivalently, as its standardized skewness and excess kurtosis approach zero).
This classical theory assumes the target distribution sits in a sequence approaching normality (as in a Central Limit Theorem refinement), which is not guaranteed to hold for the empirical distribution of an arbitrary interval's raw values, as used here.
\citet{ulyanovNonAsymptoticResultsCornishFisher2016} instead derive genuinely non-asymptotic error bounds for the Cornish-Fisher expansion, though still under regularity conditions on the target distribution (e.g., a bounded density and finite higher moments) that we neither verify nor assume hold for every interval in this study.

A further, well-documented deficiency of the truncated Cornish-Fisher expansion, independent of the error-magnitude questions above, is that the approximate quantile $\widehat Q(q)$ is not guaranteed to be monotonically increasing in $q$ for every combination of skewness and excess kurtosis.
Monotonicity only holds within a bounded region of the (skewness, excess kurtosis) plane known as the expansion's \emph{domain of validity} \citep{jaschkeCornishFisherExpansionContext2002, maillardUsersGuideCornishFisher2012}.
Using the raw sample skewness and excess kurtosis of a highly non-normal interval can, in principle, fall outside this region and yield a non-monotonic (hence invalid, as a quantile function) approximation on that interval.
\citet{maillardUsersGuideCornishFisher2012} additionally warns against a related, purely practical pitfall: for small samples, the skewness and excess kurtosis \emph{parameters} plugged into $w(z)$ are themselves only estimates of the interval's true skewness and excess kurtosis, and this estimation error compounds with the expansion's own truncation error.
We do not check membership in the domain of validity per interval in this study, and instead treat the resulting approximation error, including any contribution from non-monotonicity, as an empirical question, addressed alongside runtime in \autoref{sec7} and \autoref{sec8}.
Indeed, doing so would add a per-interval cost beyond the fixed number of arithmetic operations counted in \autoref{sec6.2}.
\citet{amedeemanesmeComputationCorrectedCornishFisher2019} propose a corrected variant that re-estimates an effective skewness and excess kurtosis via a response-surface method specifically to stay within the domain of validity and reduce this source of error.
We use the direct, uncorrected expansion of \autoref{sec6.1} throughout this study for its lower, still $O(1)$-per-quantile cost, and leave adopting such a correction to future work.

\subsection{Complexity of processing a single series}\label{sec6.2}

The interval structure Quant builds, and the set of quantile positions requested from each interval, do not depend on how each interval's quantiles are actually computed: \autoref{theorem:1} ($N_i(l,d)$, the number of intervals), \autoref{theorem:2} ($W(l,d)$, the total width), and \autoref{theorem:3} ($N_q(l,d,\nu)$, the total number of quantiles) therefore apply unchanged to the approximate algorithm of this section.
What changes is only how each interval's contribution to the total cost is computed: a linear moment pass instead of a sort, and a Cornish-Fisher evaluation instead of an interpolation from sorted values.
We mark every cost quantity specific to the approximate algorithm with a tilde, to distinguish it from its exact-algorithm counterpart of \autoref{sec4} (e.g. $\widetilde T^r$ versus $T^r$), and introduce three new hardware-specific constants, $\tilde c_1, \tilde c_2, \tilde c_3$, playing the same role for the approximate algorithm that $c_1, c_2, c_3$ played for the exact one.

\subsubsection{Total moment computation cost}

\begin{theorem}[Total moment computation cost]
    \label{theorem:22}
    For any series length $l$ and depth $d$, under the moment-cost accounting where an interval of size $m$ costs exactly $m$ (i.e., $\log_2(m)$ replaced by $1$, relative to the sort-cost accounting of \autoref{theorem:6}), the total cost of computing the moments of all $N_i(l,d)$ intervals that Quant builds for the raw representation of one series is exactly:
    $$
        \widetilde T_m^r(l, d) := \tilde c_1 \cdot W(l, d)
    $$
    with $\tilde c_1$ being the empirical time-per-element moment-update constant (in seconds/element), positive and hardware-specific.
\end{theorem}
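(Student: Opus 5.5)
The proof reduces to a direct summation over intervals, combined with \autoref{theorem:2}. Enumerate all base and shifted intervals that Quant builds for the raw representation, indexed by $j \in \{1, \ldots, N_i(l,d)\}$ with widths $m_j$. The set of intervals and their widths are the same as in \autoref{sec4}: the interval construction does not depend on how each interval is summarized, as argued at the start of \autoref{sec6.2}. Under the stated moment-cost accounting, interval $j$ contributes exactly $\tilde c_1 \cdot m_j$. The justification comes from \autoref{sec6.1}. The Welford-Pebay update touches each of the $m_j$ values exactly once, with a fixed number of arithmetic operations per element, independent of $m_j$ and of the position of the element in the pass. The running minimum and maximum add only a constant number of comparisons per element, so they are absorbed into the per-element constant $\tilde c_1$.

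Summing these contributions gives $\widetilde T_m^r(l,d) = \sum_j \tilde c_1 \cdot m_j = \tilde c_1 \cdot \sum_j m_j$. The remaining sum is, by definition, the total width $W(l,d)$ of \autoref{theorem:2}, which is exactly the sum of the widths of all the base and shifted intervals. This yields the stated identity, with no $\mathcal{O}(\cdot)$ correction terms. Unlike \autoref{theorem:7}, there is no nonlinearity $m \log_2(m)$ to linearize around $l \cdot 2^{-r}$. The unequal interval lengths within a level, differing by at most one, are therefore absorbed exactly, since only the sum of the widths matters.

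The step I expect to need the most care is the treatment of width-$1$ intervals. For sorting and extraction, \autoref{theorem:5} removed them, leading to $W^{>1}$ and $N_i^{>1}$. Here, the theorem is stated with $W(l,d)$ and not $W^{>1}(l,d)$. I would therefore argue that a width-$1$ interval still costs exactly one unit under this accounting: its single value is read once to initialize the mean, minimum and maximum, with vanishing central moments. Consistency with the accounting "an interval of size $m$ costs exactly $m$" then requires including these intervals, with no subtraction. Finally, I would note that the constant $\tilde c_1$ is positive, and that it is hardware-specific exactly as $c_1$ is in \autoref{theorem:6}, since only the operation count is being established.
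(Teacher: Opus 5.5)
Your proposal is correct and follows the paper's own proof. Both argue that a single Welford--Pebay pass charges each interval exactly $\tilde c_1 \cdot m_j$, and that summing over all intervals with $\sum_j m_j = W(l,d)$ from \autoref{theorem:2} yields the exact identity. Your decision to keep width-$1$ intervals in the inclusive $W(l,d)$ rather than $W^{>1}(l,d)$ also matches the paper, which treats the moment pass as covering every interval, trivial or not.
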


\begin{proof}
    See the proof of \autoref{theorem:22} on page~\pageref{proof:theorem:22} of Appendix~\ref{secA1}.
\end{proof}

\begin{remark}
    Unlike \autoref{theorem:7}, \autoref{theorem:22} needs no separate treatment of the case where $l$ is not exactly divisible by $2^k$: the moment-cost accounting is exactly linear in each interval's width, with no analogue of sorting's convexity (the $f(m) = m \log_2(m)$ used in \autoref{theorem:7}'s proof is strictly convex, which is precisely what made rounding the interval widths to integers costly to bound.
    The linear cost model of this section has no such curvature, so \autoref{theorem:2}'s exact closed form for $W(l,d)$ transfers to $\widetilde T_m^r(l,d)$ without any correction term at all).
\end{remark}

\begin{theorem}[Computational complexity of the total moment computation]
    \label{theorem:23}
    For any series length $l$ and any depth $d$, define $e = \min(d, \lfloor \log_2(l) \rfloor + 1)$.
    Under the moment-cost accounting of \autoref{theorem:22}, the computational complexity of computing the moments of all the intervals that Quant builds for one series is:
    $$
        \widetilde T_m^r(l,d) = \Theta(e \cdot l)
    $$
    Equivalently, splitting by which term wins in the minimum in the formula of $e$:
    $$
        \widetilde T_m^r(l,d) = \begin{cases}
            \Theta(d \cdot l) & \textnormal{if } l \geq 2^{d-1} \text{ (saturated regime)}\\
            \Theta(l \cdot \log(l)) & \textnormal{if } l < 2^{d-1} \text{ (unsaturated regime)}
        \end{cases}
    $$
\end{theorem}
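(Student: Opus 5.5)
The plan is to reduce the statement directly to \autoref{theorem:22} and \autoref{theorem:2}. By \autoref{theorem:22}, $\widetilde T_m^r(l,d) = \tilde c_1 \cdot W(l,d)$ with $\tilde c_1 > 0$ a constant, so $\widetilde T_m^r(l,d) = \Theta(W(l,d))$. \autoref{theorem:2} already gives $W(l,d) = \Theta(k \cdot l)$ with $k = \min(d-1, \lfloor \log_2(l) \rfloor) = e - 1$. So the core of the argument is to pass from $\Theta(k \cdot l)$ to $\Theta(e \cdot l)$, and then to split the minimum defining $e$.

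For the first step, I would first make the bound $W(l,d) = \Theta(k \cdot l)$ explicit and check that it does not degenerate at $k = 0$. When $k = 0$ (i.e., $d = 1$ or $l = 1$), only the whole series is used, so $W = l$, which is consistent with $\Theta(e \cdot l)$ since $e = 1$. This is why $e = k+1$ is the right quantity rather than $k$. For $k \geq 1$, I would bound $W$ from both sides using the closed form of \autoref{theorem:2}.
\begin{itemize}
    \item \emph{Upper bound.} Each ceiling correction lies in $[0,1)$, so $W(l,d) \leq (2k+1) \cdot l \leq 2e \cdot l$.
    \item \emph{Lower bound.} In the first case, each of the at most $k$ correction terms is below $1$, and $k \leq \log_2(l) \leq l$, so $W(l,d) \geq 2k \cdot l - k \geq (2k-1) \cdot l \geq k \cdot l \geq \tfrac{e}{2} \cdot l$. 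The second case is handled the same way: $(2k - 1) \cdot l - (k-1) \geq (k-1) \cdot l$ for $k \geq 2$, and $W \geq l$ trivially when $k = 1$. A cleaner alternative is the direct combinatorial observation that the base intervals of each of the $e$ levels partition the series, so $W(l,d) \geq e \cdot l$ without any computation.
\end{itemize}
Combining the two bounds gives $e \cdot l \leq W(l,d) \leq 2e \cdot l$ uniformly in $l$ and $d$, hence $\widetilde T_m^r(l,d) = \Theta(e \cdot l)$.

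For the regime split, if $l \geq 2^{d-1}$ then $\lfloor \log_2(l) \rfloor \geq d-1$, so $e = d$ and the cost is $\Theta(d \cdot l)$. If $l < 2^{d-1}$, then $e = \lfloor \log_2(l) \rfloor + 1$, which lies between $\log_2(l)$ and $\log_2(l)+1$. This gives $\Theta(\log(l))$ for $l \geq 2$, and the case $l = 1$ is trivial with constant cost.

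The main obstacle is keeping the constants uniform in both $l$ and $d$, so that the $\Theta$ holds jointly, as in \autoref{theorem:10}. The combinatorial partition argument for the lower bound makes this straightforward. The only delicate point is the tiny-$l$ edge, where $\log(l)$ vanishes; I would handle it by adopting the same convention as \autoref{theorem:8} and \autoref{theorem:10}.
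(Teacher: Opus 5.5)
Your proposal is correct and follows the paper's proof. Both write $\widetilde T_m^r(l,d) = \tilde c_1 \cdot W(l,d)$ via \autoref{theorem:22}, invoke $W(l,d) = \Theta(k \cdot l)$ from \autoref{theorem:2} with $k = e - 1$, and split on which term attains the minimum in $e$. Your extra care is a gain in rigor, not a different route: the paper simply asserts $\Theta(k \cdot l) = \Theta(e \cdot l)$, which fails literally at $k = 0$ (there $W = l$ while $k \cdot l = 0$). Your partition-based bound $e \cdot l \leq W(l,d) \leq 2e \cdot l$ holds uniformly in $l$ and $d$, including at $k = 0$.
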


\begin{proof}
    See the proof of \autoref{theorem:23} on page~\pageref{proof:theorem:23} of Appendix~\ref{secA1}.
\end{proof}

\subsubsection{Total extraction cost}

\begin{theorem}[Total extraction cost, Cornish-Fisher]
    \label{theorem:24}
    The total extraction cost for the raw representation of a single series of length $l$, under the approximate (moment-based) algorithm, denoted by $\widetilde T_e^r(l,d,\nu)$, is:
    $$
        \widetilde T_e^r(l,d,\nu) := \tilde c_2 \cdot N_q^{>1}(l,d,\nu) + \tilde c_3 \cdot N_i(l,d)
    $$
    with $N_q^{>1}$ as in \autoref{theorem:5}, $\tilde c_2$ being the empirical per-quantile Cornish-Fisher evaluation cost (in seconds/quantile), and $\tilde c_3$ being the empirical per-interval cost of converting an interval's raw moments into (variance, skewness, excess kurtosis) (in seconds/interval), both constants being positive and hardware-specific.
\end{theorem}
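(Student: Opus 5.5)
The plan is to argue exactly as for \autoref{theorem:9}, by charging each interval's extraction work to one of two buckets: a per-interval fixed cost and a per-quantile cost. I will go interval by interval over all $N_i(l,d)$ intervals of \autoref{theorem:1}. Once the moment pass of \autoref{theorem:22} has produced $(m, \bar x, M_2, M_3, M_4, \min, \max)$ for an interval, the remaining extraction work for that interval is twofold. First, it converts the raw moments into $(\text{var}, \text{skew}, \text{exkurt})$ via the formulas of \autoref{sec6.1}. This is a fixed number of arithmetic operations, independent of $m$, so it contributes exactly one unit of $\tilde c_3$ per interval. Second, it produces the requested quantiles.

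For the second part, I split on the three cases listed in \autoref{sec2}. For each non-trivial interval ($m>1$), every requested quantile is either a boundary one, read off the stored minimum or maximum, or an interior one. An interior quantile costs one evaluation of $\Phi^{-1}$, of $w(z)$ and of $\widehat Q(q)$, plus possibly one subtraction of the mean for centering. All of these are $O(1)$ and independent of $m$ and of $q$, as argued in \autoref{sec6.1}. Each such quantile is therefore charged one unit of $\tilde c_2$, with the cheaper boundary copies absorbed into the same empirical constant. Summing over the non-trivial intervals and using the count $n_q(m,\nu)$ from \autoref{theorem:3}'s proof, together with the restriction of \autoref{theorem:5}, gives $\tilde c_2 \cdot N_q^{>1}(l,d,\nu)$.

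For width-one intervals, the single value is copied. No Cornish-Fisher evaluation takes place, so nothing is charged to $\tilde c_2$. This is why $N_q^{>1}$ appears rather than $N_q$, exactly as in \autoref{theorem:9}.

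The step I expect to be delicate is justifying why the per-interval term uses $N_i(l,d)$ rather than $N_i^{>1}(l,d)$. Unlike the exact algorithm, where no per-element extraction work happens on width-one intervals, the approximate implementation still performs the fixed moment-to-statistics conversion uniformly on every interval. For $m=1$ this yields a degenerate variance of zero, computed but unused, so I will argue that the conversion is applied unconditionally. If that modelling choice is in doubt, the fallback is to note that the difference is $\tilde c_3 \cdot N_i^{(1)}(l,d)$, a term of the same form that can be folded into the constant.

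Finally, I will add the two contributions. Additivity across intervals gives the stated formula, and because no term depends on $m$ beyond the quantile count, no rounding or convexity correction (as in \autoref{theorem:7}) is needed.
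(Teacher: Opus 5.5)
Your proposal is correct and follows essentially the same route as the paper's proof. Each interval is charged $\tilde c_3$ for the moments-to-(variance, skewness, excess kurtosis) conversion because it is applied unconditionally (the paper gives the same reason: one element-wise pass over all $N_i(l,d)$ intervals with no branch skipping trivial ones), plus $\tilde c_2$ per requested quantile on non-trivial intervals only, with boundary positions absorbed into $\tilde c_2$ as an average cost.

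Drop your fallback, though: $\tilde c_3 \cdot N_i^{(1)}(l,d)$ cannot be folded into the constant multiplying $N_i(l,d)$, because $N_i^{(1)}(l,d)/N_i(l,d)$ depends on $l$ (it vanishes for $l \geq 2^d$ but not below). So the use of $N_i$ rather than $N_i^{>1}$ rests entirely on the unconditional-conversion argument.
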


\begin{proof}
    See the proof of \autoref{theorem:24} on page~\pageref{proof:theorem:24} of Appendix~\ref{secA1}.
\end{proof}

\begin{remark}[$\tilde c_2$ as an average cost]
    Every one of the $N_q^{>1}(l,d,\nu)$ quantile positions is charged the same rate $\tilde c_2$ in \autoref{theorem:24}, even though the (at most two, per non-trivial interval) boundary positions $q=0$ and $q=1$ use the interval's already-known exact minimum/maximum rather than a genuine Cornish-Fisher evaluation (\autoref{sec6.1}), and are therefore strictly cheaper.
    $\tilde c_2$ is best read as an average per-quantile cost over this mix.
    Since the number of boundary positions is at most $2 \cdot N_i^{>1}(l,d) = \mathcal{O}(l)$, while $N_q^{>1}(l,d,\nu) = \Theta(l)$ or larger (\autoref{theorem:10}'s proof), this simplification changes the constant $\tilde c_2$ but not the asymptotic complexity derived in \autoref{theorem:25} below.
\end{remark}

\begin{theorem}[Computational complexity of the total extraction, Cornish-Fisher]
    \label{theorem:25}
    For any series length $l$, any depth $d$, define $e = \min(d, \lfloor \log_2(l) \rfloor + 1)$.
    For any positive integer $\nu$ (even as a function of $l$ and $d$), the computational complexity of the total (approximate) extraction cost for one series does not depend on $\nu$ and is equal to:
    $$
        \widetilde T_e^r(l,d,\nu) = \Theta(e \cdot l)
    $$
    Equivalently, splitting by which term wins in the minimum in the formula of $e$:
    $$
        \widetilde T_e^r(l,d,\nu) = \begin{cases}
            \Theta(d \cdot l) & \textnormal{if } l \geq 2^{d-1} \text{ (saturated regime)}\\
            \Theta(l \cdot \log(l)) & \textnormal{if } l < 2^{d-1} \text{ (unsaturated regime)}
        \end{cases}
    $$
\end{theorem}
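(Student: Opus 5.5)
The plan is to bound $\widetilde T_e^r(l,d,\nu)$ from above and from below by constant multiples of $e \cdot l$, using only the closed forms already established. Throughout, $k = \min(d-1, \lfloor \log_2(l) \rfloor) = e - 1$, so $e = k+1$ and $\Theta(k \cdot l) = \Theta(e \cdot l)$ whenever $k \geq 1$. The degenerate case $k = 0$ is handled separately, as is $l = 1$, where $e = 1$ and both sides are $\Theta(1)$. In that case there is a single interval, and the cost is a constant times $l$, which is $\Theta(e \cdot l)$ directly.

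\textbf{Upper bound.} Start from \autoref{theorem:24}, $\widetilde T_e^r = \tilde c_2 \cdot N_q^{>1} + \tilde c_3 \cdot N_i$.
\begin{enumerate}
\item Each interval of width $m$ yields $n_q(m,\nu) = 1 + \lfloor (m-1)/\nu \rfloor \leq m$ quantiles for every positive integer $\nu$. This is read off \autoref{theorem:3} and its residual term $E \geq 0$.
\item Summing over intervals gives $N_q^{>1}(l,d,\nu) \leq W^{>1}(l,d) \leq W(l,d)$.
\item Similarly, $N_i(l,d) \leq W(l,d)$, since every interval has width at least $1$.
\item \autoref{theorem:2} gives $W(l,d) = \Theta(k \cdot l)$.
\end{enumerate}
Hence $\widetilde T_e^r = \mathcal{O}(e \cdot l)$, with constants that do not involve $\nu$ at all.

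\textbf{Lower bound.} This is the main obstacle. The $\nu$-dependent part $N_q^{>1}$ can be as small as $N_i^{>1} = \mathcal{O}(2^k)$ when $\nu$ exceeds every interval width, so the quantile count alone cannot supply an $\Omega(e \cdot l)$ term.

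I would follow the same route as the proof of \autoref{theorem:10}, where the $\Omega(e \cdot l)$ lower bound comes from the per-element term rather than from the quantile count. In the approximate algorithm, the per-element reading of each interval is the moment pass. By \autoref{theorem:22}, every interval whose quantiles are extracted must first have its $(m, M_2, M_3, M_4)$, minimum and maximum accumulated over all of its $m$ elements. This is because the inputs $\text{mean}$, $\text{std}$, $\text{skew}$ and $\text{exkurt}$ of $\widehat Q(q)$, and the boundary values, depend on every element of the interval. The extraction of an interval is therefore never cheaper than a constant times its width. Summing over all intervals, and using \autoref{theorem:2} and \autoref{theorem:5}, gives a lower bound of $\min(\tilde c_1, \tilde c_3) \cdot W^{>1}(l,d) = \Omega(k \cdot l)$.

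The key point I have to justify carefully is that $W^{>1}(l,d) = \Theta(W(l,d))$. This holds because $N_i^{(1)} \leq 2 \cdot 2^{k+1} = \mathcal{O}(l)$, while $W = \Theta(k \cdot l)$. When $k$ is small, $W^{>1}$ is still at least the width of the level-0 interval, namely $l$, whenever $l \geq 2$. That settles the lower bound. If this coupling of extraction to the moment pass is not accepted as part of the extraction accounting, the lower bound can only be recovered for bounded $\nu$. So this is the step I would scrutinise most.

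\textbf{Regime split.} Finally, I would split on which term attains the minimum in $e$:
\begin{itemize}
\item in the saturated regime $l \geq 2^{d-1}$, $e = d$, giving $\Theta(d \cdot l)$;
\item in the unsaturated regime $l < 2^{d-1}$, $e = \lfloor \log_2(l) \rfloor + 1 = \Theta(\log(l))$, giving $\Theta(l \cdot \log(l))$.
\end{itemize}
Since none of the constants above involves $\nu$, the result holds uniformly in $\nu$, even when $\nu$ is allowed to depend on $l$ and $d$.
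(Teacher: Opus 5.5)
Your upper bound is fine and is essentially the paper's: both use $n_q(m,\nu)\le m$ and $N_i \le W$, then \autoref{theorem:2}. The gap is the lower bound.

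By \autoref{theorem:24}, $\widetilde T_e^r = \tilde c_2 \cdot N_q^{>1} + \tilde c_3 \cdot N_i$ \emph{by definition}. The per-element moment pass is not part of it: it is charged separately as $\widetilde T_m^r = \tilde c_1 \cdot W$ (\autoref{theorem:22}), and \autoref{lemma:3} adds the two. Charging that pass to extraction therefore double-counts it and changes the quantity you are bounding. What your argument actually establishes is $\widetilde T_m^r + \widetilde T_e^r = \Omega(e \cdot l)$, i.e.\ the lower half of \autoref{theorem:26}, which does hold uniformly in $\nu$ precisely because of the $\tilde c_1 \cdot W$ term. The bound $\min(\tilde c_1, \tilde c_3) \cdot W^{>1}$ has no basis in \autoref{theorem:24}, since $\tilde c_3$ is a per-interval rate multiplying $N_i$. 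Your $k=0$ case has the same flaw: there the cost is $\tilde c_2 \cdot n_q(l,\nu) + \tilde c_3$, which is constant once $\nu \geq l$.

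This gap cannot be repaired, because the statement is false as written. If $\nu \geq l$, every width satisfies $m \leq \nu$, so $n_q(m,\nu) = 1$ and $N_q^{>1} = N_i^{>1}$. Hence $\widetilde T_e^r = \tilde c_2 \cdot N_i^{>1} + \tilde c_3 \cdot N_i = \Theta(2^k)$.
\begin{itemize}
\item With $d = 6$ and $l \geq 64$, this equals $120 \cdot (\tilde c_2 + \tilde c_3)$ for every $l$, against $e \cdot l = 6l$.
\item In the unsaturated regime it is $\Theta(l)$, against $\Theta(l \log(l))$.
\end{itemize}
The paper's own proof breaks at exactly the step you flagged. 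Its lower bound $\tilde c_2 \cdot N_i^{>1} + \tilde c_3 \cdot N_i$ is only $\Theta(2^k)$, and the observation $N_i = \mathcal{O}(W)$ bounds it from above, not from below. Unlike in \autoref{theorem:10}, there is no $c_3 \cdot W^{>1}$ term to carry the lower bound.

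Your fallback is the correct result. Writing $m - 1 = a\nu + b$ with $0 \leq b < \nu$ gives $n_q(m,\nu) = a + 1 \geq m/\nu$, hence $N_q^{>1} \geq W^{>1}/\nu$. Sharpen your upper bound to $N_q^{>1} \leq N_i^{>1} + W^{>1}/\nu$ and combine the two to get $\widetilde T_e^r = \Theta(2^k + e \cdot l/\nu)$. This is $\Theta(e \cdot l)$ for bounded $\nu$ but genuinely depends on $\nu$ otherwise, so the ``even as a function of $l$ and $d$'' clause must be dropped.
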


\begin{proof}
    See the proof of \autoref{theorem:25} on page~\pageref{proof:theorem:25} of Appendix~\ref{secA1}.
\end{proof}

\subsubsection{Total processing cost}

\begin{lemma}[Total processing cost decomposition, approximate algorithm]
    \label{lemma:3}
    For any series length $l$, depth $d$, and quantile divisor $\nu$, the total processing cost for the raw representation of a single series, under the approximate algorithm, is the sum of the total moment computation cost and the total extraction cost:
    $$
        \widetilde T^r(l,d,\nu) = \widetilde T_m^r(l,d) + \widetilde T_e^r(l,d,\nu)
    $$
\end{lemma}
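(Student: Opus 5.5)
The argument mirrors the one behind \autoref{lemma:1}: it amounts to checking that the approximate algorithm's per-interval work splits into two disjoint phases, and that each phase's cost has already been counted by exactly one of \autoref{theorem:22} and \autoref{theorem:24}.

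First, I would describe what the approximate algorithm does on one interval of width $m$, following \autoref{sec6.1}. There are two phases.
\begin{enumerate}
\item A single Welford--Pebay pass over the $m$ values. It maintains $(m, \bar x, M_2, M_3, M_4)$ together with the running minimum and maximum, at a fixed number of operations per element.
\item A post-pass phase. It converts the raw moments into (variance, skewness, excess kurtosis), at a fixed cost per interval. It then evaluates the requested quantiles, via Cornish--Fisher at interior positions and via the stored minimum and maximum at the boundary positions.
\end{enumerate}
The key observation is that the second phase reads only the $O(1)$ summary produced by the first phase and never touches the $m$ raw values again. There is also no sorting step and no other pass over the data, so no work falls outside these two phases.

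Second, I would sum over all intervals, in the same order as \autoref{sec4}. Summing the first phase gives $\sum_j \tilde c_1 m_j = \tilde c_1 W(l,d)$, which is exactly $\widetilde T_m^r(l,d)$ of \autoref{theorem:22}. Summing the second phase gives the per-interval conversion cost $\tilde c_3 \cdot N_i(l,d)$ plus the per-quantile cost $\tilde c_2 \cdot N_q^{>1}(l,d,\nu)$, which is exactly $\widetilde T_e^r(l,d,\nu)$ of \autoref{theorem:24}. Width-one intervals need a separate remark. Their single value is simply copied, with no quantile work, which is consistent with $N_q^{>1}$ excluding them. They still contribute their width $1$ to $W(l,d)$, matching \autoref{theorem:22}, which charges all $N_i(l,d)$ intervals.

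Third, I would invoke additivity of the cost model. The total wall-clock cost for one series is the sum of the costs of the operations actually executed. Since the two phases partition those operations, summing the two totals gives $\widetilde T^r = \widetilde T_m^r + \widetilde T_e^r$, with no double counting and no omission.

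The main obstacle is the bookkeeping at the boundary between the two phases. Tracking the minimum and maximum must be assigned to the moment pass, absorbed into $\tilde c_1$, and not to extraction. The boundary quantile positions must be charged at the averaged rate $\tilde c_2$, as discussed in the remark following \autoref{theorem:24}. Both choices are conventions already fixed by \autoref{theorem:22} and \autoref{theorem:24}, so I would simply state them explicitly rather than prove anything new.
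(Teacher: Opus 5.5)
Your proposal is correct and follows essentially the same route as the paper: each interval undergoes exactly two sequential operations, the moment pass and then the Cornish--Fisher extraction, and summing over intervals preserves additivity, mirroring \autoref{lemma:1}. Your extra bookkeeping, which checks that the two phase sums are exactly $\tilde c_1 W(l,d)$ and $\tilde c_2 N_q^{>1}(l,d,\nu) + \tilde c_3 N_i(l,d)$ (including width-one intervals and min/max tracking), is consistent with \autoref{theorem:22} and \autoref{theorem:24} and makes explicit what the paper leaves implicit.
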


\begin{proof}
    See the proof of \autoref{lemma:3} on page~\pageref{proof:lemma:3} of Appendix~\ref{secA1}.
\end{proof}

\begin{theorem}[Computational complexity of the total processing, approximate algorithm]
    \label{theorem:26}
    For any series length $l$, any depth $d$, and any quantile divisor $\nu$, under the moment-cost/Cornish-Fisher accounting of this section, the computational complexity of Quant's approximate processing of the raw representation of a single series is:
    $$
        \widetilde T^r(l,d,\nu) = \begin{cases}
            \Theta(d \cdot l) & \textnormal{if } l \geq 2^{d-1} \text{ (saturated regime)}\\
            \Theta(l \cdot \log(l)) & \textnormal{if } l < 2^{d-1} \text{ (unsaturated regime)}
        \end{cases}
    $$
\end{theorem}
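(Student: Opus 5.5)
The plan is to combine \autoref{lemma:3} with \autoref{theorem:23} and \autoref{theorem:25}, in the same way \autoref{theorem:11} was obtained from \autoref{lemma:1}, \autoref{theorem:8} and \autoref{theorem:10}. By \autoref{lemma:3}, $\widetilde T^r(l,d,\nu) = \widetilde T_m^r(l,d) + \widetilde T_e^r(l,d,\nu)$. Both summands are nonnegative, because all constants $\tilde c_1, \tilde c_2, \tilde c_3$ are positive and $W$, $N_q^{>1}$, $N_i$ are counts. So the sum is $\Theta$ of the maximum of the two terms. That maximum is the sum up to a factor in $[1/2, 1]$.

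\autoref{theorem:23} gives $\widetilde T_m^r(l,d) = \Theta(e \cdot l)$. \autoref{theorem:25} gives $\widetilde T_e^r(l,d,\nu) = \Theta(e \cdot l)$, uniformly in $\nu$. The sum of two $\Theta(e\cdot l)$ quantities is therefore $\Theta(e \cdot l)$. Here $e = \min(d, \lfloor \log_2(l)\rfloor + 1)$.

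It then remains to split on the minimum in $e$.
\begin{itemize}
\item If $l \geq 2^{d-1}$, then $\lfloor \log_2(l) \rfloor \geq d-1$, so $e = d$ and the cost is $\Theta(d \cdot l)$.
\item If $l < 2^{d-1}$, then $e = \lfloor \log_2(l) \rfloor + 1$. For $l \geq 2$ this satisfies $\log_2(l) \leq e \leq 2\log_2(l)$, so $e = \Theta(\log(l))$ and the cost is $\Theta(l \cdot \log(l))$.
\end{itemize}

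The only delicate point is the degenerate case $l = 1$, where $\log(l) = 0$ while the cost is a positive constant. I would handle it exactly as in \autoref{theorem:10} and \autoref{theorem:23}: state the asymptotic claim for $l \geq 2$, or read $\log(l)$ as $\max(1, \log(l))$. With that convention, no genuine obstacle is expected, since all the substantive counting has already been done in \autoref{theorem:2} and \autoref{theorem:3} via the earlier theorems.
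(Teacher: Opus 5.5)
Your proposal is correct and follows the paper's own proof essentially step for step: decompose via \autoref{lemma:3}, use \autoref{theorem:23} and \autoref{theorem:25} to get two $\Theta(e \cdot l)$ terms whose sum is $\Theta(e \cdot l)$, then split on which term attains the minimum defining $e$. Your nonnegativity remark and your treatment of the degenerate $l = 1$ case are small refinements that the paper leaves implicit.
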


\begin{proof}
    See the proof of \autoref{theorem:26} on page~\pageref{proof:theorem:26} of Appendix~\ref{secA1}.
\end{proof}

\begin{remark}[Asymptotic speedup from moment-based quantiles]
    \label{remark:speedup}
    Combining \autoref{theorem:11} (exact algorithm) and \autoref{theorem:26} (approximate algorithm), for any $l, d, \nu$:
    $$
        \frac{T^r(l,d,\nu)}{\widetilde T^r(l,d,\nu)} = \Theta(\log(l))
    $$
    in both the saturated regime ($\Theta(d \cdot l \cdot \log(l)) / \Theta(d \cdot l)$) and the unsaturated regime ($\Theta(l \cdot (\log(l))^2) / \Theta(l \cdot \log(l))$).
    In both regimes, the moment-based algorithm removes exactly one factor of $\log(l)$ relative to the exact algorithm, and this speedup grows, without bound, as $l$ grows.
    This is consistent with this section's motivation (\autoref{sec3}): the longer the series, the more this trade-off favors the approximate algorithm, while for short series the resulting speedup is small in absolute terms and may not be worth its approximation error, an orthogonal question addressed empirically in \autoref{sec7} and \autoref{sec8}.
\end{remark}

\subsubsection{Generalizing to the four representations}

So far, every result in this subsection has been stated for the raw representation of a single series.
As in \autoref{sec4}, Quant applies the same moment-based procedure to all four representations of a series (the raw series itself, the smoothed first-order difference, the second-order difference, and the magnitude of the real discrete Fourier transform) processed independently and sequentially, with representation $p$'s own length $l_p(l)$ given by \autoref{lemma:2}.
We now generalize \autoref{theorem:22} through \autoref{theorem:26} to the total cost across all four representations, mirroring \autoref{theorem:15} through \autoref{theorem:18}.

\begin{remark}
    \label{remark:representation-generalization-approx}
    None of \autoref{theorem:22} through \autoref{theorem:26} use any property of the raw series beyond its length $l$: exactly as for the exact algorithm (\autoref{remark:representation-generalization-exact}), the ``$r$'' superscript on $\widetilde T_m^r$, $\widetilde T_e^r$, and $\widetilde T^r$ is bookkeeping, not a restriction.
    Every one of those results holds, unchanged, for any single sequence that Quant partitions into intervals via the same depth-$d$, divisor-$\nu$ procedure and processes by computing moments and evaluating the Cornish-Fisher expansion.
    In particular, for each of the other three representations, the only change is to use that representation's own length $l_p(l)$ in place of $l$.
    The moment-based kernel is shared across all four representations (only the input array differs), so the hardware constants $\tilde c_1$, $\tilde c_2$, $\tilde c_3$ are likewise shared, not representation-specific.
\end{remark}

\begin{theorem}[Total moment computation cost across all four representations]
    \label{theorem:27}
    For any series length $l \geq 3$ and depth $d$, the total cost of computing the moments of all the intervals that Quant builds across all four representations of one series is:
    $$
        \widetilde T_m(l,d) = \sum_{p=1}^4 \widetilde T_m^{r_p}\left( l_p(l), d \right)
    $$
    with each term given by \autoref{theorem:22} (applied, per \autoref{lemma:2} and \autoref{remark:representation-generalization-approx}, with $l$ replaced by $l_p(l)$).
\end{theorem}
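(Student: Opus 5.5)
The plan is to mirror the proof of \autoref{theorem:15} and combine three ingredients: the processing model, \autoref{theorem:22}, and \autoref{lemma:2} together with \autoref{remark:representation-generalization-approx}.

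First, we make the processing model explicit. Quant's approximate transform builds the four representations and then runs the moment kernel on each one independently and sequentially. The interval set for representation $p$ is therefore the depth-$d$ dyadic interval set of a sequence of length $l_p(l)$. The set of intervals whose moments are computed is then the disjoint union over $p$ of these four per-representation sets. No interval is shared between representations, because each lives on a different input array.

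Since the moment-cost accounting charges each interval independently, by its own width, the total cost is additive over this disjoint union. It is thus the sum over $p$ of the per-representation moment costs.

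Second, we identify each summand. By \autoref{lemma:2}, valid for $l \geq 3$ (which is why this hypothesis appears), representation $p$ has length $l_p(l)$. We check that $l_p(l) \geq 1$, so that each representation is a well-defined nonempty sequence to which \autoref{theorem:22} applies. This holds since $l_3(l) = l-2 \geq 1$ and $l_4(l) \geq 2$.

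By \autoref{remark:representation-generalization-approx}, the proof of \autoref{theorem:22} uses only the length of the sequence. Its conclusion therefore holds verbatim with $l$ replaced by $l_p(l)$, giving $\widetilde T_m^{r_p}(l_p(l),d) = \tilde c_1 \cdot W(l_p(l),d)$. The same remark guarantees that the constant $\tilde c_1$ is shared across representations, since the kernel is identical.

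Summing over $p \in \{1,2,3,4\}$ yields the claimed identity. It can also be written explicitly as $\tilde c_1 \sum_{p=1}^4 W(l_p(l),d)$.

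The only step needing care is the additivity claim. We must argue that computing the representations themselves (differencing, smoothing, FFT) is outside the cost being accounted for. This is consistent with how \autoref{theorem:22} and \autoref{theorem:15} scope "the total cost of computing the moments of all the intervals". We must also argue that sequential independent processing introduces no cross-representation interaction in the accounting model. Both points follow from the definitions rather than from any computation, so no genuine obstacle is expected.
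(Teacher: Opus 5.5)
Your proposal is correct and follows essentially the same route as the paper's proof. Like the paper, you argue additivity over the four independently and sequentially processed representations, then apply \autoref{theorem:22} at $l_p(l)$ via \autoref{lemma:2} and \autoref{remark:representation-generalization-approx}; your extra checks (non-emptiness for $l \geq 3$, the shared $\tilde c_1$, and excluding the cost of building the representations) only make explicit what the paper's proof leaves implicit.
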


\begin{proof}
    See the proof of \autoref{theorem:27} on page~\pageref{proof:theorem:27} of Appendix~\ref{secA1}.
\end{proof}

\begin{theorem}[Total extraction cost across all four representations, Cornish-Fisher]
    \label{theorem:28}
    For any series length $l \geq 3$, depth $d$, and quantile divisor $\nu$, the total extraction cost across all four representations of one series is:
    $$
        \widetilde T_e(l,d,\nu) = \sum_{p=1}^4 \widetilde T_e^{r_p}\left( l_p(l), d, \nu \right) = \tilde c_2 \cdot \sum_{p=1}^4 N_q^{>1}\left( l_p(l), d, \nu \right) + \tilde c_3 \cdot \sum_{p=1}^4 N_i\left( l_p(l), d \right)
    $$
    with $N_q^{>1}(l_p(l),d,\nu)$ as in \autoref{theorem:5}, and $N_i(l_p(l),d)$ exactly given by \autoref{theorem:1}, both evaluated at $l_p(l)$.
\end{theorem}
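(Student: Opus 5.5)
The plan is to mirror the proof of \autoref{theorem:16}, but start from \autoref{theorem:24} instead of \autoref{theorem:9}. The argument has three steps: decompose the four-representation cost into a sum, instantiate each summand via \autoref{theorem:24}, and regroup by the shared constants.

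\textbf{Additive decomposition.} Quant processes the four representations independently and sequentially. No interval, moment pass, or Cornish-Fisher evaluation is shared between two representations. The total approximate extraction cost is therefore the sum of the per-representation extraction costs. This gives the first equality $\widetilde T_e(l,d,\nu) = \sum_{p=1}^4 \widetilde T_e^{r_p}(l_p(l), d, \nu)$. It is the approximate analogue of the decomposition already used for \autoref{theorem:15} and \autoref{theorem:16}.

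\textbf{Instantiating each summand.} By \autoref{remark:representation-generalization-approx}, \autoref{theorem:24} depends on the input sequence only through its length. It therefore applies verbatim to representation $p$, with $l$ replaced by $l_p(l)$ from \autoref{lemma:2}. This yields $\widetilde T_e^{r_p}(l_p(l),d,\nu) = \tilde c_2 \cdot N_q^{>1}(l_p(l),d,\nu) + \tilde c_3 \cdot N_i(l_p(l),d)$.

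The hypothesis $l \geq 3$ matters here. It guarantees that every $l_p(l) \geq 1$: the smallest length is $l_3(l) = l-2 \geq 1$, while $l_4(l) = \lfloor l/2 \rfloor + 1 \geq 2$. Hence \autoref{theorem:1}, \autoref{theorem:3} and \autoref{theorem:5} are all well defined at each $l_p(l)$, and $\lfloor \log_2(l_p(l)) \rfloor$ makes sense in the definition of $k$.

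\textbf{Regrouping.} By \autoref{remark:representation-generalization-approx}, the constants $\tilde c_2$ and $\tilde c_3$ are shared across representations because the kernel is the same. They factor out of the sum, giving the second equality. The exact identification of $N_q^{>1}$ and $N_i$ then follows by citing \autoref{theorem:5} and \autoref{theorem:1} at $l_p(l)$.

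\textbf{Main obstacle.} The only delicate point is justifying that the constants are genuinely representation-independent, and that no cross-representation cost has been omitted. Any setup or dispatch term would belong to the implementation-level costs of \autoref{sec5}, not to the per-series extraction accounting of \autoref{theorem:24}.

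I would also note, as a consistency check, one asymmetry with \autoref{theorem:16}. Here the per-interval term counts all intervals $N_i$, not $N_i^{>1}$. That is inherited directly from \autoref{theorem:24}, where the moment-to-statistic conversion is charged once per interval. No further adjustment is needed.
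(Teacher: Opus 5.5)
Your proposal is correct and matches the paper's own argument: the paper proves \autoref{theorem:28} exactly like \autoref{theorem:27}. It sums over the four independently processed representations, applies the per-representation result (here \autoref{theorem:24}) at $l_p(l)$ via \autoref{remark:representation-generalization-approx}, and notes, as you do, that \autoref{theorem:24} carries no $\mathcal{O}(\cdot)$ term, so the total is exact once $N_q^{>1}$ and $N_i$ are evaluated via \autoref{theorem:5} and \autoref{theorem:1}.
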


\begin{proof}
    See the proof of \autoref{theorem:28} on page~\pageref{proof:theorem:28} of Appendix~\ref{secA1}.
\end{proof}

\begin{theorem}[Total processing cost across all four representations, approximate algorithm]
    \label{theorem:29}
    For any series length $l \geq 3$, depth $d$, and quantile divisor $\nu$, the total processing cost across all four representations of one series, under the approximate algorithm, is:
    $$
        \widetilde T(l,d,\nu) = \widetilde T_m(l,d) + \widetilde T_e(l,d,\nu)
    $$
\end{theorem}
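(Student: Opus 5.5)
The plan mirrors the proof of \autoref{theorem:17}, its exact-algorithm counterpart: the statement is a bookkeeping identity obtained by combining the per-representation decomposition with the definitions of the across-representation totals. First I will fix a representation $p \in \{1,2,3,4\}$ with length $l_p(l)$ given by \autoref{lemma:2}. Since $l \geq 3$, every $l_p(l) \geq 2$, so each representation is a well-defined sequence that Quant partitions into intervals. By \autoref{remark:representation-generalization-approx}, \autoref{lemma:3} applies unchanged to that sequence with $l$ replaced by $l_p(l)$. This gives
$$
    \widetilde T^{r_p}(l_p(l),d,\nu) = \widetilde T_m^{r_p}(l_p(l),d) + \widetilde T_e^{r_p}(l_p(l),d,\nu).
$$
The constants $\tilde c_1, \tilde c_2, \tilde c_3$ are shared across representations, so no representation-specific adjustment enters.

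Second, I will use the fact that the four representations are processed independently and sequentially. The total approximate processing cost of one series is therefore the sum of the four per-representation costs:
$$
    \widetilde T(l,d,\nu) = \sum_{p=1}^4 \widetilde T^{r_p}(l_p(l),d,\nu).
$$
This additivity is the only modeling assumption involved, and it is the same one used implicitly in \autoref{theorem:15} through \autoref{theorem:17}. I will state it explicitly, noting that computing the representations themselves (differencing, smoothing, FFT) is outside the accounted kernel cost, exactly as in \autoref{sec4}.

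Third, I will substitute the per-representation decomposition into this sum and use linearity of finite sums to split it into two sums. By \autoref{theorem:27} the first sum is $\widetilde T_m(l,d)$, and by the first equality in \autoref{theorem:28} the second is $\widetilde T_e(l,d,\nu)$, which yields the claim.

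I expect no step to be a real obstacle. The only point needing care is justifying the sequential-additivity assumption and checking that $l \geq 3$ makes every $l_p(l)$ valid for \autoref{lemma:3}; both follow directly from \autoref{lemma:2} and the remark above.
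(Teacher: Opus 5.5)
Your proposal is correct and is essentially the paper's proof, which simply adds the moment total of \autoref{theorem:27} and the extraction total of \autoref{theorem:28} ``exactly as in \autoref{lemma:3}''. Your version applies \autoref{lemma:3} to each representation and then regroups the finite sum, which is the same bookkeeping made explicit. One small slip: $l \geq 3$ only guarantees $l_p(l) \geq 1$ (for example, $l_3(3) = 1$), not $l_p(l) \geq 2$, but $l_p(l) \geq 1$ is all \autoref{lemma:3} needs, so the argument is unaffected.
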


\begin{proof}
    See the proof of \autoref{theorem:29} on page~\pageref{proof:theorem:29} of Appendix~\ref{secA1}.
\end{proof}

\begin{theorem}[Computational complexity of the total, across all four representations, approximate algorithm]
    \label{theorem:30}
    For any series length $l$ and depth $d$, define $e = \min(d, \lfloor \log_2(l) \rfloor + 1)$ as in \autoref{theorem:23} and \autoref{theorem:25}. Then, for any positive $\nu$:
    $$
        \widetilde T_m(l,d) = \Theta(e \cdot l), \qquad \widetilde T_e(l,d,\nu) = \Theta(e \cdot l), \qquad \widetilde T(l,d,\nu) = \Theta(e \cdot l)
    $$
    Equivalently:
    $$
        \widetilde T(l,d,\nu) = \begin{cases}
            \Theta(d \cdot l) & \textnormal{if } l \geq 2^{d-1} \text{ (saturated regime)}\\
            \Theta(l \cdot \log(l)) & \textnormal{if } l < 2^{d-1} \text{ (unsaturated regime)}
        \end{cases}
    $$
    Summing across all four representations does not change the complexity class obtained for the raw representation alone in \autoref{theorem:23}, \autoref{theorem:25}, and \autoref{theorem:26}.
\end{theorem}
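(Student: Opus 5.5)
The plan is to reduce everything to the single-representation results, \autoref{theorem:23}, \autoref{theorem:25} and \autoref{theorem:26}, applied at each length $l_p(l)$ from \autoref{lemma:2}. By \autoref{remark:representation-generalization-approx}, each of these theorems holds verbatim with $l$ replaced by $l_p(l)$. With $e_p := \min(d, \lfloor \log_2(l_p(l)) \rfloor + 1)$, this gives $\widetilde T_m^{r_p}(l_p(l),d) = \Theta(e_p \cdot l_p(l))$ and $\widetilde T_e^{r_p}(l_p(l),d,\nu) = \Theta(e_p \cdot l_p(l))$. By \autoref{theorem:27} and \autoref{theorem:28}, $\widetilde T_m$ and $\widetilde T_e$ are sums of four such terms, and by \autoref{theorem:29}, $\widetilde T$ is their sum. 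Since a sum of a fixed number (four) of nonnegative $\Theta(\cdot)$ terms is $\Theta$ of the sum of their arguments, it suffices to show
\[
\sum_{p=1}^4 e_p \cdot l_p(l) = \Theta(e \cdot l).
\]

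For this, I will compare each $l_p(l)$ and $e_p$ to $l$ and $e$, assuming $l \geq 3$ (smaller $l$ is a bounded, constant-cost case absorbed into the constants). From \autoref{lemma:2}, $l/3 \leq l_p(l) \leq l$ for every $p$: indeed $l-2 \geq l/3$ when $l\ge 3$, and $\lfloor l/2\rfloor+1 \geq l/2$. The upper bound gives $e_p \leq e$ directly. For the lower bound, $\lfloor \log_2(l/3) \rfloor \geq \lfloor \log_2 l \rfloor - 2$, so $e_p \geq \min(d, \lfloor\log_2 l\rfloor - 1) \geq e - 2$. Because $e_p \geq 1$ always, this yields $e_p \geq e/3$. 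Hence each term $e_p \cdot l_p(l)$ lies between $e\cdot l/9$ and $e \cdot l$, and the sum lies between $e\cdot l/9$ and $4 e\cdot l$. This gives all three $\Theta(e\cdot l)$ claims.

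The regime split then follows exactly as in \autoref{theorem:26}. If $l \geq 2^{d-1}$, then $e = d$. If $l < 2^{d-1}$, then $e = \lfloor \log_2 l\rfloor + 1 = \Theta(\log l)$.

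The one step I expect to need care is that the hidden constants in \autoref{theorem:23} and \autoref{theorem:25} must be uniform in $l$, $d$ and $\nu$. They must also not depend on which representation is used. This holds because the constants $\tilde c_1,\tilde c_2,\tilde c_3$ are shared across representations, and because those theorems' bounds come from the explicit forms $W$, $N_i$ and $N_q^{>1}$. A secondary subtlety is a representation falling into a different regime than $l$ near the boundary $l \approx 2^{d-1}$, or near very small $l$. The inequality $e-2 \le e_p \le e$ handles this without any case analysis.
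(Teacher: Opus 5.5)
Your proposal is correct and follows the paper's own route: apply \autoref{theorem:23} and \autoref{theorem:25} at each $l_p(l)$, show $l_p(l)=\Theta(l)$ and $e_p=\Theta(e)$, then sum the four representations and the two cost components. Your explicit bounds $l/3 \leq l_p(l) \leq l$ and $\max(1, e-2) \leq e_p \leq e$ are a tidier version of the paper's $k_p \in \{k-1,k,k+1\}$ argument from the proof of \autoref{theorem:18}.

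One correction to your last paragraph: the lower-bound constant behind \autoref{theorem:25} is not uniform in $\nu$. Since $\widetilde T_e^r = \tilde c_2 \cdot N_q^{>1} + \tilde c_3 \cdot N_i$ has no width term, it only yields $\widetilde T_e^r \geq \tilde c_2 \cdot W^{>1}(l,d)/\nu$. For $\nu \geq l$ every interval gets a single quantile, and $\widetilde T_e^r$ collapses to $\Theta(2^k) = o(e \cdot l)$. So $\widetilde T_e = \Theta(e \cdot l)$ holds only for each fixed $\nu$, which is all your argument needs, because the same $\nu$ enters all four representations. By contrast, $\widetilde T = \Theta(e \cdot l)$ is uniform over integers $\nu \geq 1$, because $\widetilde T_m$ alone supplies the lower bound.
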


\begin{proof}
    See the proof of \autoref{theorem:30} on page~\pageref{proof:theorem:30} of Appendix~\ref{secA1}.
\end{proof}

\begin{remark}[Asymptotic speedup from moment-based quantiles, across all four representations]
    \label{remark:speedup-full}
    Combining \autoref{theorem:18} (exact algorithm, four representations) and \autoref{theorem:30} (approximate algorithm, four representations), for any $l, d, \nu$:
    $$
        \frac{T(l,d,\nu)}{\widetilde T(l,d,\nu)} = \Theta(\log(l))
    $$
    in both regimes, exactly as in \autoref{remark:speedup} for the raw representation alone: summing across all four representations preserves the single factor of $\log(l)$ that the moment-based algorithm removes.
    Since Quant always processes all four representations of every series, this is the practically relevant comparison, more so than \autoref{remark:speedup}.
\end{remark}

\subsection{Outer loop over the intervals}\label{sec6.3}

As in \autoref{sec5} for the exact algorithm, the approximate algorithm admits two natural implementations, differing in which loop (over the intervals or over the $n$ series) is the outer one:
\begin{itemize}
    \item a NumPy-vectorized implementation that dispatches one call per interval, processing all $n$ series at once for a given representation (\emph{interval-outer}, version $(I)$), and
    \item a compiled (Numba) implementation that dispatches one call per representation, processing all $n$ series and all their intervals internally (\emph{series-outer}, version $(S)$).
\end{itemize}
Both implementations process the four representations independently and sequentially, exactly as in \autoref{theorem:27} through \autoref{theorem:30}.
We reuse \autoref{sec5}'s total-cost decomposition (a per-implementation setup cost, independent of $n$, plus a per-implementation marginal cost, proportional to $n$), applied to the approximate algorithm's own constants: for $v \in \{I, S\}$, define the per-representation marginal cost
$$
    \widetilde T^{r,(v)}(l,d,\nu) := \tilde c_1^{(v)} \cdot W(l,d) + \tilde c_2^{(v)} \cdot N_q^{>1}(l,d,\nu) + \tilde c_3^{(v)} \cdot N_i(l,d)
$$
(i.e., \autoref{lemma:3}'s single-series, single-representation total cost, evaluated with implementation $v$'s own constants, with $N_q^{>1}$ as in \autoref{theorem:5}: only the Cornish-Fisher evaluation term excludes trivial intervals, since both the moment computation (\autoref{theorem:22}, using the inclusive $W(l,d)$) and the moments-to-(variance, skewness, excess kurtosis) conversion (using the inclusive $N_i(l,d)$) are dispatched as single calls covering every interval, trivial or not, see \autoref{theorem:24}), and the total marginal cost across all four representations
$$
    \widetilde T^{(v)}_\Sigma(l,d,\nu) := \sum_{p=1}^4 \widetilde T^{r,(v)}\left( l_p(l), d, \nu \right)
$$

\begin{theorem}[Total cost of the interval-outer implementation, approximate algorithm, across all four representations]
    \label{theorem:31}
    For any number of series $n \geq 1$, series length $l$, depth $d$, and quantile divisor $\nu$, the total wall-clock cost of processing all four representations of $n$ series with the interval-outer implementation of the approximate algorithm is:
    $$
        \widetilde T^{(I)}(n,l,d,\nu) = \sum_{p=1}^4 N_i^{>1}\left( l_p(l), d \right) \cdot \tilde\kappa^{(I)}\left( l_p(l) \right) + n \cdot \widetilde T^{(I)}_\Sigma(l,d,\nu)
    $$
    with $N_i^{>1}$ as in \autoref{theorem:5}, and $\tilde\kappa^{(I)}(l) \geq 0$ being the hardware-specific per-interval overhead (the Python loop-iteration cost plus the NumPy call-dispatch cost of one vectorized Cornish-Fisher extraction call, in seconds), a positive constant for a given $l$.
\end{theorem}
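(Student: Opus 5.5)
The plan mirrors the proof of \autoref{theorem:19}: account for every operation performed by the interval-outer approximate implementation, and sort each one into either an $n$-independent setup charge or a per-series marginal charge. Because the four representations are processed independently and sequentially (\autoref{sec6.3}), the total wall-clock cost is the sum of four per-representation costs. I will first establish the single-representation identity
$$
N_i^{>1}(l_p, d) \cdot \tilde\kappa^{(I)}(l_p) + n \cdot \widetilde T^{r,(I)}(l_p, d, \nu)
$$
for a sequence of length $l_p = l_p(l)$. Summing over $p$ and recognising $\sum_p \widetilde T^{r,(I)}(l_p(l), d, \nu) = \widetilde T^{(I)}_\Sigma(l, d, \nu)$ then gives the statement. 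This step uses \autoref{lemma:2} and \autoref{remark:representation-generalization-approx}: nothing in the single-representation argument depends on the sequence beyond its length, and the constants $\tilde c_j^{(I)}$ are shared across representations.

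For a fixed representation, I would decompose the work in three steps.

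First, the Python loop runs over the intervals and issues one vectorized NumPy call for each non-trivial interval. Width-$1$ intervals need no Cornish-Fisher extraction call, since their value is simply copied. Each non-trivial call pays a fixed dispatch and loop overhead $\tilde\kappa^{(I)}(l_p)$, whatever the value of $n$, because the call processes all $n$ series at once. This yields the setup term $N_i^{>1}(l_p, d)\,\tilde\kappa^{(I)}(l_p)$, with the count taken from \autoref{theorem:5}.

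Second, the numerical work inside these calls is performed independently for each series. It therefore scales linearly in $n$, and per series it equals the single-series costs counted earlier:
\begin{itemize}
\item the moment pass over every interval, which costs $\tilde c_1^{(I)} W(l_p, d)$ by \autoref{theorem:22};
\item the conversion of moments to variance, skewness and excess kurtosis over every interval, which costs $\tilde c_3^{(I)} N_i(l_p, d)$;
\item the per-quantile Cornish-Fisher evaluations on non-trivial intervals, which cost $\tilde c_2^{(I)} N_q^{>1}(l_p, d, \nu)$ by \autoref{theorem:24}.
\end{itemize}
Together these three terms are exactly $\widetilde T^{r,(I)}(l_p, d, \nu)$, so \autoref{lemma:3} applies with implementation-specific constants.

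Third, I add the setup and marginal contributions, and then sum over $p$.

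The main obstacle is the bookkeeping asymmetry between the setup term and the marginal term. The setup term counts only non-trivial intervals ($N_i^{>1}$). The marginal term, however, charges the moment and conversion work on all intervals ($W$ and $N_i$), because per \autoref{sec6.3} those steps cover trivial intervals too. My approach is to state the cost model explicitly. Per-call dispatch overhead is attributed only to the per-interval Cornish-Fisher extraction calls, which are skipped for width-$1$ intervals. Any batched moment or conversion pass is treated as pure per-element or per-interval work, with its own fixed dispatch cost either absorbed into $\tilde\kappa^{(I)}$ or regarded as negligible. With that convention fixed, the remaining steps are a routine sum of contributions.
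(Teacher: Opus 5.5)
Your proposal is correct and follows essentially the same route as the paper's proof. Per representation, you charge the per-interval overhead $\tilde\kappa^{(I)}(l_p(l))$ only to the $N_i^{>1}(l_p(l),d)$ non-trivial Cornish-Fisher extraction iterations; you let the trivial-interval copies and the single per-representation moment call contribute no setup overhead (your explicit convention is the one the paper uses implicitly); you account for the rest as $n$ times the per-series costs of \autoref{theorem:22} and \autoref{theorem:24} with the implementation's own constants; and you sum over $p$.
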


\begin{proof}
    See the proof of \autoref{theorem:31} on page~\pageref{proof:theorem:31} of Appendix~\ref{secA1}.
\end{proof}

\subsection{Outer loop over the series}

\begin{theorem}[Total cost of the series-outer implementation, approximate algorithm, across all four representations]
    \label{theorem:32}
    For any $n \geq 1$, $l$, $d$, $\nu$, the total wall-clock cost of processing all four representations of $n$ series with the series-outer implementation of the approximate algorithm is:
    $$
        \widetilde T^{(S)}(n,l,d,\nu) = n \cdot \left[ \tilde\kappa^{(S)}_\Sigma(l) + \widetilde T^{(S)}_\Sigma(l,d,\nu) \right], \qquad \tilde\kappa^{(S)}_\Sigma(l) := \sum_{p=1}^4 \tilde\kappa^{(S)}\left( l_p(l) \right)
    $$
    with $\tilde\kappa^{(S)}(l) \geq 0$ being the hardware-specific per-series, per-representation call-dispatch overhead (in seconds) of invoking the compiled kernel on a representation of length $l$, a positive constant for a given $l$.
    As with $\kappa^{(S)}(l)$ (\autoref{sec5}, \autoref{remark:kappa-S-l-dependence}), we do not assume $\tilde\kappa^{(S)}(l)$ is independent of $l$ or $d$ (see \autoref{remark:kappa-S-tilde-l-dependence} below).
\end{theorem}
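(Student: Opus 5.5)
The approach is to build the total directly from the structure of the series-outer code path, the same way \autoref{theorem:20} was obtained for the exact algorithm. Then we sum over the four representations as in \autoref{theorem:27} through \autoref{theorem:29}.

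\textbf{Step 1: one representation.} Fix a representation $p$ of length $l_p(l)$, given by \autoref{lemma:2}. In the series-outer implementation, every series incurs one invocation of the compiled kernel on that representation, which costs the dispatch overhead $\tilde\kappa^{(S)}(l_p(l))$. Inside the kernel no further Python-level dispatch happens, so the remaining work is the arithmetic counted in \autoref{sec6.2}:
\begin{itemize}
    \item the moment pass, costing $\tilde c_1^{(S)} \cdot W(l_p(l),d)$ by \autoref{theorem:22};
    \item the moments-to-(variance, skewness, excess kurtosis) conversion, costing $\tilde c_3^{(S)} \cdot N_i(l_p(l),d)$;
    \item the Cornish-Fisher evaluations, costing $\tilde c_2^{(S)} \cdot N_q^{>1}(l_p(l),d,\nu)$ by \autoref{theorem:24}.
\end{itemize}
Since the interval structure is unchanged, \autoref{remark:representation-generalization-approx} lets us evaluate these counts at $l_p(l)$. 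By \autoref{lemma:3} with implementation-specific constants, their sum is exactly $\widetilde T^{r,(S)}(l_p(l),d,\nu)$. The cost for one series and one representation is therefore $\tilde\kappa^{(S)}(l_p(l)) + \widetilde T^{r,(S)}(l_p(l),d,\nu)$.

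\textbf{Step 2: the $n$ series.} The kernel performs no computation shared across series, so there is no $n$-independent term, unlike the $N_i^{>1}\cdot\tilde\kappa^{(I)}$ term of \autoref{theorem:31}. The per-series work and dispatch cost are identical for every series of length $l$, so the total for representation $p$ is $n$ times the per-series cost. If the compiled call is issued once per representation over the whole batch rather than once per series, we still account its overhead per series, in line with the definition of $\tilde\kappa^{(S)}$ as a per-series, per-representation overhead.

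\textbf{Step 3: the four representations.} These are processed independently and sequentially, so total wall-clock cost is additive. Summing over $p \in \{1,2,3,4\}$ and factoring out $n$ gives
$$
    n \cdot \left[ \sum_{p} \tilde\kappa^{(S)}(l_p(l)) + \sum_p \widetilde T^{r,(S)}(l_p(l),d,\nu) \right].
$$
This is exactly $n \cdot [\tilde\kappa^{(S)}_\Sigma(l) + \widetilde T^{(S)}_\Sigma(l,d,\nu)]$, using the definition of $\widetilde T^{(S)}_\Sigma$ from \autoref{sec6.3}.

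\textbf{Main obstacle.} The delicate point is the modelling justification in Step 2: arguing that the dispatch overhead is charged per series rather than amortized, and that no hidden setup cost is left over. We handle this the same way as in \autoref{theorem:20}'s proof. The overhead is defined to be whatever per-series, per-representation cost is not captured by the operation counts, and it is allowed to depend on $l_p(l)$, which absorbs the marshalling effects described in \autoref{remark:kappa-S-l-dependence}. With that definition the identity follows by additivity alone.
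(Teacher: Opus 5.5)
Your proposal is correct and follows the paper's proof essentially step for step. For each representation $p$, you charge each of the $n$ series one dispatch overhead $\tilde\kappa^{(S)}(l_p(l))$ plus the compiled kernel's cost $\widetilde T^{r,(S)}(l_p(l),d,\nu)$, multiply by $n$, and sum over the four independently processed representations, exactly as the paper does. The one thing I would drop is the hedge in Step~2: the paper's proof, like that of \autoref{theorem:20}, simply posits one kernel invocation per series and per representation, whereas a single call per representation over the whole batch would add an $n$-independent intercept that cannot be relabelled as a per-series cost, so that caveat undermines the identity rather than supporting it.
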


\begin{proof}
    See the proof of \autoref{theorem:32} on page~\pageref{proof:theorem:32} of Appendix~\ref{secA1}.
\end{proof}

\begin{remark}
    As in \autoref{sec5}, $\widetilde T^{(S)}(n,l,d,\nu)$ is exactly proportional to $n$, with no analogue of the interval-outer implementation's amortizable, $n$-independent setup term $\sum_p N_i^{>1}(l_p(l),d) \cdot \tilde\kappa^{(I)}(l_p(l))$: every series requires its own call-dispatch overhead $\tilde\kappa^{(S)}_\Sigma(l)$, paid once per representation, under the series-outer implementation, while the interval-outer implementation amortizes its per-interval dispatch overhead across all $n$ series at once.
\end{remark}

\begin{remark}[$\tilde\kappa^{(S)}(l)$ is not $l$-independent, and does not converge to a constant]
    \label{remark:kappa-S-tilde-l-dependence}
    Unlike the exact algorithm's $\kappa^{(S)}(l)$ (\autoref{remark:kappa-S-l-dependence}), the approximate algorithm's four-representation dispatch cost (\autoref{theorem:32}) is proven, not merely conjectured by analogy from the raw-representation case.
    Fitting the implied per-representation residual $\tilde\kappa^{(S)}(l)$ against measured runtimes shows that treating $\tilde\kappa^{(S)}$ as a single constant does not hold empirically, and more severely than for the exact algorithm: the residual grows smoothly with $l$ by roughly two orders of magnitude across the fitting grid and shows no sign of converging to a constant as $l$ grows.
    Instead, it follows an empirical power law $\tilde\kappa^{(S)}(l) \approx A + C \cdot l^{P}$, with $P \approx 0.74$, validated by leave-one-out cross-validation rather than in-sample fit alone (since it has one more free parameter than the $A - B/l$ form that continues to describe $\kappa^{(S)}(l)$ adequately): the $A - B/l$ form reaches $r^2 \approx 0.18$ in-sample and $\approx 0.10$ under cross-validation on this residual, against $r^2 \approx 0.99$ both in-sample and under cross-validation for the power law (\autoref{sec7} and \autoref{sec8}).
    The exponent $P$ has no theoretical derivation offered here: we report it as an empirical finding, not a new theorem, and state \autoref{theorem:32} above with $\tilde\kappa^{(S)}(l)$ left as a general function of $l$ rather than asserting its independence from $l$.
    We flag two possible, non-exclusive explanations, neither confirmed here.
    First, the same call-marshalling argument as \autoref{remark:kappa-S-l-dependence}, here compounded across four representations of differing lengths.
    Second, that part of the true $l$-dependent moment-computation or Cornish-Fisher extraction cost (\autoref{sec6.2}) is not fully captured by $\tilde c_1, \tilde c_2, \tilde c_3$ and is instead absorbed into the residual this fit labels $\tilde\kappa^{(S)}(l)$.
\end{remark}

\subsection{Comparing both approaches}

\begin{theorem}[Crossover sample size, approximate algorithm, across all four representations]
    \label{theorem:33}
    For any $l$, $d$, $\nu$, define the per-series marginal costs $\widetilde B_I(l,d,\nu) := \widetilde T^{(I)}_\Sigma(l,d,\nu)$ and $\widetilde B_S(l,d,\nu) := \tilde\kappa^{(S)}_\Sigma(l) + \widetilde T^{(S)}_\Sigma(l,d,\nu)$, and the interval-outer setup cost $\widetilde A_I(l,d) := \sum_{p=1}^4 N_i^{>1}(l_p(l),d) \cdot \tilde\kappa^{(I)}(l_p(l))$ (\autoref{theorem:5}).
    \begin{itemize}
        \item If $\widetilde B_S(l,d,\nu) \leq \widetilde B_I(l,d,\nu)$, then $\widetilde T^{(S)}(n,l,d,\nu) \leq \widetilde T^{(I)}(n,l,d,\nu)$ for every $n \geq 1$.
        \item If $\widetilde B_S(l,d,\nu) > \widetilde B_I(l,d,\nu)$, define the crossover sample size
        $$
            \tilde n^*(l,d,\nu) := \frac{\widetilde A_I(l,d)}{\widetilde B_S(l,d,\nu) - \widetilde B_I(l,d,\nu)}
        $$
        Then $\widetilde T^{(S)}(n,l,d,\nu) < \widetilde T^{(I)}(n,l,d,\nu)$ for $n < \tilde n^*(l,d,\nu)$, and $\widetilde T^{(I)}(n,l,d,\nu) < \widetilde T^{(S)}(n,l,d,\nu)$ for $n > \tilde n^*(l,d,\nu)$.
    \end{itemize}
\end{theorem}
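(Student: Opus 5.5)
The proof reduces to comparing two affine functions of $n$. By \autoref{theorem:31} and \autoref{theorem:32}, with the notation of the statement, we have
$$
    \widetilde T^{(I)}(n,l,d,\nu) = \widetilde A_I(l,d) + n \cdot \widetilde B_I(l,d,\nu), \qquad \widetilde T^{(S)}(n,l,d,\nu) = n \cdot \widetilde B_S(l,d,\nu).
$$
The first step is to rewrite both totals in this form, simply by matching terms against the definitions of $\widetilde A_I$, $\widetilde B_I$ and $\widetilde B_S$. Next, define the difference
$$
    \Delta(n) := \widetilde T^{(I)}(n,l,d,\nu) - \widetilde T^{(S)}(n,l,d,\nu) = \widetilde A_I(l,d) - n \cdot \left( \widetilde B_S(l,d,\nu) - \widetilde B_I(l,d,\nu) \right).
$$
The sign of $\Delta(n)$ then decides which implementation is faster. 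We also record that $\widetilde A_I(l,d) \geq 0$. This holds because each $N_i^{>1}(l_p(l),d)$ is a count (\autoref{theorem:5}) and each $\tilde\kappa^{(I)}(l_p(l)) \geq 0$ by \autoref{theorem:31}.

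For the first case, suppose $\widetilde B_S \leq \widetilde B_I$. Then the slope term $-n(\widetilde B_S - \widetilde B_I)$ is nonnegative for every $n \geq 1$. Together with $\widetilde A_I \geq 0$, this gives $\Delta(n) \geq 0$, so $\widetilde T^{(S)} \leq \widetilde T^{(I)}$ for every $n \geq 1$.

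For the second case, suppose $\widetilde B_S > \widetilde B_I$. The difference $\widetilde B_S - \widetilde B_I$ is strictly positive, so $\tilde n^*$ is well defined. Moreover, $\Delta$ is strictly decreasing in $n$ and vanishes exactly at $n = \tilde n^*$. Hence $\Delta(n) > 0$ for $n < \tilde n^*$, which means the series-outer implementation is strictly faster. Likewise $\Delta(n) < 0$ for $n > \tilde n^*$, so the interval-outer implementation is strictly faster.

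The only point needing care is the nonnegativity of $\widetilde A_I$. It is what makes the first case hold for every $n$, including the degenerate situation $\widetilde A_I = 0$, where $\tilde n^* = 0$ and the second case's first inequality range is empty but the conclusion still holds vacuously. Beyond that, the argument is the same elementary line-crossing argument used for \autoref{theorem:21}, so I expect no genuine obstacle. The four-representation sums enter only through the definitions of $\widetilde A_I$, $\widetilde B_I$ and $\widetilde B_S$, and never need to be expanded.
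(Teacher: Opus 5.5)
Your proposal is correct and takes essentially the same route as the paper. The paper proves this by deferring to the affine line-crossing argument of Theorem~\ref{theorem:21}, using only that $\widetilde T^{(I)}$ and $\widetilde T^{(S)}$ are affine in $n$ and that $\widetilde T^{(I)}$ has a non-negative intercept; your explicit derivation of $\widetilde A_I(l,d) \geq 0$ from the counts $N_i^{>1}$ and $\tilde\kappa^{(I)} \geq 0$ fills in the step the paper only asserts ``by the same reasoning as in the exact algorithm.''
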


\begin{proof}
    See the proof of \autoref{theorem:33} on page~\pageref{proof:theorem:33} of Appendix~\ref{secA1}.
\end{proof}

\begin{remark}
    As in \autoref{sec5}, whether the series-outer implementation of the approximate algorithm is preferable for every sample size, or only up to a crossover $\tilde n^*(l,d,\nu)$, depends on the sign of $\widetilde B_S(l,d,\nu) - \widetilde B_I(l,d,\nu)$, itself an empirical question.
    We estimate $\tilde\kappa^{(I)}(l)$, $\tilde\kappa^{(S)}(l)$, and the version-specific $\tilde c_1^{(v)}, \tilde c_2^{(v)}, \tilde c_3^{(v)}$ empirically in \autoref{sec7} and \autoref{sec8}, alongside the exact algorithm's own constants, and use \autoref{theorem:33} together with \autoref{remark:speedup-full} to determine, for realistic values of $l$, $n$, and $\nu$, whether the exact or the approximate algorithm should be used, and with which outer loop.
\end{remark}

\section{Experimental setup}\label{sec7}

\subsection{Data sets and algorithms}

We use the UCR time series archive \citep{dauUCRTimeSeries2019e} to assess the speed and predictive performance of all the algorithms.
More precisely, we consider the $142$ univariate time series classification data sets from this archive: the 112 univariate equal-length no-missing-value data sets, plus the $30$ data sets of the ``bake off redux'' study \citep{middlehurstBakeReduxReview2024}.
We use the same 30 resamples, for each of the 142 data sets, that have been used in the most recent studies on this topic, making our results comparable to the ones from such studies.

We consider the following eight algorithms:
\begin{itemize}
    \item \mintinline{py}{MomentQuant("exact", "samples")} is our implementation of Quant with exact quantiles using a series-outer loop.
    \item \mintinline{py}{MomentQuant("exact", "intervals")} is our implementation of Quant with exact quantiles using an interval-outer loop.
    \item \mintinline{py}{MomentQuant("exact", "auto")} is our implementation of Quant with exact quantiles using automatically either the series-outer or interval-outer loop.
    \item \mintinline{py}{MomentQuant("approx", "samples")} is our implementation of Quant with approximate moment-based quantiles using a series-outer loop.
    \item \mintinline{py}{MomentQuant("approx", "intervals")} is our implementation of Quant with approximate moment-based quantiles using an interval-outer loop.
    \item \mintinline{py}{MomentQuant("approx", "auto")} is our implementation of Quant with approximate moment-based quantiles using automatically either the series-outer or interval-outer loop.
    \item \mintinline{py}{QuantFloat64} is the original implementation of Quant, in PyTorch, with the only change being the use of double precision instead of single precision.
    \item \mintinline{py}{QuantNumpy} is the one-to-one translation of \mintinline{py}{QuantFloat64} in NumPy, notably using the \mintinline{py}{numpy.quantile} function.
\end{itemize}
For MomentQuant, if the outer loop is not specified, it implicitly means that we refer to the automatic modes: \mintinline{py}{MomentQuant("exact")} refers to \mintinline{py}{MomentQuant("exact", "auto")}, and \mintinline{py}{MomentQuant("approx")} refers to \mintinline{py}{MomentQuant("approx", "auto")}.
Indeed, the whole point of the automatic modes is to try to automatically pick the faster version based on the size of the data set, and they are the default modes in our implementation.
The choice for the outer-loop (\mintinline{py}{"samples"}, \mintinline{py}{"intervals"} or \mintinline{py}{"auto"}) has no impact on the transformation output, and thus on the classification performance, but only on the runtime.
Finally, for classification performance, MomentQuant refers to \mintinline{py}{MomentQuant("approx")}, while Quant refers to either \mintinline{py}{MomentQuant("exact")} or \mintinline{py}{QuantFloat64} (which are theoretically identical).

Compared to Quant's reference implementation, we made several changes in order to improve runtime performance and to perform fair comparisons:
\begin{itemize}
    \item We changed the library used to perform all the mathematical operations, replacing PyTorch with NumPy. Indeed, we had to use Numba to obtain optimal performances for mathematical operations that could not be easily vectorized, and Numba was designed to be used with NumPy arrays. It would not be fair to have two different libraries in the comparisons as these two libraries might potentially have different optimizations and performances.
    \item The PyTorch function doing the most work in Quant's reference implementation, that is \mintinline{py}{torch.quantiles}, was known to be significantly slower than NumPy's equivalent function,\footnote{\url{https://github.com/pytorch/pytorch/issues/64947}} although very recent improvements have been made to improve the performance of \mintinline{py}{torch.quantiles} on CPU.\footnote{\url{https://github.com/pytorch/pytorch/pull/188394}}
    \item We actually did not use the \mintinline{py}{numpy.quantile} function in practice because it is very suboptimal when many quantiles are computed, which is the case in Quant for large $l$ because, at any level, the number of quantiles is proportional to the length of each interval, which itself is proportional to the series length $l$.
    We reported this issue.\footnote{\url{https://github.com/numpy/numpy/issues/32187}}
    \item Furthermore, we changed the precision used, replacing the simple precision with double precision.
    Simple precision is the default data type in PyTorch (and in deep learning in general) as it divides by two the random-access memory required and makes floating-point arithmetic slightly faster.
    In non-deep machine learning, double precision is more common.
    In order to have fair comparisons, we had to use the same precision everywhere, and made the arbitrary decision to use double precision.
\end{itemize}
We provide numerical results highlighting the impact of these changes in \autoref{sec8.1}.

For the classification step, we naturally use extremely randomized trees \citep{geurtsExtremelyRandomizedTrees2006a}, since it is the algorithm used with Quant \citep{dempsterQuantMinimalistInterval2024}.
We also use the default values for the hyperparameters of Quant (depth $d = 6$ and quantile divisor $\nu = 4$) and the same values for the hyperparameters of extremely randomized trees as in \citep{dempsterQuantMinimalistInterval2024}.

\subsection{Metrics and comparisons}

We use the accuracy (ACC) as the main metric in our comparative analyses, but we also report the scores for four other metrics on the 142 UCR data sets in order to make our algorithms easily comparable to existing ones.
These four metrics are balanced accuracy (BALACC), area under the receiver operating characteristic curve (AUROC), negative log-likelihood (NLL) and F1-score (F1).

In order to compare the performance of two algorithms on the 142 data sets, we use paired t-tests and Wilcoxon tests to test the equality of means and medians respectively, with a significance level of $\alpha = 0.05$.

\subsection{Experiments, implementation and reproducibility}

All the experiments were run on a MacBook Air (M1, 2020) with 16 GB RAM using Python 3.13.14.
We used the following Python packages to implement MomentQuant:
\begin{itemize}
    \item \emph{Numba} (0.63.1) \citep{lamNumbaLLVMbasedPython2015} is an open source just-in-time compiler that translates a subset of Python and NumPy code into fast machine code.
    \item \emph{NumPy} (2.3.5) \citep{harrisArrayProgrammingNumPy2020} is a standard Python package for manipulating $n$-dimensional arrays.
    \item \emph{scikit-learn} (1.8.0) \citep{pedregosaScikitlearnMachineLearning2011} is a popular Python package dedicated to machine learning, providing implementations of many algorithms as well as utility tools.
\end{itemize}

Additionally, to perform all the experiments, including saving the results and generating the figures, we also used the following Python packages:
\begin{itemize}
    \item \emph{aeon} (1.5.0) \citep{middlehurstAeonPythonToolkit2024} is a popular Python package for time series machine learning tasks such as classification, regression, clustering, anomaly detection, segmentation and similarity search.
    \item \emph{joblib} (1.5.3) is a set of tools to provide lightweight pipelining in Python, notably transparent disk-caching of functions and lazy re-evaluation, as well as easy simple parallel computing.
    \item \emph{Matplotlib} (3.11.1) \citep{hunterMatplotlib2DGraphics2007} is a comprehensive library for creating static, animated, and interactive visualizations in Python.
    \item \emph{pandas} (2.3.3) \citep{mckinney-proc-scipy-2010} is a fast, powerful, flexible and easy to use open source data analysis and manipulation tool.
    \item \emph{SciPy} (1.17.1) \citep{virtanenSciPy10Fundamental2020}
    \item \emph{seaborn} (0.13.2) \citep{waskomSeabornStatisticalData2021} is a Python data visualization library based on Matplotlib, providing a high-level interface for drawing attractive and informative statistical graphics.
    \item \emph{PyTorch} (2.13.0) \citep{paszkePyTorchImperativeStyle2019} is an optimized tensor library for deep learning.
\end{itemize}

The source code is publicly available on a GitHub repository.\footnote{\url{https://github.com/johannfaouzi/moment-quant}}
We provide detailed instructions so that our results can be easily reproduced by anyone.

\section{Results}\label{sec8}

We present our results in several sections.
\autoref{sec8.1} highlights the impact of the implementation choice on the runtime.
In~\autoref{sec8.2}, we provide empirical results comparing theory and practice.
\autoref{sec8.3} compares the transformation runtimes for different algorithms.
In~\autoref{sec8.4}, we compare the classification performance and the end-to-end runtimes for different algorithms.
\autoref{sec8.5} provides empirical comparisons between the true and approximated, moment-based quantiles.

\subsection{The impact of the implementation choice}\label{sec8.1}

We mentioned in \autoref{sec5} that we reimplemented Quant in NumPy.
We now provide quantitative results justifying this decision.
Our benchmark includes $300$ series at each of $l \in \{512, 1024, \ldots, 32768\}$, and three implementations: \mintinline{py}{QuantFloat64}, \mintinline{py}{QuantNumpy}, and \mintinline{py}{MomentQuant("exact", "intervals")}.
\mintinline{py}{MomentQuant("exact", "intervals")} is implemented in NumPy only, and is very similar to \mintinline{py}{QuantNumpy}, but we compute the quantiles \emph{manually} instead of using the \mintinline{py}{numpy.quantile} function: we use the \mintinline{py}{numpy.sort} function to sort the (sub)series and compute the quantiles based on the sorted values using linear interpolation.
We insist on the fact that all three implementations are theoretically and algorithmically strictly identical.

\autoref{table:implementation_choice_runtimes} provides the measured runtimes.
\mintinline{py}{QuantNumpy} was only timed up to $l=4096$, since it was already unambiguously the slowest implementation well before that point, and its relative runtimes kept growing with $l$.
We also omit multi-threading runtimes for NumPy-based implementations, since the functions involved do not have native multi-threading, so the runtimes for single-threading and multi-threading are identical.

\begin{table}[tbp]
    \caption{Wall-clock runtime of \mintinline{py}{MomentQuant("exact", "intervals")}, \mintinline{py}{QuantFloat64} (1 thread and $8$ threads), and \mintinline{py}{QuantNumpy}, as a function of series length $l$, and each implementation's runtime relative to \mintinline{py}{MomentQuant("exact", "intervals")}'s runtimes.}
    \label{table:implementation_choice_runtimes}
    \centering
    \begin{tabular}{rcccc}
        \toprule
        \makecell{$l$} & \makecell{\mintinline{py}{MomentQuant}\\\mintinline{py}{("exact", "intervals")}} & \makecell{\mintinline{py}{QuantFloat64}\\ (1 thread)} & \makecell{\mintinline{py}{QuantFloat64}\\ (8 threads)} & \makecell{\mintinline{py}{QuantNumpy}} \\
        \midrule
        512   & 0.073s & 0.156s  & 0.137s & 0.348s \\
        1024  & 0.175s & 0.340s  & 0.254s & 1.077s \\
        2048  & 0.397s & 0.768s  & 0.463s & 3.621s \\
        4096  & 0.895s & 1.731s  & 0.786s & 12.915s \\
        8192  & 1.982s & 3.842s  & 1.388s & -- \\
        16384 & 4.329s & 8.490s  & 2.609s & -- \\
        32768 & 9.373s & 18.537s & 4.905s & -- \\
        \midrule
        512   & 1.00$\times$ & 2.14$\times$ & 1.89$\times$ & 4.79$\times$ \\
        1024  & 1.00$\times$ & 1.95$\times$ & 1.45$\times$ & 6.17$\times$ \\
        2048  & 1.00$\times$ & 1.94$\times$ & 1.17$\times$ & 9.13$\times$ \\
        4096  & 1.00$\times$ & 1.93$\times$ & 0.88$\times$ & 14.43$\times$ \\
        8192  & 1.00$\times$ & 1.94$\times$ & 0.70$\times$ & -- \\
        16384 & 1.00$\times$ & 1.96$\times$ & 0.60$\times$ & -- \\
        32768 & 1.00$\times$ & 1.98$\times$ & 0.52$\times$ & -- \\
        \bottomrule
    \end{tabular}
\end{table}

Although the three implementations are theoretically strictly identical, their runtimes are much different.
Focusing on the single-threaded runtimes first, we see that our implementation is around $2$ times faster than the one in PyTorch across all the value of $l$, and much faster than the implementation using the \mintinline{py}{numpy.quantile} function.
After digging into the source code of both libraries, we managed to understand the source of both these differences:
\begin{itemize}
    \item PyTorch's implementation of sorting always performs the sorting of both the values (\mintinline{py}{sort}) and the indices (\mintinline{py}{argsort}), which is not the case in NumPy.
    We show in \autoref{sec9} that, contrary to a naive swap-counting argument, this roughly $2\times$ gap does not come from PyTorch performing more comparisons or swaps than NumPy.
    \item The \mintinline{py}{numpy.quantile} function does not sort all the values, and performs partitions instead.
    Indeed, in order to compute quantiles, it is not necessary to sort all the values: the only values needed are the values before and after each quantile of interest (in order to perform the linear interpolations).
    However, the cost of a partition for a vector of size $m$ is $\Theta(m)$, so the cost of $a$ independent partitions is $\Theta(m \cdot a)$, which is much worse than $\Theta(m \cdot \log(m))$ (the cost of a full sort) for large values of $a$, which is the case in Quant's transformation. Indeed, the number of quantiles computed in Quant is (approximately) proportional to the length of each subseries, so the complexity of using independent partitions is $\Theta(m^2)$, hence the non-linear relative runtime increase for \mintinline{py}{QuantNumpy} when $l$ increases.
\end{itemize}

Regarding multi-threading, we observe that the benefit becomes bigger as the series length increases.
These results are logical: the larger the series, the longer the sorting takes, while the overhead remains constant.
On our machine with 8 threads, \mintinline{py}{QuantFloat64} with multi-threading becomes faster than \mintinline{py}{MomentQuant} for a series length between $2048$ and $4096$.

Taken together, these results illustrate how large a difference implementation choice alone can make to several functions computing the exact same quantities.

\subsection{Theory versus practice}\label{sec8.2}

We derived closed-form cost models for MomentQuant's exact and approx modes in \autoref{sec4} and \autoref{sec6}, and for
both outer-loop strategies (interval-outer and series-outer) in \autoref{sec5} and \autoref{sec6.3}.
These cost models depend on multiple hardware-specific constants:
\begin{itemize}
    \item the sort and extraction constants for the exact mode: $c_1^{(v)}, c_2^{(v)}, c_3^{(v)}$,
    \item the moment update, Cornish-Fisher evaluation and moment conversion for the approx mode: $\tilde c_1^{(v)}, \tilde c_2^{(v)}, \tilde c_3^{(v)}$, and
    \item the per-call dispatch overheads $\kappa^{(I)}(l)$, $\kappa^{(S)}(l)$, $\tilde\kappa^{(I)}(l)$, and $\tilde\kappa^{(S)}(l)$ for the interval-outer and series-outer versions, and the exact and approx modes respectively.
\end{itemize}
These constants say nothing about whether the resulting formulas actually predict real wall-clock time, only that the formulas are internally consistent.
We checked the former directly, for all five estimators that this project has a cost model for: MomentQuant's four (mode, version) combinations and QuantFloat64 (structurally identical to exact/intervals).

\subsubsection{Estimating the hardware-specific constants}

All constants above were estimated from a single, dedicated calibration sweep: series length $l$ crossed with number of series $n$, over a grid spanning $l \in \{16, \ldots, 8192\}$ and $n \in \{1, \ldots, 10\,000\}$, timed single-threaded.
At each $l$, the measured runtime as a function of $n$ is affine (an intercept plus a term linear in $n$, see \autoref{theorem:19} and \autoref{theorem:20}).
Regressing runtime against $n$ at each $l$ separately recovers, per $l$, a per-series marginal cost (the slope) and a fixed per-call overhead (the intercept).

The constants $c_1^{(v)}, c_2^{(v)}, c_3^{(v)}$ are then obtained by a single ordinary-least-squares regression of the per-$l$ marginal costs against the theorem's own basis functions (sort work, number of quantiles extracted, and total interval width), computed exactly from the real interval layout MomentQuant builds internally (not from the power-of-2-restricted closed forms of \autoref{sec4}, which would not apply to every $l$ on the grid).
$\kappa^{(I)}(l)$ and $\kappa^{(S)}(l)$ are obtained from the same per-$l$ intercepts, modeled as $\kappa(l) = A - B/l$ (\autoref{remark:kappa-S-l-dependence}).

The approx-mode constants $\tilde c_1, \tilde c_2, \tilde c_3$ are estimated the same way in principle, but from a dedicated microbenchmark that times the moment-computation and Cornish-Fisher-extraction phases separately, rather than only ever as a fused total.
The two natural regressors for the fused fit are collinear enough (correlation $>0.999$ in our experiments) that a joint fit leaves $\tilde c_2$ and $\tilde c_3$ only weakly identified, occasionally producing a physically implausible negative coefficient.
Timing the two phases separately avoids this by regressing against a far less collinear pair of basis functions.

All constants are specific to the fixed configuration used throughout this study ($d=6$, $\nu=4$) and to the machine that they were measured on.

\subsubsection{Holdout validation}

We also performed holdout validation of these constants by comparing the theoretical and empirical runtimes for new experiments.
We used a grid of series lengths spanning $l \in \{16, \ldots, 4096\}$ and a fixed $n = 698$ number of series.
Furthermore, we used a grid of numbers of series spanning $n \in \{64, \ldots, 16\,384\}$ and a fixed $l = 384$ series length.
Not only none of the $(n, l)$ combination in this holdout validation appears in the grid used to estimate the hardware-specific constants, but these runtimes are also obtained from new experiments.

\begin{table}[tbp]
    \caption{Relative errors between the theoretical and empirical runtimes.}
    \label{table:theory_vs_actual_errors}
    \centering
    \begin{tabular}{lcccc}
        \toprule
        Estimator & Mean & \makecell{Standard\\ deviation} & Minimum & Maximum \\
        \midrule
        \mintinline{py}{MomentQuant("exact", "samples")}    & $-1.5\%$ & $1.8\%$  & $-5.4\%$ & $+0.6\%$ \\
        \mintinline{py}{MomentQuant("exact", "intervals")}  & $+3.6\%$ & $15.1\%$ & $-37.0\%$ & $+23.3\%$ \\
        \mintinline{py}{MomentQuant("approx", "samples")}   & $-2.8\%$ & $24.5\%$ & $-68.4\%$ & $+24.5\%$ \\
        \mintinline{py}{MomentQuant("approx", "intervals")} & $+5.7\%$ & $24.8\%$ & $-24.4\%$ & $+38.0\%$ \\
        \mintinline{py}{QuantFloat64}                       & $+15.4\%$ & $12.6\%$ & $-6.6\%$ & $+45.2\%$ \\
        \bottomrule
    \end{tabular}
\end{table}

\autoref{table:theory_vs_actual_errors} shows that the cost model transfers well to held-out $(l,n)$ combinations, with a mean error within a few percents of zero for four of the five estimators.
\mintinline{py}{MomentQuant("exact", "samples")} is the tightest fit by a wide margin (mean $-1.5\%$, standard deviation $1.8\%$, never off by more than $5.4\%$ in either direction).
The series-outer kernel's compiled, single-call-per-series structure is evidently the easiest of the five to model precisely.
\mintinline{py}{QuantFloat64} is the outlier: despite sharing the same functional form as \mintinline{py}{MomentQuant("exact", "intervals")}, its predictions carry the largest systematic bias of any estimator ($+15.4\%$ on average, consistently in the same direction),
These results suggest that fitting its own $c_1, c_2, c_3$ and $\kappa^{(I)}(l)$ directly, rather than reusing MomentQuant's exact-mode constants, still leaves some PyTorch-specific overhead unaccounted for that this cost model was not designed to capture.
\mintinline{py}{MomentQuant("approx", "intervals")}, \mintinline{py}{MomentQuant("exact", "intervals")}, \mintinline{py}{MomentQuant("approx", "samples")} show wider spreads, with individual worst-case errors reaching $37$ to $68\%$.
Inspecting these directly shows the largest errors cluster at the smallest $l$ or $n$ values in each sweep, exactly where the fixed per-call dispatch overhead $\kappa(l)$ (the hardest-to-model term) makes up the largest share of the total predicted cost.

\subsubsection{Dispatch crossover}

Beyond checking whether the cost model predicts raw runtime, we can check whether it predicts the right \emph{decision}: at what sample count $n$ should the automatic switch from the series-outer to the interval-outer kernel.
\autoref{theorem:21} and \autoref{theorem:33} give a formal crossover sample size $n^*(l,d,\nu)$, assembled bottom-up from the same per-operation constants validated above, evaluated here with their all-four-representations rather than the raw-representation-only theorem statement.
The calibrated threshold actually used by the automatic switch is fit a different way entirely: directly against measured runtimes, as a single pooled, effectively $l$-independent overhead, precisely because \autoref{theorem:21}'s own remark argues that composing $n^*$ representation-by-representation, and $l$-value by $l$-value, from the theorem's own $l$-dependent constants is impractical to estimate reliably.
\autoref{table:dispatch_crossover_comparison} compares the two crossovers directly, across the calibration grid's own $l \in \{16, \ldots, 8192\}$.

\begin{table}[tbp]
    \caption{
        Comparisons between the calibrated dispatch $n_{\text{heuristic}}$ and theorem-derived $n^*_{\text{theorem}}$ (\autoref{theorem:21} and \autoref{theorem:33} extended to all four representations) crossovers, by mode and series length $l$, single-threading only.
    }
    \label{table:dispatch_crossover_comparison}
    \centering
    \begin{tabular}{llccc}
        \toprule
        Mode & $l$ & $n_{\text{heuristic}}$ & $n^*_{\text{theorem}}$ & Ratio \\
        \midrule
        \multirowcell{6}{\mintinline{py}{MomentQuant("exact")}} & 16   & 23\,170 & 20.0   & 1161$\times$ \\
         & 64   & 2\,482  & 44.8   & 55$\times$ \\
         & 256  & 395     & 51.2   & 7.7$\times$ \\
         & 1024 & 72.4    & 12.8   & 5.7$\times$ \\
         & 4096 & 14.3    & 2.9    & 5.0$\times$ \\
         & 8192 & 6.5     & 1.4    & 4.7$\times$ \\
        \midrule
        \multirowcell{6}{\mintinline{py}{MomentQuant("approx")}} & 16   & 39\,964 & 1\,057 & 38$\times$ \\
         & 64   & 9\,991  & 610    & 16$\times$ \\
         & 256  & 2\,498  & 216    & 11.6$\times$ \\
         & 1024 & 624     & 75.8   & 8.2$\times$ \\
         & 4096 & 156     & 28.5   & 5.5$\times$ \\
         & 8192 & 78.1    & 17.9   & 4.3$\times$ \\
        \bottomrule
    \end{tabular}
\end{table}

Theory and practice disagree sharply, and always in the same direction.
At every $l$ tested and in both modes, we have $n_{\text{heuristic}} \gg n^*_{\text{theorem}}$: the calibrated rule keeps using the series-outer kernel for a far larger sample count than the theorem says is optimal.
However, the size of the disagreement is not stable.
It is largest at the smallest $l$ (a factor of $1161\times$ for exact mode at $l=16$, $38\times$ for approx mode) and shrinks steadily, by two to three orders of magnitude, as $l$ grows, settling into a roughly constant $4$ to $5\times$ (exact) or $4$ to $6\times$ (approx) residual gap that persists even at $l=8192$, the largest $l$ on the calibration grid.
It lines up with a mechanism that we already anticipated rather than a new one: the calibrated heuristic fits a single, pooled overhead with no $l$-dependence of its own, while $\kappa^{(I)}(l)$ and $\tilde\kappa^{(I)}(l)$, the quantities that the theorem actually plugs in, are themselves strongly $l$-dependent and largest in relative terms at small $l$.
A pooled, $l$-independent stand-in for a genuinely $l$-dependent quantity should disagree with the theorem most where that quantity varies fastest (small $l$) and least where it flattens out (large $l$), which is exactly the shape \autoref{table:dispatch_crossover_comparison} shows.
This is consistent with the pooling simplification being the dominant source of disagreement between the two models, rather than a more basic misspecification of either one.
\autoref{theorem:21} and \autoref{theorem:33}'s crossovers should be read as validating the existence and functional form of a crossover, not as a formula that can substitute for the calibrated, measurement-fit threshold at any particular $l$.

In our implementation of \mintinline{py}{MomentQuant}, the automatic version uses the calibrated dispatch crossovers instead of the theorem-derived crossovers, both in the single-threaded and multithreaded setups.

\subsection{Transformation runtimes}\label{sec8.3}

We now move from controlled, single-length microbenchmarks to a full, realistic workload: transforming both the training and test sets of all $142$ data sets of the UCR archive, for all six combinations of \mintinline{py}{MomentQuant}, alongside \mintinline{py}{QuantFloat64} as an external reference, in both single-threaded and multithreaded (8 threads) setups.

\begin{table}[tbp]
    \caption{Total wall-clock runtime to transform all $142$ UCR archive data sets (training and test set merged), using 1 thread and $8$ threads, and the resulting speedup.}
    \label{table:ucr142_transform_runtimes}
    \centering
    \begin{tabular}{lrrr}
        \toprule
        Estimator & 1 thread & 8 threads & Speedup \\
        \midrule
        \mintinline{py}{MomentQuant("exact", "samples")}    & 117.97s & 26.67s & 4.42$\times$ \\
        \mintinline{py}{MomentQuant("exact", "intervals")}  & 64.80s  & 65.27s & 0.99$\times$ \\
        \mintinline{py}{MomentQuant("exact", "auto")}       & 69.15s  & 26.56s & 2.60$\times$ \\
        \mintinline{py}{MomentQuant("approx", "samples")}   & 38.45s  & 23.41s & 1.64$\times$ \\
        \mintinline{py}{MomentQuant("approx", "intervals")} & 39.44s  & 40.25s & 0.98$\times$ \\
        \mintinline{py}{MomentQuant("approx", "auto")}      & 38.74s  & 23.36s & 1.66$\times$ \\
        \mintinline{py}{QuantFloat64}                       & 105.55s & 65.63s & 1.61$\times$ \\
        \bottomrule
    \end{tabular}
\end{table}

\autoref{table:ucr142_transform_runtimes} provides the corresponding runtimes.
In the single-threaded setup, we obtain the same ordering as the one established in \autoref{table:implementation_choice_runtimes}.
\mintinline{py}{MomentQuant("exact", "intervals")} ($64.80$s) is faster than both \mintinline{py}{MomentQuant("exact", "samples")} ($117.97$s, $1.8\times$ slower) and \mintinline{py}{QuantFloat64} ($105.55$s, $1.6\times$ slower).
These results are not surprising, since the archive mixes many $(n, l)$ combinations, some of which genuinely favor the interval kernel.
The approx mode is faster still, as expected from its better asymptotic complexity: all three variants cluster tightly between $38$ and $39$s, roughly $1.7$ to $1.8\times$ faster than the best exact-mode variant and $2.7$ to $3.1\times$ faster than \mintinline{py}{QuantFloat64}.

The multithreaded setup benefits the two kernels very differently.
The interval-outer kernel shows no speedup at all from $8$ threads, since it is not parallelized in this implementation, while the series-outer kernel (parallelized across series via Numba) speeds up substantially: $4.42\times$ for exact mode, a more modest $1.64\times$ for approx mode.
\mintinline{py}{QuantFloat64} also benefits from PyTorch's own default multi-threading ($1.61\times$), consistent with the results in \autoref{table:implementation_choice_runtimes}.
We tried to parallelize the interval-outer kernel, but obtained worse results than in the single-threaded setup, which is why the interval-outer kernel always use single-threading.

The automatic variants track whichever manual variant is actually faster in each regime closely, without ever being the worst choice.
In the single-threaded setup, \mintinline{py}{MomentQuant("exact", "auto")} ($69.2$s) lands within $7\%$ of the best single-threaded choice (\mintinline{py}{MomentQuant("exact", "intervals")}, $64.8$s), so it does not always pick the faster variant.
At $8$ threads, \mintinline{py}{MomentQuant("exact", "auto")} ($26.56$s) essentially matches the best choice outright (\mintinline{py}{MomentQuant("exact", "samples")}, $26.67$s), since the parallel-specific threshold correctly steers the large majority of the archive toward the now much faster samples kernel.
The same holds for the approx mode in both settings (\mintinline{py}{MomentQuant("approx", "auto")} within $1\%$ of the best manual variant single-threaded, and matching it at $8$ threads).
This is an end-to-end validation of the calibrated dispatch rule described above, at a scale and data set diversity well beyond the single-length microbenchmark that it was calibrated on.

\subsection{Classification performance and end-to-end runtimes}\label{sec8.4}

We now evaluate the automatic versions of both the approximate (\mintinline{py}{MomentQuant("approx")}) and exact (\mintinline{py}{MomentQuant("exact")}) modes, alongside \mintinline{py}{QuantFloat64} as an external reference, on all $142$ UCR archive data sets, using $30$ resamples per data set.
Regarding the values of the hyperparameters of the classification algorithms, we use the same ones as in \citep{dempsterQuantMinimalistInterval2024}.

\subsubsection{Classification performance}

First, we compare three classification algorithms built on top of MomentQuant: extremely randomized trees \citep{geurtsExtremelyRandomizedTrees2006a}, random forests \citep{breimanRandomForests2001a}, and Ridge \citep{hoerlRidgeRegressionApplications1970}.
In \citep{dempsterQuantMinimalistInterval2024}, the authors showed that Quant performed significantly better with extremely randomized trees than with the other two algorithms.
\autoref{fig:classifier_comparison} shows the pairwise accuracy between the three algorithms, with extremely randomized trees as the reference.
MomentQuant is significantly better with extremely randomized trees (mean accuracy $0.8505$) than with random forests (mean accuracy $0.8396$, $p < 0.001$) and Ridge (mean accuracy $0.7678$, $p < 0.001$).
These results are consistent with the ones in \citep{dempsterQuantMinimalistInterval2024}.

\begin{figure}
    \begin{subfigure}{0.49\textwidth}
        \includegraphics[width=\textwidth]{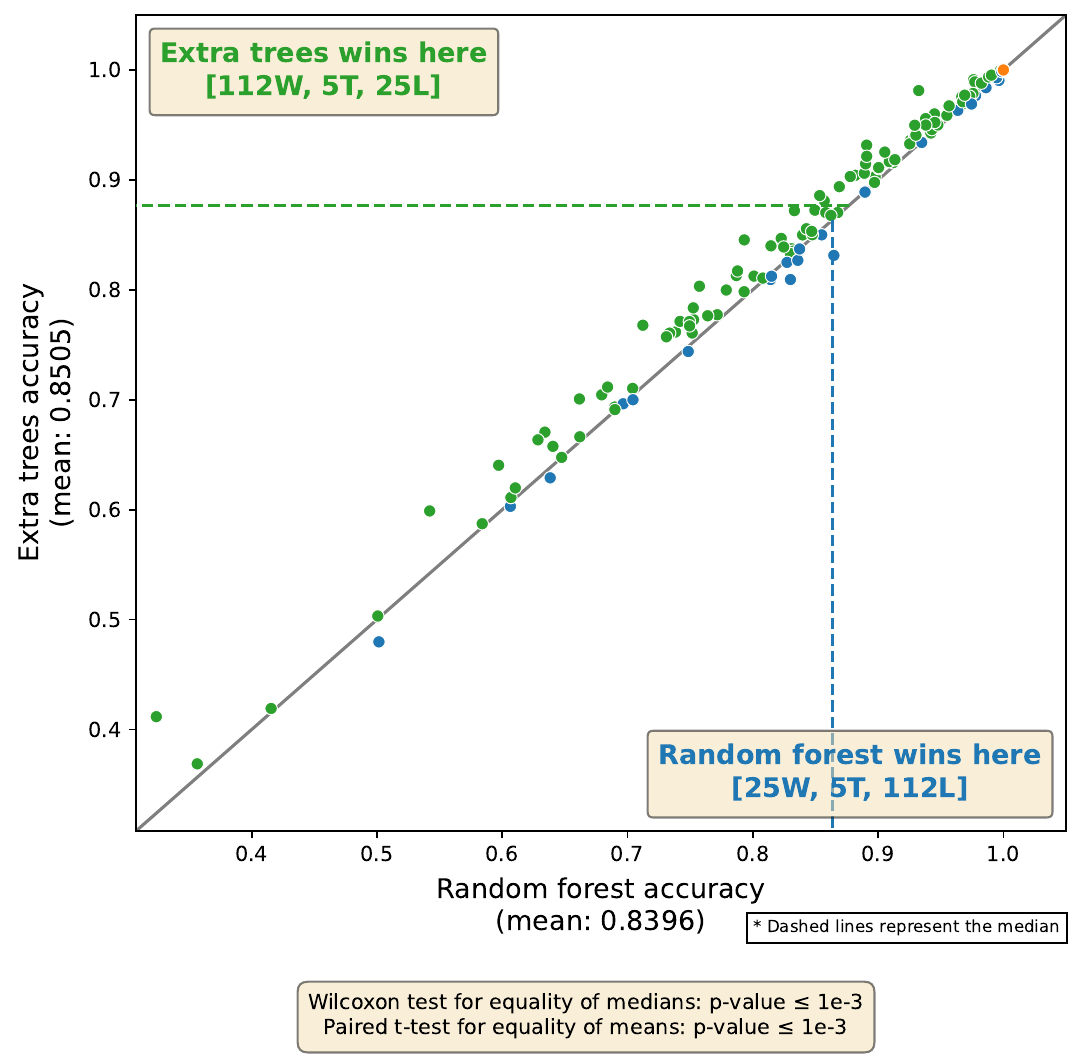}
    \end{subfigure}%
    \hfill
    \begin{subfigure}{0.49\textwidth}
        \includegraphics[width=\textwidth]{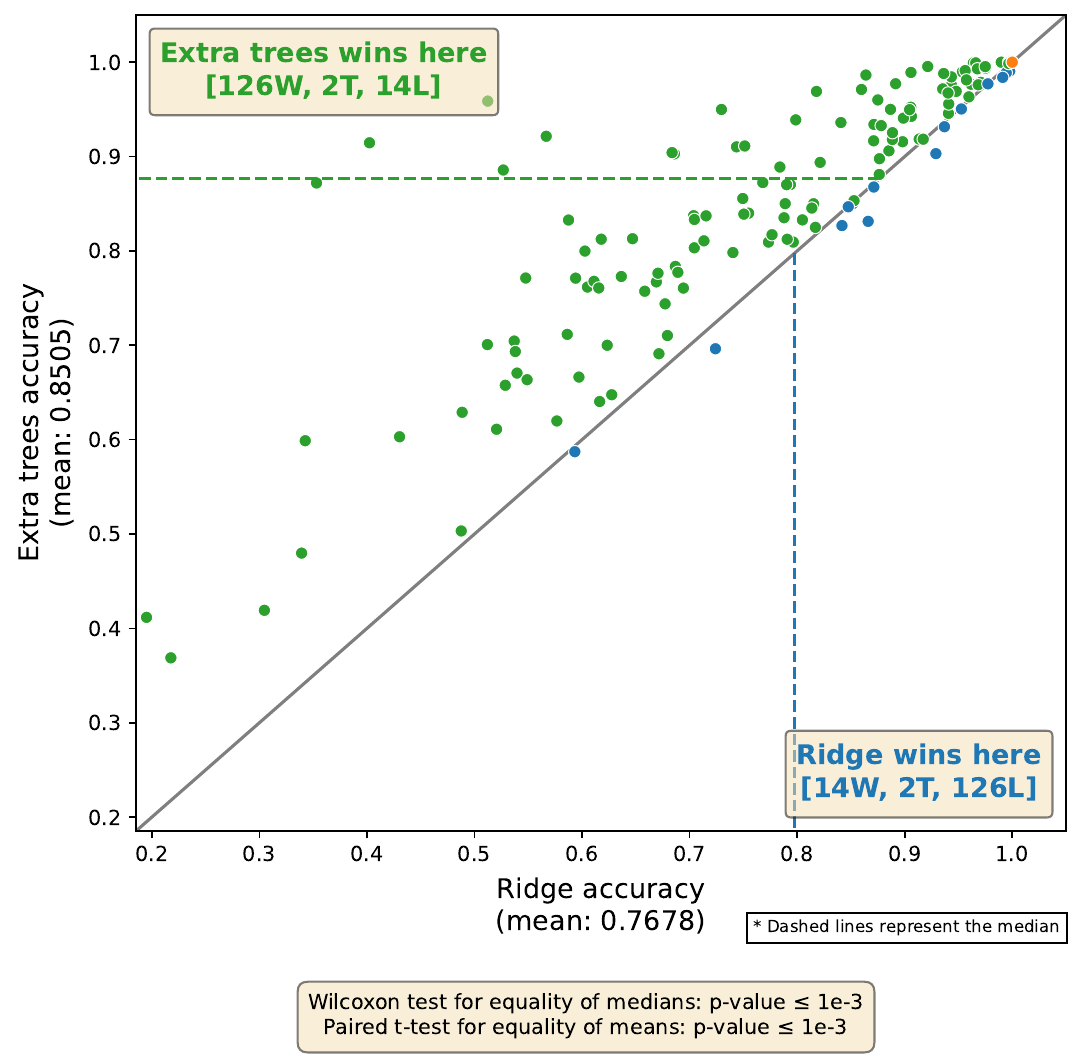}
    \end{subfigure}%
    \caption{
        Pairwise accuracy for MomentQuant with extremely randomized trees (default), compared to with random forest (left) and Ridge (right) in terms of accuracy on the 142 UCR data sets.
        The mean accuracy scores are computed over 30 resamples for each data set.
    }
    \label{fig:classifier_comparison}
\end{figure}

We now compare the classification performances of MomentQuant with both \mintinline{py}{MomentQuant("exact")} and \mintinline{py}{QuantFloat64}.
\autoref{table:classification_performance} provides the classification performance of \mintinline{py}{MomentQuant("approx")}, \mintinline{py}{MomentQuant("exact")} and \mintinline{py}{QuantFloat64}, on the 142 UCR data sets.
\mintinline{py}{MomentQuant("approx")} and \mintinline{py}{QuantFloat64} have (almost) the same performance (at least up to 4 decimals, except for the negative log-likelihood).
\mintinline{py}{MomentQuant("approx")} is slightly behind \mintinline{py}{MomentQuant("exact")} for all five metrics, but with less than $0.01$ for each metric.

\begin{table}[tbp]
    \caption{
        Classification performance of \mintinline{py}{MomentQuant("approx")}, \mintinline{py}{MomentQuant("exact")} and \mintinline{py}{QuantFloat64}, on the 142 UCR data sets.
        For each algorithm and each metric, the mean score is computed over the $142 \times 30 $ (data set, resample) pairs.
        The direction of each arrow indicates in which direction a better score is for each metric.
    }
    \label{table:classification_performance}
    \centering
    \begin{tabular}{lccccc}
        \toprule
        Estimator & ACC $\uparrow$ & BALACC $\uparrow$ & AUROC $\uparrow$ & NLL $\downarrow$ & F1 $\uparrow$ \\
        \midrule
        \mintinline{py}{MomentQuant("approx")} & 0.8505 & 0.8264 & 0.9541 & 0.5345 & 0.8262 \\
        \mintinline{py}{MomentQuant("exact")}  & 0.8551 & 0.8301 & 0.9559 & 0.5332 & 0.8305 \\
        \mintinline{py}{QuantFloat64}          & 0.8551 & 0.8301 & 0.9559 & 0.5333 & 0.8305 \\
        \bottomrule
    \end{tabular}
\end{table}

\autoref{fig:approx_vs_exact_vs_quant_accuracy} shows the pairwise accuracy between the three algorithms, with MomentQuant as the reference.
MomentQuant has a slightly lower mean accuracy ($0.8505$) than both \mintinline{py}{MomentQuant("exact")} ($0.8551$) and \mintinline{py}{QuantFloat64} ($0.8551$), but the differences are significant ($p = 0.002$).
The worst raw difference in terms of accuracy occurs on the ShapeletSim data set, where accuracy drops from $0.9852$ to $0.8269$ ($-0.1583$).
Otherwise, the raw accuracy difference lies between $-0.0470$ (ACSF1) and $+0.0339$ (Lightning2).
This outlier (ShapeletSim) is easily visible in \autoref{fig:approx_vs_exact_vs_quant_accuracy}.

Although \mintinline{py}{MomentQuant("exact")} and \mintinline{py}{QuantFloat64} are theoretically identical, they actually slightly differ in practice due to the different libraries used (NumPy and PyTorch).
We identified two reasons that can make the results very slightly different:
\begin{itemize}
    \item \ul{Different mean computation}: NumPy and PyTorch compute the mean via different summation orders, and double-precision addition is known to be non-associative because of rounding errors possibly occurring at each step. This difference can make centered quantiles very slightly different depending on the library.
    \item \ul{Discrete Fourier transform}: We compared the results of the discrete Fourier transform for multiple series lengths between both libraries and identified one value ($2047$) where the results were very slightly different. More values of series lengths leading to slightly different results is a possibility.
\end{itemize}
On the 142 data sets, \mintinline{py}{MomentQuant("exact")} has 10 wins, 129 ties and 3 losses compared to \mintinline{py}{QuantFloat64}, and the difference in mean accuracy ($0.855099$ vs $0.855077$) is not significant ($p = 0.087$).

\begin{figure}
    \begin{subfigure}{0.49\textwidth}
        \includegraphics[width=\textwidth]{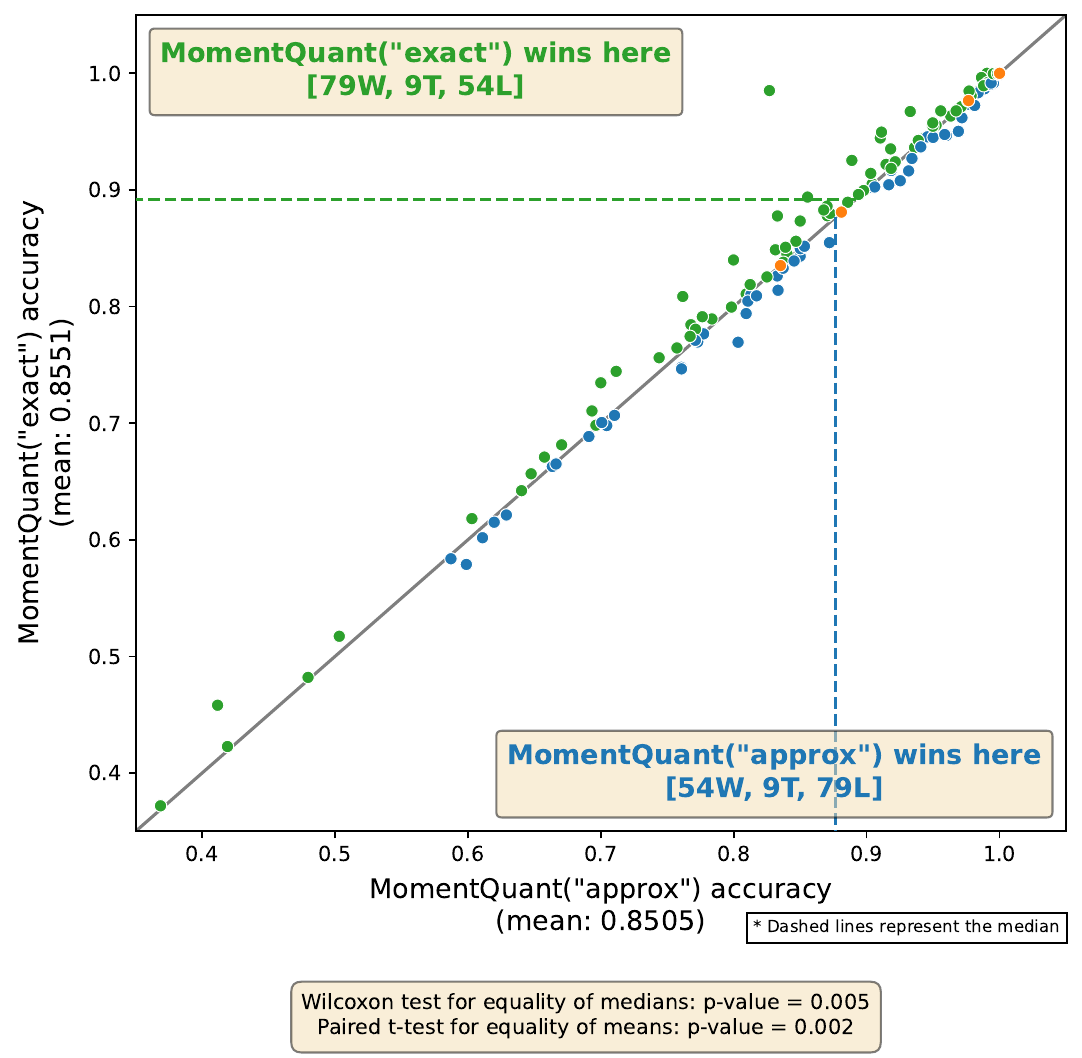}
    \end{subfigure}%
    \hfill
    \begin{subfigure}{0.49\textwidth}
        \includegraphics[width=\textwidth]{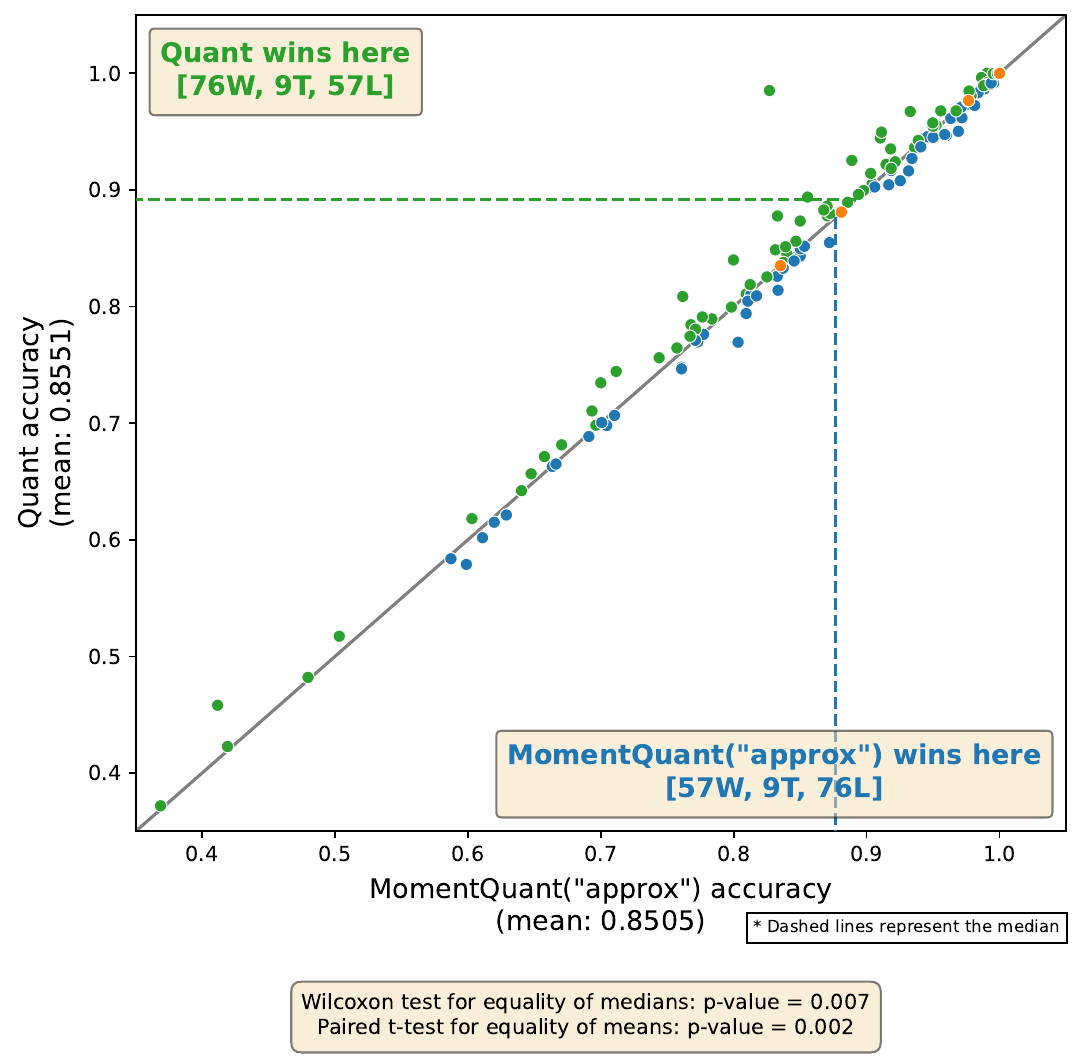}
    \end{subfigure}%
    \caption{
        Pairwise accuracy for \mintinline{py}{MomentQuant("approx")} compared to \mintinline{py}{MomentQuant("exact")} (left) and \mintinline{py}{QuantFloat64} (right) in terms of accuracy on the 142 UCR data sets.
        The mean accuracy scores are computed over 30 resamples for each data set.
    }
    \label{fig:approx_vs_exact_vs_quant_accuracy}
\end{figure}

\subsubsection{End-to-end and inference-only runtimes}

We now compare the runtimes of the whole pipeline (transformation step + classification step) instead of the transformation step only.
\autoref{table:total_and_inference_runtimes} provides the total runtimes of \mintinline{py}{MomentQuant("approx")}, \mintinline{py}{MomentQuant("exact")}, and \mintinline{py}{QuantFloat64}, summed over the 142 UCR data sets and averaged over the 30 resamples, in the single- and multithreaded setups.

\begin{table}[tbp]
    \caption{Total and inference-only wall-clock runtime, summed over $142$ data sets and averaged over the $30$ resamples, in the single-threaded and multithreaded setups.}
    \label{table:total_and_inference_runtimes}
    \centering
    \begin{tabular}{lcccc}
        \toprule
        Estimator & \makecell{Total\\ (1 thread)} & \makecell{Inference-only \\ (1 thread)} & \makecell{Total\\ (8 threads)} & \makecell{Inference-only\\ (8 threads)} \\
        \midrule
        \mintinline{py}{MomentQuant("approx")} & 401s & 29s & 156s & 22s \\
        \mintinline{py}{MomentQuant("exact")}  & 436s & 47s & 161s & 25s \\
        \mintinline{py}{QuantFloat64}          & 474s & 68s & 200s & 48s \\
        \bottomrule
    \end{tabular}
\end{table}

Starting with the total (training + inference) runtimes, the gains of both \mintinline{py}{MomentQuant("approx")} and \mintinline{py}{MomentQuant("exact")} are mild compared to \mintinline{py}{QuantFloat64}, but relatively higher in the multithreaded setup than in the single-threaded setup.
In the single-threaded setup, the total runtimes are respectively 401s ($-15\%$), 436s ($-8\%$), and 474s.
In the multithreaded setup, the total runtimes are respectively 156s ($-22\%$), 161s ($-19\%$), and 200s.
Indeed, as noted in \citep{dempsterQuantMinimalistInterval2024}, most of the total runtime comes from training the classification algorithm.
This is even more exacerbated by the runtime improvements of \mintinline{py}{MomentQuant("approx")} and \mintinline{py}{MomentQuant("exact")}.
\autoref{fig:stage_breakdown} shows the breakdown of the total single-threaded runtime of \mintinline{py}{MomentQuant("approx")}, \mintinline{py}{MomentQuant("exact")}, and \mintinline{py}{QuantFloat64} into its four stages: training (transformation, classification) and inference (transformation, classification).
The breakdown is averaged over the 142 UCR data sets, but split into four bins for the series length: short ($l < 150$, $35$ data sets), medium-short ($150 \leq l < 320$, $36$ data sets), medium-long ($320 \leq l < 720$, $38$ data sets) and long ($l > 720$, $33$ data sets).
In every situation, training the classification algorithm is the step taking the most (relative) time.
The differences between \mintinline{py}{MomentQuant("approx")}, \mintinline{py}{MomentQuant("exact")}, and \mintinline{py}{QuantFloat64} are the biggest for long series, which is consistent with our previous results.
Indeed, the computational complexities for \mintinline{py}{MomentQuant("approx")} and \mintinline{py}{MomentQuant("exact")} in the saturated regime are $\Theta(d \cdot l)$ and $\Theta(d \cdot l \cdot \log(l))$ respectively.
The absolute difference becomes bigger for larger values of $\log(l)$, that is for larger values of $l$.

\begin{figure}
    \includegraphics[width=\textwidth]{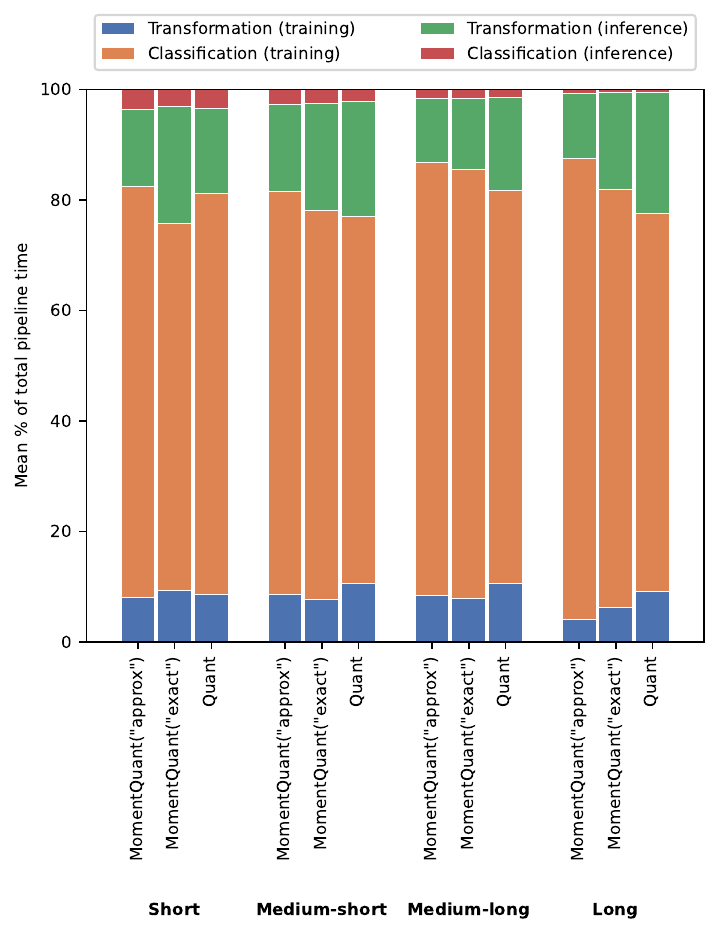}
    \caption{
        Breakdown of the total wall-clock single-threaded runtime of \mintinline{py}{MomentQuant("approx")}, \mintinline{py}{MomentQuant("exact")}, and \mintinline{py}{QuantFloat64} into its four stages: training (transformation, classification) and inference (transformation, classification).
        The breakdown is averaged over the 142 UCR data sets, but split into four bins for the series length: short, medium-short, medium-long and long.
    }
    \label{fig:stage_breakdown}
\end{figure}

We believe that benchmarking time series classification algorithms using the total runtime (training + inference) only is inappropriate.
Indeed, in actual real-life applications, training an algorithm is performed once (in a while), while its use in production is much more common, potentially daily.
Therefore, we believe that time series classification algorithms should also be benchmarked using the inference-only runtime.
\autoref{fig:stage_breakdown_inference} shows the corresponding breakdown of the inference-only single-threaded runtime of \mintinline{py}{MomentQuant("approx")}, \mintinline{py}{MomentQuant("exact")}, and \mintinline{py}{QuantFloat64}.
In every situation, the transformation step is taking the most (relative) time.
For all three estimators, the bigger the series length, the higher the proportion of the transformation step in the inference-only runtime.
These results prove that optimizing the transformation steps of Quant and MomentQuant is actually relevant.

\begin{figure}
    \includegraphics[width=\textwidth]{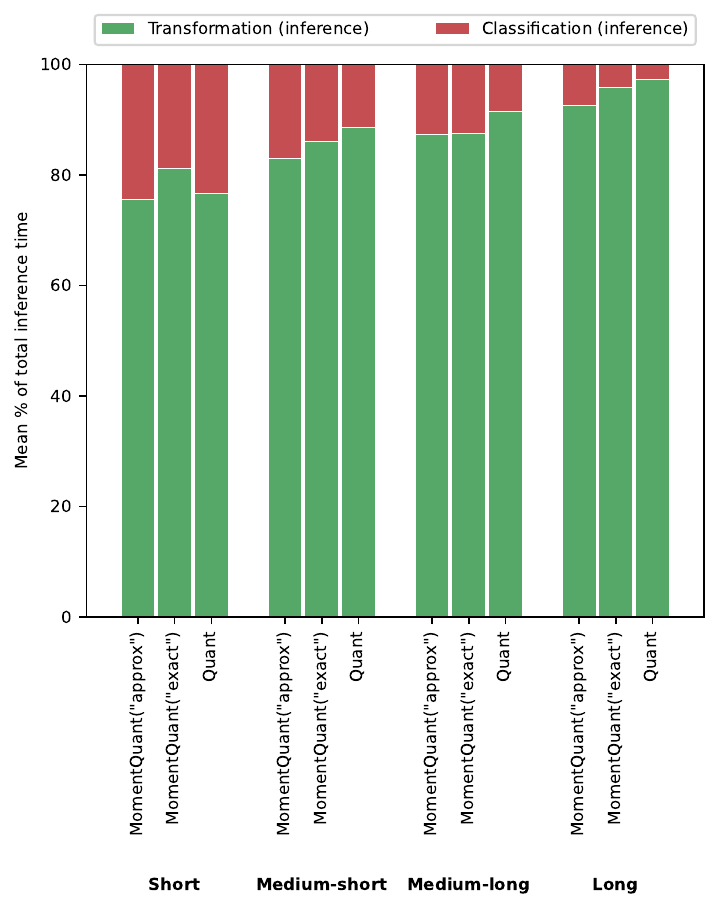}
    \caption{
        Breakdown of the inference-only wall-clock single-threaded runtime of \mintinline{py}{MomentQuant("approx")}, \mintinline{py}{MomentQuant("exact")}, and \mintinline{py}{QuantFloat64} into its two stages: transformation and classification.
        The breakdown is averaged over the 142 UCR data sets, but split into four bins for the series length: short, medium-short, medium-long and long.
    }
    \label{fig:stage_breakdown_inference}
\end{figure}

We now provide theoretical explanations for these differences in relative runtimes between the transformation and classification steps in the training and inferences phases.
To make the analysis simpler, we will focus on the asymptotic complexities and overlook the constants terms (e.g., for the overheads).
For a given series length, the transformation steps of Quant and MomentQuant are always proportional to the number of series in the data set, independently of the phase (training or inference).
The only additional work that the transformation step of the training phase has to perform is to compute the intervals based on the series length, which has a $\Theta(\min(l, 2^d))$ computational complexity and is thus negligible.
On the other hand, classification algorithms take much longer to train than to infer from.
For instance, the computational complexity of training an extremely randomized trees model is $\mathcal{O}(m \cdot k \cdot n \cdot \log(n))$, where $m$ is the number of trees, $k$ is the number of randomly selected features at each node, and $n$ is the number of training samples \citep{geurtsExtremelyRandomizedTrees2006a}.
However, its inference computational complexity is only $\mathcal{O}(m \cdot p \cdot \log(n))$, with $p$ being the number of samples.
Assuming that the training and test sets have the same number of samples ($n = p$), we can see that the training computational complexity has an extra factor $k$.
In our experiments, we set $k = 0.1$ as it is done in \citep{dempsterQuantMinimalistInterval2024}, meaning that the number of randomly selected features at each node is proportional to the number of features, which is proportional to the series length $l$.
Thus, for training and test sets with the same number of samples $n$, the computational complexity of training the extremely randomized trees model is $\mathcal{O}(m \cdot l \cdot n \cdot \log(n))$, which has an extra $l$ factor compared to its inference computational complexity, which is $\mathcal{O}(m \cdot n \cdot \log(n))$.

\subsubsection{Trade-offs}

We now investigate the trade-offs that MomentQuant makes compared to Quant.

We first compare the accuracy differences and runtime speedups to the series lengths.
Indeed, \autoref{sec6} motivates MomentQuant specifically for long series, where its better asymptotic complexity should matter most.
\autoref{table:accuracy_runtime_by_length_bin} provides the mean accuracy scores and the median single-threaded end-to-end runtimes between the \mintinline{py}{"approx"} and \mintinline{py}{"exact"} versions of MomentQuant, for each four bin of series lengths.
There is no clean, monotonic pattern in these results.
The biggest deficits for the \mintinline{py}{"approx"} version, both in terms of accuracy difference ($-0.94$) and runtime speedup ($0.997\times$), occur for the medium-long series length bin.
For the other three bins, the \mintinline{py}{"approx"} version performs worse, but to a lesser extent (accuracy differences ranging from $-0.58$ to $-0.08$), and is faster (speedup ratios ranging from $1.06$ to $1.21$), than the \mintinline{py}{"exact"} version.
However, the runtimes are end-to-end (i.e., including the classification steps) and the numbers of samples are not taken into account, which might explain the absence of monotonicity.
\autoref{fig:accuracy_and_speedup_vs_series_length} complements \autoref{table:accuracy_runtime_by_length_bin} with the results for the 142 data sets.

\begin{table}[tbp]
    \caption{
        Mean accuracy and median single-threaded end-to-end runtime between the \mintinline{py}{"approx"} and \mintinline{py}{"exact"} versions of MomentQuant, by series-length bin.
        $\Delta$ is the mean, over the data sets in the bin, of each data set's own (\mintinline{py}{"approx"} $-$ \mintinline{py}{"exact"}) accuracy difference.
        Speedup is likewise the mean, over the data sets in the bin, of each data set's own \mintinline{py}{"exact"}/\mintinline{py}{"approx"} runtime ratio.
        This is a per-data-set average, not the ratio of the two (bin-aggregated) Runtime columns shown here, so it does not necessarily match a naive division of these two columns.
    }
    \label{table:accuracy_runtime_by_length_bin}
    \centering
    \begin{tabular}{lcccccc}
        \toprule
        \multirowcell{2}{Series length bin} & \multicolumn{3}{c}{Accuracy} & \multicolumn{3}{c}{Runtime} \\
        \cmidrule(lr){2-4} \cmidrule(lr){5-7}
        & \mintinline{py}{"approx"} & \mintinline{py}{"exact"} & $\Delta$ & \mintinline{py}{"approx"} & \mintinline{py}{"exact"} & Speedup \\
        \midrule
        Short        & 0.8818 & 0.8826 & $-0.0008$ & 0.415 & 0.470 & 1.206$\times$ \\
        Medium-short & 0.8663 & 0.8683 & $-0.0020$ & 0.592 & 0.670 & 1.064$\times$ \\
        Medium-long  & 0.8279 & 0.8373 & $-0.0094$ & 0.623 & 0.648 & 0.997$\times$ \\
        Long         & 0.8262 & 0.8320 & $-0.0058$ & 5.115 & 5.344 & 1.162$\times$ \\
        \bottomrule
    \end{tabular}
\end{table}

\begin{figure}
    \begin{subfigure}{0.49\textwidth}
        \includegraphics[width=\textwidth]{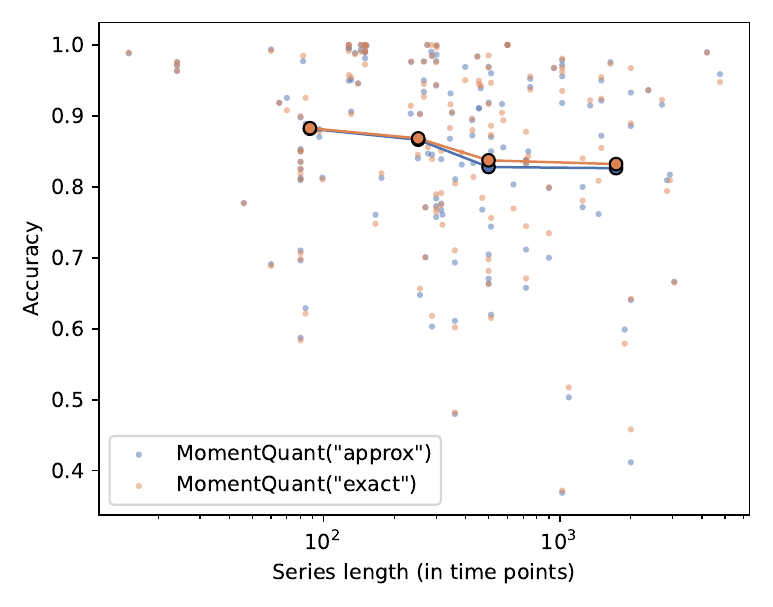}
    \end{subfigure}%
    \hfill
    \begin{subfigure}{0.49\textwidth}
        \includegraphics[width=\textwidth]{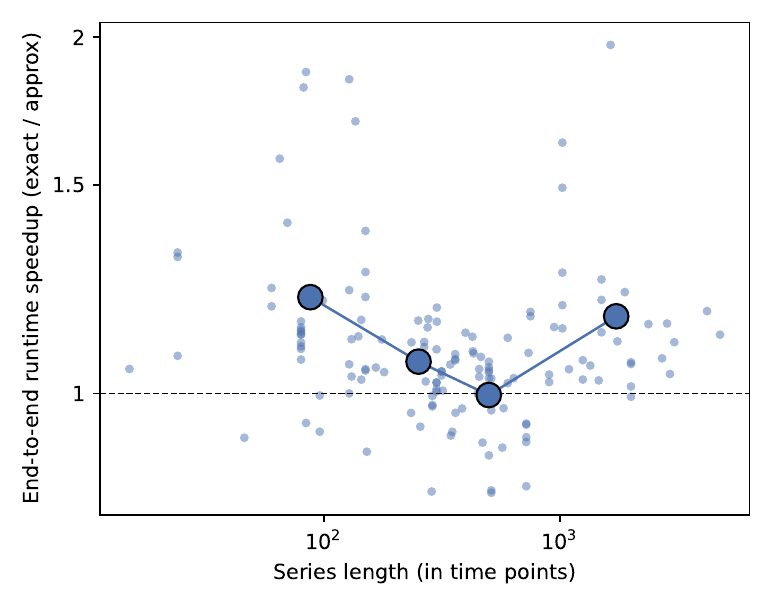}
    \end{subfigure}%
    \caption{
        Accuracy (left) and speedup ratios (right) compared to series length for \mintinline{py}{MomentQuant("approx")} and \mintinline{py}{MomentQuant("exact")}.
        For each of the 142 UCR data sets, the mean accuracy is computed over 30 resamples and plotted, the median runtime is computed over 30 resamples, and the ratio of the medians is plotted.
        Additionally, the mean accuracy and speedup ratio for each bin of series length (short, medium-short, medium-long and long) are also plotted.
    }
    \label{fig:accuracy_and_speedup_vs_series_length}
\end{figure}

We also compare the trade-offs in accuracy and transform-runtime between \mintinline{py}{MomentQuant("approx")} and \mintinline{py}{MomentQuant("exact")}.
\autoref{fig:approx_vs_exact_dataset_tradeoff} shows the results for the 142 data sets.
\mintinline{py}{MomentQuant("approx")} is faster for 108 data sets, and better for 40 out of the 108 data sets.
On the other hand, \mintinline{py}{MomentQuant("exact")} is faster for 34 data sets, and better for 20 out of the 34 data sets.
Interestingly, the proportion of times when \mintinline{py}{MomentQuant("approx")} is better is similar when \mintinline{py}{MomentQuant("approx")} is faster ($37\%$) and slower ($41\%$).
Likewise, the proportion of times when \mintinline{py}{MomentQuant("approx")} is faster is similar when \mintinline{py}{MomentQuant("approx")} is better ($74\%$) and worse ($77\%$).
The Pearson correlation coefficient ($0.0884$) confirms the weak linear relationship between the accuracy differences and the transform-runtime speedup ratios.

\begin{figure}
    \includegraphics[width=\textwidth]{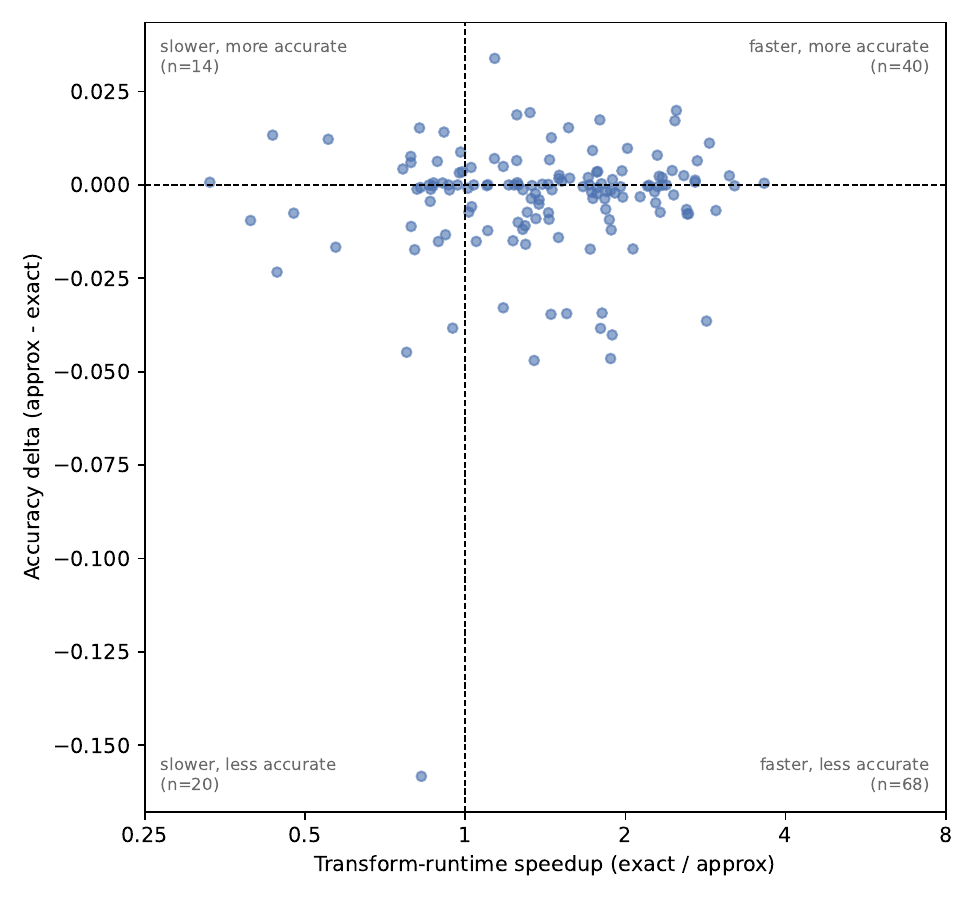}
    \caption{
        Accuracy difference compared to transform-runtime speedup between \mintinline{py}{MomentQuant("approx")} and \mintinline{py}{MomentQuant("exact")}.
        For each of the 142 UCR data sets, the mean accuracy is computed over 30 resamples.
        To the right of the vertical line, \mintinline{py}{MomentQuant("approx")} is faster than \mintinline{py}{MomentQuant("exact")}.
        Above the horizontal line, \mintinline{py}{MomentQuant("approx")} is better than \mintinline{py}{MomentQuant("exact")}.
    }
    \label{fig:approx_vs_exact_dataset_tradeoff}
\end{figure}

\subsection{Fidelity of the moment-based quantile approximation}\label{sec8.5}

All the previous comparisons regarding the approx mode are downstream.
They measure its effect on classification accuracy and runtime, but never the quality of the Cornish-Fisher approximation itself, at the level of individual quantile values.
We close this gap in this section by comparing the true and approximate moment-based quantiles using Pearson correlation.
We investigate the distributions of the Pearson correlation coefficients at each level on six length-diverse UCR data sets.
Naturally, we only include columns for which approximate quantiles are computed (because even MomentQuant computes exact quantiles for the minimum and maximum).

\autoref{table:quantile_correlation_stats} provides information about the series length and descriptive statistics on the distribution of the Pearson correlation coefficients on the six data sets.
Median correlation is consistently high ($0.598$ to $0.988$ depending on the data set), but every data set's distribution has a long negative tail (minima from $-0.049$ to $-0.980$), pulling the mean substantially below the median in every case.

\begin{table}[tbp]
    \caption{Descriptive statistics of the distribution of the Pearson correlation coefficients between the true and approximate quantiles, per data set, pooled over every genuinely-approximated column.}
    \label{table:quantile_correlation_stats}
    \centering
    \begin{tabular}{lllccc}
        \toprule
        Data set & $l$ & Number of columns & Mean & Median & Minimum \\
        \midrule
        ItalyPowerDemand & 24   & 112  & 0.900 & 0.988 & $-0.049$ \\
        ECG200           & 96   & 570  & 0.820 & 0.881 & $-0.849$ \\
        GunPoint         & 150  & 870  & 0.689 & 0.878 & $-0.959$ \\
        Wafer            & 152  & 858  & 0.521 & 0.598 & $-0.584$ \\
        Yoga             & 426  & 3\,033 & 0.702 & 0.847 & $-0.980$ \\
        StarLightCurves  & 1024 & 8\,034 & 0.819 & 0.943 & $-0.968$ \\
        \bottomrule
    \end{tabular}
\end{table}

\autoref{fig:starlightcurves-correlation} shows the distribution of Pearson correlation coefficients at each level for the StarLightCurves data set, the largest one among the six considered, making its per-depth histograms the least noisy of the six.
The correlation rises monotonically and substantially with depth, from a median of $0.650$ at depth $0$ to $0.729$, $0.885$, $0.962$, $0.981$, and finally $0.992$ at depth $5$.
The deepest, shortest intervals are approximated more reliably than the shallowest, longest ones, the opposite of what shrinking sample size per interval alone would suggest.
A plausible, but not independently confirmed, explanation lies in which quantiles are actually being approximated at each depth.
At the shallowest levels, the extreme quantiles are closer to the tails ($q$ near $0$ or $1$), which the Cornish-Fisher expansion is well known to approximate worst.
On the other hand, at the deepest levels, the extreme quantiles are less close to the tails, making the approximation possibly less inaccurate.
The remaining five data sets (Appendix~\ref{secB}) show the same qualitative direction wherever their own depth range is wide enough to check it, though with visibly more sampling noise given their smaller column counts per depth.

However, it is interesting to note that a Pearson correlation coefficient close to $-1$ is not necessarily bad for a downstream classification task.
Indeed, changing the sign of a variable has no impact for several families of classification algorithms such as linear models and tree-based algorithms.
Moreover, the algorithm used for MomentQuant and Quant, extremely randomized trees, is a tree-based method.

\begin{figure}
    \centering
    \includegraphics[width=\textwidth]{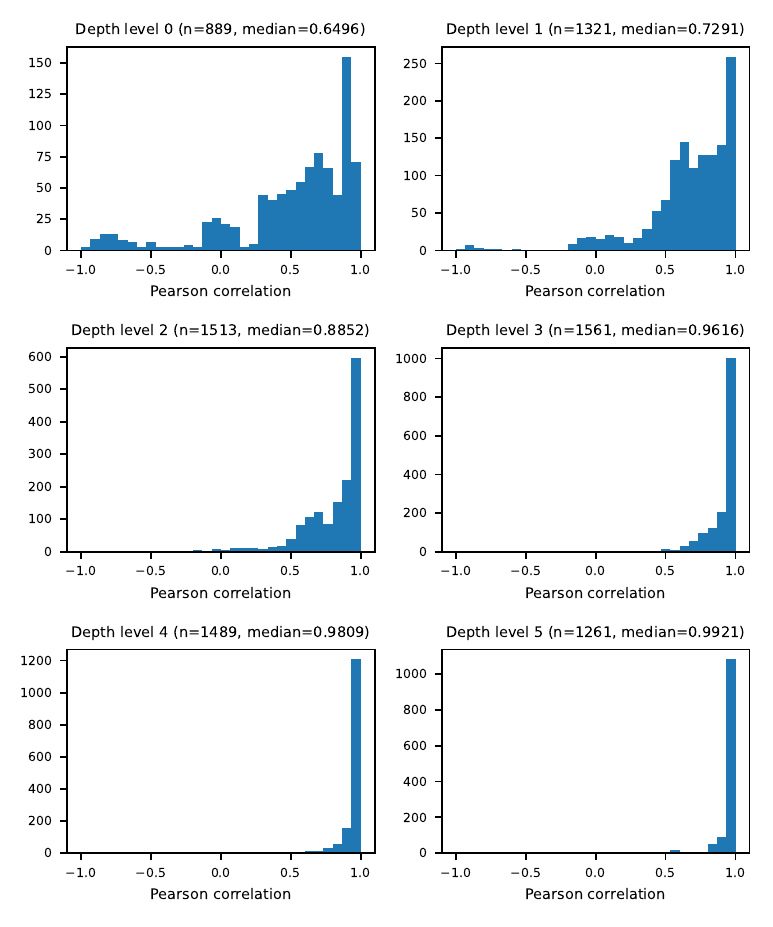}
    \caption{
        Distribution of the Pearson correlation coefficients between the exact and approximated moment-based quantile values on the StarLightCurves data set ($l = 1024$), with one histogram per depth level $0$ to $5$.
        Each panel's title reports the number of genuinely-approximated columns at that depth and their median correlation.
    }
    \label{fig:starlightcurves-correlation}
\end{figure}

\section{One final experiment}\label{sec9}

In the original publication of Quant \citep{dempsterQuantMinimalistInterval2024}, the authors proposed the following optimization:
\begin{quote}
    \itshape
    ``In principle, we could sort each input representation once, keeping track of the indices of the sorted values, and then form any interval by selecting the already-sorted values using their indices.
    In practice, even the naive approach incurs negligible overall computational cost.
    Median transform time over 142 data sets in the expanded UCR archive is less than one second.
    The majority of compute time is spent in training the classifier.
    In other words, any attempt at optimizing total compute time should concentrate on reducing the size of the feature space, and/or improving the efficiency of classifier training.
    We leave further optimization for future work.''
\end{quote}
While our results confirmed that most of the runtime is dominated by the classification algorithm and not Quant's transformation, we focus in this study on the optimization of Quant's transformation.
We were thus interested in the proposed optimization.
However, it is only presented at the end of this article because the results are mixed.

As shown by \autoref{theorem:8}, the total cost of sorting every interval separately is $\Theta(d \cdot l \cdot \log(l))$ in the saturated regime: each of the $d$ depth levels re-sorts (a subdivision of) the whole series from scratch, even though the intervals' endpoints are fixed and data-independent.
Quant's authors anticipated this and suggested, without implementing it, a fix: sort the series once, whose cost is $\mathcal{O}(l \cdot \log(l))$, and reuse that single global order to read off every interval's sorted values directly, instead of sorting each interval's subsequence again.
Since a given depth level's (base or shifted) intervals partition $[0, l)$ into a small, fixed number of non-overlapping pieces, recovering their sorted values from the global order only requires grouping each of the $l$ globally-sorted positions by which interval it falls into.
This is a bucket assignment known in advance from the interval boundaries alone.
If that grouping step could be done in $\mathcal{O}(l)$ per level, as a genuine counting (bucket) sort would (since the number of distinct buckets per level is small and fixed, so a comparison-based sort is not needed to group them), the total cost would fall to $\mathcal{O}(l \cdot \log(l) + d \cdot l)$, which is asymptotically better than $\Theta(d \cdot l \cdot \log(l))$ as soon as $d \geq 2$.

We implemented this optimization exactly as described, both in NumPy and PyTorch.
In both cases we first checked that the optimization produces numerically identical output to the naive per-interval computation, before benchmarking it.
The two implementations tell different stories, for reasons specific to how each library is built.

\begin{table}[tbp]
    \caption{Wall-clock runtime of the naive and proposed implementations, and their ratio, as a function of series length $l$, in NumPy and PyTorch.}
    \label{table:presort_optimization_runtimes}
    \centering
    \begin{tabular}{rrccc}
        \toprule
        Library & $l$ & Naive version & Proposed version & Proposed / Naive \\
        \midrule
        \multirowcell{7}{NumPy}                & 512   & 0.072s & 0.206s  & 2.85$\times$ \\
                                               & 1024  & 0.171s & 0.456s  & 2.66$\times$ \\
                                               & 2048  & 0.393s & 0.990s  & 2.52$\times$ \\
                                               & 4096  & 0.904s & 2.090s  & 2.31$\times$ \\
                                               & 8192  & 1.989s & 4.567s  & 2.30$\times$ \\
                                               & 16384 & 4.295s & 9.761s  & 2.27$\times$ \\
                                               & 32768 & 9.264s & 21.067s & 2.27$\times$ \\
        \midrule
        \multirowcell{7}{PyTorch\\ (1 thread)} & 512   & 0.153s  & 0.187s  & 1.22$\times$ \\
                                               & 1024  & 0.337s  & 0.389s  & 1.15$\times$ \\
                                               & 2048  & 0.745s  & 0.820s  & 1.10$\times$ \\
                                               & 4096  & 1.703s  & 1.663s  & 0.98$\times$ \\
                                               & 8192  & 3.762s  & 3.726s  & 0.99$\times$ \\
                                               & 16384 & 8.325s  & 7.817s  & 0.94$\times$ \\
                                               & 32768 & 18.288s & 16.934s & 0.93$\times$ \\
        \midrule
        \multirowcell{7}{PyTorch\\ (8 threads)} & 512   & 0.130s & 0.107s & 0.82$\times$ \\
                                                & 1024  & 0.244s & 0.177s & 0.73$\times$ \\
                                                & 2048  & 0.449s & 0.338s & 0.75$\times$ \\
                                                & 4096  & 0.754s & 0.594s & 0.79$\times$ \\
                                                & 8192  & 1.293s & 1.170s & 0.90$\times$ \\
                                                & 16384 & 2.318s & 2.208s & 0.95$\times$ \\
                                                & 32768 & 4.722s & 4.647s & 0.98$\times$ \\
        \bottomrule
    \end{tabular}
\end{table}

\autoref{table:presort_optimization_runtimes} provides the wall-clock runtimes of the naive and proposed implementations for multiple series lengths in both libraries.
In NumPy, we do not distinguish the single-threaded and multithreaded setups as both the NumPy functions involved, \mintinline{py}{numpy.sort()} and \mintinline{py}{numpy.argsort()}, do not benefit from multi-threading.
The results in NumPy are underwhelming: the proposed optimization is consistently slower, by a factor of roughly $2.4$ to $2.7\times$, with the gap narrowing slowly as $l$ grows but never closing.
The results in PyTorch are more positive, without being outstanding.
In the single-threaded setup, the proposed optimization is slower for $l \leq 2048$, roughly matches the naive implementation around $l = 4096$--$8192$, and is consistently $6$ to $9\%$ faster for $l \geq 16384$.
In the multithreaded setup (with $8$ threads in our experiments), the proposed optimization is faster than the naive implementation at every $l$ tested.
However, its advantage now shrinks as $l$ grows, from roughly $27\%$ faster at $l = 1024$ down to roughly $2\%$ faster at $l = 32768$, which is the opposite trend from the single-threaded setup.
Two library-level differences explain these results.

First, there is a library-level difference in the cost of obtaining the permutation, which explains why the proposed optimization sometimes works in PyTorch, but never works in NumPy.
The proposed optimization's one mandatory global sort must be an indexed sort (\mintinline{py}{argsort}): it needs not only the sorted values but the original position of each one, so that every value can later be assigned to the interval that it came from.
We already showed in \autoref{sec8.1} that the naive ``swap-doubling'' argument does not hold up under isolation: pairing a value with an index adds only a modest overhead, not a full $2\times$.
The actual cost of an indexed sort relative to a plain sort in NumPy depends strongly on $l$.
For $l \lesssim 1024$--$2048$, \mintinline{py}{numpy.argsort} is roughly as fast as \mintinline{py}{numpy.sort}, or even slightly faster.
Beyond that point the ratio grows steadily, reaching roughly $2\times$ in double precision and $4$ to $4.5\times$ in single precision by $l=32768$.
This is consistent with the $2.3\times$ overhead of the proposed optimization over the naive one observed in \autoref{table:presort_optimization_runtimes} at the same $l$.

This is not primarily a matter of doing more work: an indexed and a non-indexed sort perform the exact same number of comparisons and swaps at every $l$.
Rather, it is a matter of how that work accesses memory, compounded by a second, larger effect specific to real-world implementations.
To separate the two effects, we implemented and compiled with Numba three quicksort variants sharing the exact same introsort skeleton (median-of-three pivot, insertion-sort cutoff), so that Numba never invokes a vectorized \emph{single instruction, multiple data} (SIMD) fast path for any of them, and so that all three perform identical comparison and swap counts at every $l$ (verified exactly):
\begin{itemize}
    \item \mintinline{py}{direct}: a plain value sort.
    Comparisons and swaps both act directly and sequentially on the values array.
    This mirrors the kernel behind \mintinline{py}{numpy.sort}.
    \item \mintinline{py}{indirect}: an indexed sort in which the values array is read-only, and only an index array is permuted, with comparisons dereferencing through it.
    This mirrors the kernel behind \mintinline{py}{numpy.argsort}.
    \item \mintinline{py}{keyvalue}: an indexed sort in which the values and the indices are permuted together on every swap, so comparisons stay direct and sequential on the values array, exactly like \mintinline{py}{direct}.
    This mirrors PyTorch's sorting kernel, used by both \mintinline{py}{sort} and \mintinline{py}{argsort}.
\end{itemize}
\autoref{table:sort_argsort_mechanism_isolation} reports the wall-clock runtime of the three variants for every $l$ in our grid, along with the \mintinline{py}{keyvalue}/\mintinline{py}{direct} and \mintinline{py}{indirect}/\mintinline{py}{direct} ratios.

\begin{table}[tbp]
    \caption{
        Wall-clock runtime of the \mintinline{py}{direct}, \mintinline{py}{keyvalue}, and \mintinline{py}{indirect} quicksort variants, as a function of $l$, along with the \mintinline{py}{keyvalue}/\mintinline{py}{direct} and \mintinline{py}{indirect}/\mintinline{py}{direct} ratios.
        Units ($\mu$s or ms) are given per row.
        The four largest values of $l$ (from $65536$ onward, below the second horizontal rule) are well beyond any series length this paper's pipeline ever processes.
        They are included only to show where the \mintinline{py}{indirect}/\mintinline{py}{direct} ratio is headed asymptotically.
    }
    \label{table:sort_argsort_mechanism_isolation}
    \centering
    \begin{tabular}{rccccc}
        \toprule
        $l$ & \mintinline{py}{direct} & \mintinline{py}{keyvalue} & \mintinline{py}{indirect} & \mintinline{py}{keyvalue}/\mintinline{py}{direct} & \mintinline{py}{indirect}/\mintinline{py}{direct} \\
        \midrule
        4       & 0.351$\mu$s & 0.600$\mu$s & 0.594$\mu$s & 1.71$\times$ & 1.69$\times$ \\
        8       & 0.360$\mu$s & 0.607$\mu$s & 0.613$\mu$s & 1.69$\times$ & 1.70$\times$ \\
        16      & 0.408$\mu$s & 0.662$\mu$s & 0.679$\mu$s & 1.62$\times$ & 1.66$\times$ \\
        32      & 0.480$\mu$s & 0.762$\mu$s & 0.784$\mu$s & 1.59$\times$ & 1.63$\times$ \\
        64      & 0.676$\mu$s & 0.959$\mu$s & 1.07$\mu$s  & 1.42$\times$ & 1.58$\times$ \\
        128     & 1.16$\mu$s  & 1.55$\mu$s  & 1.71$\mu$s  & 1.33$\times$ & 1.47$\times$ \\
        256     & 2.15$\mu$s  & 2.71$\mu$s  & 3.03$\mu$s  & 1.26$\times$ & 1.41$\times$ \\
        512     & 4.35$\mu$s  & 5.18$\mu$s  & 5.88$\mu$s  & 1.19$\times$ & 1.35$\times$ \\
        1024    & 9.20$\mu$s  & 10.7$\mu$s  & 12.2$\mu$s  & 1.17$\times$ & 1.32$\times$ \\
        2048    & 22.5$\mu$s  & 29.5$\mu$s  & 28.7$\mu$s  & 1.31$\times$ & 1.28$\times$ \\
        4096    & 123$\mu$s   & 130$\mu$s   & 157$\mu$s   & 1.06$\times$ & 1.28$\times$ \\
        8192    & 365$\mu$s   & 378$\mu$s   & 457$\mu$s   & 1.04$\times$ & 1.25$\times$ \\
        16384   & 0.843ms     & 0.875ms     & 1.06ms      & 1.04$\times$ & 1.26$\times$ \\
        32768   & 1.85ms      & 1.96ms      & 2.34ms      & 1.06$\times$ & 1.26$\times$ \\
        \midrule
        65536   & 3.95ms      & 4.25ms      & 5.04ms      & 1.08$\times$ & 1.27$\times$ \\
        262144  & 18.1ms      & 18.9ms      & 23.0ms      & 1.05$\times$ & 1.28$\times$ \\
        1048576 & 81.4ms      & 85.1ms      & 105ms       & 1.04$\times$ & 1.29$\times$ \\
        4194304 & 356ms       & 372ms       & 501ms       & 1.04$\times$ & 1.41$\times$ \\
        \bottomrule
    \end{tabular}
\end{table}

The \mintinline{py}{keyvalue}/\mintinline{py}{direct} ratio stays modest throughout, from about $1.7\times$ at the smallest $l$ (dominated by fixed per-call overhead) down to $1.0$--$1.1\times$ for $l \geq 4096$, and it stays in that same narrow band all the way out to $l=4\,194\,304$: pairing a value with an index, on its own, is nowhere near twice the cost of a plain swap, at any scale that we tested.
This is directly relevant to the roughly $2\times$ gap between \mintinline{py}{QuantFloat64} and \mintinline{py}{MomentQuant("exact", "intervals")} observed earlier in \autoref{table:implementation_choice_runtimes} (\autoref{sec8.1}): since PyTorch's sorting kernel follows the \mintinline{py}{keyvalue} strategy while NumPy's \mintinline{py}{sort} follows \mintinline{py}{direct}, the \mintinline{py}{keyvalue}/\mintinline{py}{direct} ratio here rules out swap-counting as the explanation for that gap as well.
The gap in \autoref{table:implementation_choice_runtimes} is more likely dominated by PyTorch's per-call dispatch and tensor-allocation overhead than by the sorting algorithm itself.
The \mintinline{py}{indirect}/\mintinline{py}{direct} ratio is somewhat larger and follows a mild U-shape: high at very small $l$ (again, fixed overhead), settling into a broad plateau of roughly $1.25$--$1.29\times$ from $l=4096$ up to $l=262144$, and then climbing again at the two largest sizes tested, reaching $1.41\times$ at $l=4\,194\,304$.
This ratio is past the plateau, but still well short of the several-fold gap that a purely cache-miss-driven effect would be expected to reach at even larger scales.
The entries at $l=1024$, $2048$, and $4096$ are also visibly noisier across repeated runs than their neighbors (up to a $3.5\times$ spread between the fastest and slowest of $40$ repeats, against roughly $1.1$--$1.2\times$ for most other $l$).
This spread did not shrink when we quadrupled the number of repeats, which suggests a repeatable effect tied to this transition region.
A plausible explanation, but not verified, is neighboring jobs of very different sizes in the shuffled run order interacting with the CPU's frequency scaling, rather than one-off measurement noise.
This confirms that indirection has a real, cache-driven cost that keeps growing with $l$ well beyond this paper's operational range ($l \leq 32768$), but one that remains far too small, at least at the scales tested here, to explain the roughly $2$ to $4.5\times$ gap observed with the real \mintinline{py}{numpy.argsort} at the same $l$.
The rest of that gap comes from a second effect that this SIMD-free isolation deliberately excludes: NumPy's \mintinline{py}{sort} has a vectorized SIMD fast path with no equivalent for \mintinline{py}{argsort}.
SIMD is true simultaneous parallel hardware-level execution and is different from multi-threading.
This fast path only pays off once $l$ is large enough, which is why the real \mintinline{py}{argsort}/\mintinline{py}{sort} gap is negligible, or even inverted, below roughly $l=1024$--$2048$, and only opens up beyond that.
This is a much sharper transition than the SIMD-free \mintinline{py}{indirect}/\mintinline{py}{direct} ratio in \autoref{table:sort_argsort_mechanism_isolation}, which grows far more gradually across the same range and remains well under $2\times$ even at $l=4\,194\,304$.

Second, PyTorch's kernels generally parallelize across independent rows of a batch, and the naive and proposed implementations are not equally well suited to this parallelism.
The naive implementation performs many independent sorts, one per interval, most of them short, especially near the leaves of the depth hierarchy.
In this case, there is little work per call for several threads to split up, so more threads barely help at small $l$ (its wall-clock time only fell by about $1.2\times$ going from $1$ to $8$ threads at $l=512$).
The proposed optimization performs a handful of large operations per representation instead (one global sort, plus one redistribution pass per depth level).
This is coarser-grained work that several threads can split up efficiently even when $l$ is small (about $1.75\times$ faster at $l=512$ going from $1$ to $8$ threads).
This asymmetry is largest exactly where the proposed optimization's multithreaded advantage is biggest (small $l$), and shrinks as $l$ grows and the naive implementation's individual interval sorts become large enough to parallelize well too.

\section{Conclusion}\label{sec10}

The start of this research work set out from a narrow observation: Quant's reference implementation, despite being fast and simple overall, commits to a single computational structure that is not well-matched to either end of the series-length spectrum on a single CPU core.
We addressed both ends.
For short series, a theoretical cost model showed that choosing between two loop orderings, rather than committing to Quant's fixed one, removes most of the avoidable dispatch overhead.
For long series, our main contribution, we replaced exact per-interval sorting with a moment-based approximation of the same quantiles via the Cornish-Fisher expansion, trading its $\mathcal{O}(l \cdot \log(l))$ sorting cost for an $\mathcal{O}(l)$ one.
Moreover, for each mode, a calibrated dispatch heuristic automatically tries to choose the faster loop order, although it does not always succeed.

We derived a complete analysis of the computational complexities of Quant and MomentQuant.
Thanks to this analysis, it becomes much easier to compare these two algorithms to other time series classification algorithms in terms of computational complexity.
Indeed, we provided evidence that empirical comparisons have important limitations: the choice of a library, or even a single function from a library, can have a substantial impact on runtime, independently of the theoretical computational complexity.

At full UCR-archive scale, MomentQuant's better asymptotic complexity translates into a real, measurable advantage: roughly $1.7\times$ faster feature extraction than exact mode, and further still against Quant's own reference implementation, at a small but genuine accuracy cost (under half a percentage point on average), and one that it does not pay uniformly, since it still wins outright on over a third of the data sets tested.
That said, the practical picture is more nuanced than the asymptotic argument alone would suggest: once downstream classifier fitting is counted, which dominates end-to-end cost on this archive, most of the runtime gap disappears, only to re-emerge once inference-only cost is isolated.
Splitting the comparison by series length shows a real but non-monotonic pattern rather than the clean, uniformly-widening advantage a length-only reading of the cost model would predict.
MomentQuant's advantage is therefore most relevant where feature extraction itself, not classifier fitting, is the bottleneck: at inference time, in deployment, or wherever series are long enough that sorting cost stops being a rounding error.

Several questions remain open.
We did not enforce the Cornish-Fisher expansion's domain of validity per interval, nor test whether a genuinely linear-time counting sort would close the residual gap identified for the exact mode's alternative kernel.
Both are natural targets for tightening MomentQuant's worst-case behavior further.
More broadly, this study evaluated MomentQuant as a drop-in replacement for Quant's own quantile computation.
Whether the same moment-based idea transfers to other interval- or quantile-based feature extractors is left to future work.
This study is also restricted to univariate series, following both Quant's own reference implementation and the univariate UCR archive used throughout this paper.
Extending our cost model to multivariate series is not immediate: treating each channel independently and concatenating its own quantile features, as most interval-based methods do, would simply multiply every result derived here by the number of channels, leaving the complexity class in $l$ unchanged, but the dispatch-overhead constants of \autoref{sec5} and \autoref{sec6.3}, and the resulting crossover sample size, would likely need to be recalibrated, since channels are a natural additional axis to batch alongside the number of series $n$.
A multivariate design that builds intervals jointly across channels, rather than independently per channel, would instead need a genuinely new cost model.

Overall, we believe this work shows that a small, principled amount of approximation, applied where it costs the least and helps the most, is a practical way to make an already fast method for time series classification even faster.

\section{Declarations}

All the data sets used in this study are publicly available from the UCR Time Series Archive \citep{dauUCRTimeSeries2019e}.
We would like to thank Professor Eamonn Keogh and all the people who have contributed to this archive.
The data sets can be downloaded in several ways, either manually from the website\footnote{\url{https://timeseriesclassification.com/dataset.php}} or automatically using libraries such as the \emph{aeon} \citep{middlehurstAeonPythonToolkit2024} Python package.
The whole source code supporting this study is publicly available on a GitHub repository,\footnote{\url{https://github.com/johannfaouzi/moment-quant}} with detailed instructions to reproduce all the experiments.
The results are also directly available, so that other researchers can easily compare their algorithms with ours.
The repository is under the BSD 3-Clause License, making it reusable by other researchers.

Generative artificial intelligence has been used in this study, both for generating code and analyzing the results.
Nonetheless, the authors agree to be accountable for all the materials associated with this study (this manuscript and the provided public GitHub repository).

No funding was received for conducting this study.
The authors have no relevant financial or non-financial interests to disclose.
The authors have no conflicts of interest to declare that are relevant to the content of this article.
The authors certify that they have no affiliations with or involvement in any organization or entity with any financial interest or non-financial interest in the matter or materials discussed in this manuscript.
The authors have no financial or proprietary interests in any material discussed in this article.

\newpage

\begin{appendices}

\section{Proofs}\label{secA1}

This appendix collects the proofs of every theorem and lemma stated in \autoref{sec4}, \autoref{sec5}, and \autoref{sec6}, in the order in which they appear in the main text.

\begin{proof}[Proof of \autoref{theorem:1}]
    \label{proof:theorem:1}
    At any level $r \in \{0, \ldots, k\}$, Quant builds $2^r$ base intervals.
    When $r > 0$, Quant also builds $2^r - 1$ shifted intervals.
    In total, Quant builds $1 + \sum_{r=1}^{k} 2^r + 2^r - 1 = 2^{k+2} - k - 3$ intervals.
    However, at level $r = k$, the shifted intervals are actually built if and only if the median of the widths of the base intervals at level $k$ is greater than $1$, which occurs if and only if $l \geq 1.5 \cdot 2^k$.
    Therefore, in this case, Quant builds $2^{k+2} - k - 3 - \left( 2^k - 1 \right) = 3 \cdot 2^k - k - 2$.
    In conclusion, Quant exactly builds $N_i(l, d)$ intervals with:
    $$
        N_i(l, d) = \begin{cases}
            2^{k+2} - k - 3 & \text{if } l \geq 1.5 \cdot 2^k \\
            3 \cdot 2^k - k - 2 & \text{if } l < 1.5 \cdot 2^k
        \end{cases}
    $$
    In both cases, the dominant term is proportional to $2^k$.
    Therefore:
    $$
        N_i(l, d) = \Theta \left( 2^k \right)
    $$
\end{proof}

\begin{proof}[Proof of \autoref{theorem:2}]
    \label{proof:theorem:2}
    At any level $r \in \{0, \ldots, k\}$, the widths of the base intervals sum to exactly $l$ and the widths of the shifted intervals sum to $l - \lceil l \cdot 2^{-r} \rceil$, except if $r = k$ and $l < 1.5 \cdot 2^k$, in which case there are no shifted intervals, so their widths sum to $0$.
    For any level $r \in \{0, \ldots, k\}$, define $\epsilon_r = \lceil l \cdot 2^{-r} \rceil - l \cdot 2^{-r}$.
    Summing over the levels, we have two distinct cases:
    \begin{itemize}
        \item if $l \geq 1.5 \cdot 2^k$, we have:
        $$
            \sum_{r=0}^{k} \left[ 2l - l \cdot 2^{-r} - \epsilon_r \right] = \left( 2k + 2^{-k} \right) \cdot l - \sum_{r=1}^k \epsilon_r
        $$
        \item if $l < 1.5 \cdot 2^k$, we have:
        $$
            \sum_{r=0}^{k-1} \left[ 2l - l \cdot 2^{-r} - \epsilon_r \right] + l = \left( 2k - 1 + 2^{-(k-1)} \right) \cdot l - \sum_{r=1}^{k-1} \epsilon_r
        $$
    \end{itemize}
    Therefore, the total width $W(l, d)$ is equal to:
    $$
        W(l, d) = \begin{cases}
            \displaystyle \left( 2k + 2^{-k} \right) \cdot l - \sum_{r=1}^k \left( \left\lceil l \cdot 2^{-r} \right\rceil - l \cdot 2^{-r} \right) & \text{if } l \geq 1.5 \cdot 2^k \\
            \displaystyle \left( 2k - 1 + 2^{-(k-1)} \right) \cdot l - \sum_{r=1}^{k-1} \left( \left\lceil l \cdot 2^{-r} \right\rceil - l \cdot 2^{-r} \right) & \text{if } l < 1.5 \cdot 2^k
        \end{cases} = \Theta \left( k \cdot l \right)
    $$
    In both cases, the first term is $\Theta(k \cdot l)$, and the correction term is negative and $\mathcal{O}(k)$, thus dominated by the first term:
    $$
        -k \leq - \sum_{r=1}^k \underbrace{\left\lceil l \cdot 2^{-r} \right\rceil - l \cdot 2^{-r}}_{\in [0, 1)} \leq - \sum_{r=1}^{k-1} \underbrace{\left\lceil l \cdot 2^{-r} \right\rceil - l \cdot 2^{-r}}_{\in [0, 1)}\leq 0
    $$
    Therefore:
    $$
        W(l, d) = \Theta \left( k \cdot l \right)
    $$
\end{proof}

\begin{proof}[Proof of \autoref{theorem:3}]
    \label{proof:theorem:3}
    The number of quantiles extracted for an interval of any length $m$, denoted by $n_q(m)$ is:
    $$
        n_q(m, \nu) = 1 + \left\lfloor \frac{(m - 1)}{\nu} \right\rfloor = 1 + \frac{(m - 1)}{\nu} - \left\{ \frac{(m - 1)}{\nu} \right\}
    $$
     Summing over all the $N_i(l, d)$ intervals (\autoref{theorem:1}) and using the fact that $\sum_{j} m_j = W(l, d)$ (\autoref{theorem:2}), the total number of quantiles is equal to:
    $$
        N_q(l, d, \nu) = \sum_{j} n_q(m_j, \nu) = N_i(l, d) + \frac{W(l, d) - N_i(l, d)}{\nu} - \sum_j \left\{ \frac{m_j - 1}{\nu} \right\}
    $$
\end{proof}

\begin{proof}[Proof of \autoref{theorem:4}]
    \label{proof:theorem:4}
    We split the proof in three parts (one for each case).

    \noindent\ul{Case $\nu =1$}. It is trivial since $\left\{ \frac{(m - 1)}{\nu} \right\} = 0$ for any $m \geq 1$.

    \noindent\ul{Case $d =1$}. It is trivial because there is a single interval (for the whole series) and the single condition is $\left\{ \frac{l - 1}{\nu} \right\} = 0$, which is obtained if and only if $l - 1$ is exactly divisible by $\nu$.

    \noindent\ul{Case $d \geq 2$ and $\nu \geq 2$}.
    For any level $r \in \{1, \ldots, d - 1\}$:
    \begin{itemize}
        \item if $l$ is exactly divisible by $2^r$, then all the intervals have the same width $l \cdot 2^{-r}$, and
        \item if $l$ is not exactly divisible by $2^r$, then there are $2^r - s_r$ base intervals of length $q_r = \lfloor l \cdot 2^{-r} \rfloor $ and $s_r$ base intervals of length $q_r + 1$, with $s_r > 0$.
    \end{itemize}
    Thus, we have the two following cases:
    \begin{itemize}
        \item If there exists any $r \in \{1, \ldots, d - 1\}$ such that $l$ is not exactly divisible by $2^r$, then both $q_r - 1$ and $q_r$ would have to be exactly divisible by $\nu$, which implies that 1 would need to be divisible by $\nu$, which is impossible for any $\nu \geq 2$.
        \item If $l$ is exactly divisible by $2^{d-1}$, then write $l = a \cdot 2^{d-1}$ with $a$ being a positive integer. Looking at the two deepest levels ($d - 1$ and $d - 2$), we would need both $a - 1$ and $2a - 1$ to be exactly divisible by $\nu$, which implies that 1 would need to be divisible by $\nu$, which is impossible for $\nu \geq 2$.
    \end{itemize}

    For each individual term, we have:
    $$
        0 \leq \left\{ \frac{m_j - 1}{\nu} \right\} \leq \frac{\nu - 1}{\nu}
    $$
    Summing over all the $N_i(l, d)$ provides the bound:
    $$
        0 \leq E(l, d, \nu) \leq \frac{\nu - 1}{\nu} \cdot N_i(l, d)
    $$

    For $l = a \cdot 2^{d-1} \cdot \nu$, with $a$ being any positive integer, for any level $r \in \{0, \ldots, d - 1\}$, all the intervals of level $r$ have the same width $a \cdot 2^{d - r - 1} \cdot \nu$, and we have:
    $$
        \left\{ \frac{a \cdot 2^{d - r - 1} \cdot \nu - 1}{\nu} \right\} = \frac{\nu - 1}{\nu}
    $$
    Summing over all the intervals, we have:
    $$
        E(a \cdot 2^{d-1} \cdot \nu, d, \nu) = \frac{\nu - 1}{\nu} \cdot N_i(l, d)
    $$
\end{proof}

\begin{proof}[Proof of \autoref{theorem:5}]
    \label{proof:theorem:5}
    An interval built at level $r$ has width $1$ only if $r = k$: for any level $r < k$, both $2^r \leq 2^{k - 1}$ and $l \geq 2^k$ hold, so $l \cdot 2^{-r} \geq 2$, hence every base interval at level $r$ has width $\lfloor l \cdot 2^{-r} \rfloor \geq 2$.
    Every shifted interval at level $r$ reuses a base interval's width at that level (shown below), hence also has width at least $2$.

    If $k = d - 1 < \lfloor \log_2(l) \rfloor$, then $l \geq 2^{k+1}$, so $\lfloor l \cdot 2^{-k} \rfloor \geq 2$ as well, and every interval at level $k$ (base or shifted) has width at least $2$: $N_i^{(1)}(l, d) = 0$.

    Otherwise, $k = \lfloor \log_2(l) \rfloor$, so $2^k \leq l < 2^{k+1}$. Write $l = 2^k + \rho$ with $0 \leq \rho < 2^k$. The base intervals at level $k$ are given by $\text{indices}_i = \lfloor i \cdot l \cdot 2^{-k} \rfloor$ for $i = 0, \ldots, 2^k$, which partitions $[0, l]$ into $2^k$ pieces of which exactly $\rho$ have width $2$ and the remaining $2^k - \rho = 2^{k+1} - l$ have width $1$.

    If $l < 1.5 \cdot 2^k$ (i.e., $\rho < 2^{k-1}$), no shifted intervals are built at level $k$ (\autoref{theorem:1}), so $N_i^{(1)}(l, d) = 2^{k+1} - l$.

    If $l \geq 1.5 \cdot 2^k$ (i.e., $\rho \geq 2^{k-1} \geq 1$, since $k \geq 1$ whenever shifted intervals exist), shifted intervals are built at level $k$ by shifting and then dropping the last of the $2^k$ base intervals (\autoref{theorem:1}). The last base interval has width $\text{indices}_{2^k} - \text{indices}_{2^k - 1} = l - \left( l - \lceil l \cdot 2^{-k} \rceil \right) = \lceil l \cdot 2^{-k} \rceil = 2$ (using $\lfloor x - y \rfloor = x - \lceil y \rceil$ for integer $x$, and $\rho > 0$). Consequently, the $2^k - 1$ shifted intervals reuse the base intervals' widths minus this one dropped width-$2$ instance: $\rho - 1$ of width $2$ and $2^k - \rho$ of width $1$, contributing $2^k - \rho = 2^{k+1} - l$ further length-one intervals. In total: $N_i^{(1)}(l, d) = 2 \cdot \left( 2^{k+1} - l \right)$.
\end{proof}

\begin{proof}[Proof of \autoref{theorem:6}]
    \label{proof:theorem:6}
    At any level $r \in \{0, \ldots, k\}$, there are exactly $2^r$ intervals of length $l \cdot 2^{-r}$.
    Their total sort cost is thus:
    $$
        \text{base\_ideal}(r) = 2^r \cdot l \cdot 2^{-r} \cdot \log_2(l \cdot 2^{-r}) = l \cdot \left( \log_2(l) - r \right)
    $$
    At any level $r \in \{1, \ldots, k - 1\}$, there are exactly $2^r - 1$ shifted intervals of length $l \cdot 2^{-r}$.
    Their total sort cost is thus:
    $$
        \text{shift\_ideal}(r) = \left( 2^r - 1 \right) \cdot l \cdot 2^{-r} \cdot \log_2(l \cdot 2^{-r}) = \left( 1 - 2^{-r} \right) \cdot l \cdot \left( \log_2(l) - r \right)
    $$
    At level $r = k$, shifted intervals are only included if the median width of the base intervals at level $k$ is greater than $1$, which occurs if and only if $l \geq 1.5 \cdot 2^k$.
    The above formula for $\text{shift\_ideal}(r)$ actually still holds for $r = k$ for $l = 2^k$ because $\log_2(2^k) - k = 0$.
    Thus, we can use it for any $l$ in this setting because we assumed that $l$ is exactly divisible by $2^k$.

    Therefore, at any level $r \in \{1, \ldots, k\}$:
    $$
        \text{base\_ideal}(r) + \text{shift\_ideal}(r) = l \cdot \left( \log_2(l) - r \right) \left[ 1 + (1 - 2^{-r}) \right] = l \cdot \left( \log_2(l) - r \right) \cdot \left( 2 - 2^{-r} \right)
    $$
    Thus, the total sort cost $T_s^{r*}(l, d)$ is proportional to:
    \begin{align*}
        T_s^{r*}(l, d) &\propto \text{base\_ideal}(0) + \sum_{r=1}^k \text{base\_ideal}(r) + \text{shift\_ideal}(r)\\
        &\propto l \cdot \log_2(l) + \sum_{r=1}^k l \left( \log_2(l) - r \right) \cdot \left( 2 - 2^{-r} \right)\\
        &\propto l \cdot \log_2(l) \cdot \left[ 1 + \sum_{r=1}^k (2 - 2^{-r}) \right] - l \cdot \sum_{r=1}^k r \cdot (2 - 2^{-r})\\
        &\propto l \cdot \log_2(l) \cdot \left[ 2k + 2^{-k} \right] - l \left[ k^2 + k - 2 + (k + 2) \cdot 2^{-k} \right]\\
        T_s^{r*}(l, d) &\propto l \cdot \left[ \left( 2k + 2^{-k} \right) \cdot \log_2(l) - \left( k^2 + k - 2 + (k + 2) \cdot 2^{-k} \right) \right]
    \end{align*}
    Defining $c_1$ as the empirical time-per-unit-of-sort-work constant, we obtain the desired result:
    $$
        T_s^{r*}(l, d) = c_1 \cdot l \cdot \left[ \left( 2k + 2^{-k} \right) \cdot \log_2(l) - \left( k^2 + k - 2 + (k + 2) \cdot 2^{-k} \right) \right]
    $$
\end{proof}

\begin{proof}[Proof of \autoref{theorem:7}]
    \label{proof:theorem:7}
    For any level $r \in \{1, \ldots, k\}$, all the interval lengths might not be equal to $l \cdot 2^{-r}$ since it might not be an integer.
    Let $q_r = \lfloor l \cdot 2^{-r} \rfloor$ and $s_r$ (with $0 \leq s_r < 2^r$) be the quotient and the remainder of the Euclidean division of $l$ by $2^r$ respectively.
    With the implementation chosen in Quant, there are exactly $2^r - s_r$ base intervals of length $q_r$ and $s_r$ base intervals of length $q_r + 1$.
    The width of the last base interval, denoted by $w_r$, is equal to $q_r + 1$ if $s_r > 0$ else $q_r$.
    We mention this information because the total width of all the shifted intervals is equal to the total width of the first $2^r - 1$ base intervals (i.e., all the base intervals except the last one).
    Indeed, the $2^r - 1$ shifted intervals are the first $2^r - 1$ base intervals shifted by $\lceil l \cdot 2^{-r-1} \rceil$.

    To simplify the equations, we define $f(m) = m \cdot \log_2(m)$ as the sort cost of an interval of length $m$.
    For any level $r \in \{1, \ldots, k\}$, the total sort cost of the base intervals is:
    $$
        \text{base}(r) = \left( 2^r - s_r \right) \cdot f(q_r) + s_r \cdot f(q_r + 1)
    $$
    For any level $r \in \{1, \ldots, k - 1\}$, the total sort cost of the shifted intervals is:
    $$
        \text{shift}(r) = \text{base}(r) - f(w_r)
    $$
    Similarly to the proofs of \autoref{theorem:1} and \autoref{theorem:2}, we need to distinguish two cases:
    \begin{itemize}
        \item if $l < 1.5 \cdot 2^k$, there are no shifted intervals at level $r = k$, thus:
        $$
            \text{shift}(k) = 0
        $$
        \item if $l \geq 1.5 \cdot 2^k$, there are shifted intervals at level $r = k$, and the formula above still holds, thus:
        $$
            \text{shift}(k) = \text{base}(k) - f(w_k)
        $$
    \end{itemize}

    \noindent\ul{Case $l \geq 1.5 \cdot 2^k$}. Summing over the levels, the total sort cost $T(l)$ is equal to:
    $$
        T_s^r(l, d) = f(l) + \sum_{r=1}^k 2 \cdot \text{base}(r) - f(w_r)
    $$

    Now, we need to bound the gap between the ideal case (when $l$ is exactly divisible by $2^k$) and the arbitrary case (when $l$ is not exactly divisible by $2^r$).
    Let's recall the total sort cost of the all the base intervals at any level $r \in \{0, \ldots, k\}$ in the ideal case:
    $$
        \text{base\_ideal}(r) = 2^r \cdot f(l \cdot 2^{-r})
    $$
    At level $r=0$, the whole series is used, so the ideal and arbitrary cases exactly match:
    $$
        \text{base}(0) = \text{base\_ideal}(0) = f(l)
    $$
    Since $f$ is convex, by the standard Lagrange remainder for linear interpolation, for $\theta = s_r \cdot 2^{-r} \in [0, 1)$, there exists $\xi \in (q_r, q_r + 1)$ such that:
    $$
        (1 - \theta) \cdot f(q_r) + \theta \cdot f(q_r + 1) - f(q_r + \theta) = \frac{f''(\xi)}{2} \cdot \theta \cdot (1 - \theta)
    $$
    Multiplying by $2^r$ both sides of the equation, we have:
    $$
        0 \leq \text{base}(r) - \text{base\_ideal}(r) = 2^r \cdot \frac{f''(\xi)}{2} \cdot \theta \cdot (1 - \theta)
    $$
    Using the facts that $\theta \cdot (1 - \theta) \leq 1/4$ (because $1/4$ is the maximum value of function $x \mapsto x \cdot (1-x)$, attained at $1/2$) and that $f''(\xi) \leq f''(q_r) \leq f''(1) = 1 / \log(2)$ (because $f''(m) = 1 / (m \cdot \log(2))$ is decreasing), we have:
    $$
        0 \leq \text{base}(r) - \text{base\_ideal}(r) \leq \frac{2^r}{8 \log(2)}
    $$
    Summing over all the levels, we have:
    $$
        0 \leq \sum_{r=0}^k \text{base}(r) - \text{base\_ideal}(r) \leq \sum_{r=1}^k \frac{2^r}{8 \log(2)} = \mathcal{O}\left( 2^k \right)
    $$

    Now, let's focus on the shifted intervals, and more specifically on $f(w_r)$, and let's compare it to its ideal counterpart $f(l \cdot 2^{-r})$.
    Recall that $q_r = \lfloor l \cdot 2^{-r} \rfloor$ and that $w_r$ is equal to either $q_r$ (if $s_r = 0$) or $q_r + 1$ (if $s_r > 0$).
    Thus, we have $l \cdot 2^{-r} \in [q_r, q_r + 1]$ and $w_r \in [q_r, q_r + 1]$.
    Since $f$ is differentiable, using the mean value inequality and the fact that $f$ is convex, we have:
    $$
        \lvert f(w_r) - f(l \cdot 2^{-r}) \rvert \leq \max_{[q_r, q_r + 1]} \lvert f' \rvert = f'(q_r + 1) = \log_2(q_r + 1) + \frac{1}{\log(2)}
    $$
    Since $q_r = \lfloor l \cdot 2^{-r} \rfloor \leq l \cdot 2^{-r}$, we have:
    $$
        \log_2(q_r + 1) \leq \log_2(l \cdot 2^{-r} + 1) \leq \log_2(l \cdot 2^{-r}) + \log_2(2) = \log_2(l) - r + 1
    $$
    Summing over all the levels, we have:
    $$
        \sum_{r=1}^k \left[ \log_2(q_r + 1) + \frac{1}{\log(2)} \right] \leq k \cdot \left( \log_2(l) + 1 + \frac{1}{\log(2)} \right) - \frac{k \cdot (k + 1)}{2} = \mathcal{O}\left( k \cdot \log(l) \right)
    $$

    Putting it together, the total deviation from the ideal case is:
    \begin{align*}
        T_s^r(l, d) - T_s^{r*}(l, d) &= c_1 \cdot \left( \sum_{r=1}^k (\text{base}(r) - \text{base\_ideal}(r)) + (\text{shift}(r) - \text{shift\_ideal}(r)) \right)\\
        &= 2 \cdot c_1 \cdot \left( \sum_{r=1}^k (\text{base}(r) - \text{base\_ideal}(r)) + \sum_{r=1}^k (f(l \cdot 2^{-r}) - f(w_r)) \right)\\
        T_s^r(l, d) - T_s^{r*}(l, d) &= c_1 \cdot \mathcal{O} \left( 2^k \right) + \mathcal{O}\left( k \cdot \log(l) \right)
    \end{align*}
    Thus, the total sort cost for an arbitrary length $l$ is:
    $$
        T_s^r(l, d) = c_1 \cdot l \cdot \left[ \left( 2k + 2^{-k} \right) \cdot \log_2(l) - \left( k^2 + k - 2 + (k + 2) \cdot 2^{-k} \right) \right] + \mathcal{O} \left( 2^k \right) + \mathcal{O}\left( k \cdot \log(l) \right)
    $$

    \noindent\ul{Case $l < 1.5 \cdot 2^k$}. In this case, the correction term only removes the following positive term:
    $$
        \text{base}(k) - f(w_k) = \left( 2^k - s_k \right) \cdot f(q_k) + s_k \cdot f(q_k + 1) - f(w_k)
    $$
    Therefore, it does not change the bound obtained when $l \geq 1.5 \cdot 2^k$.

    \noindent\ul{General case, any arbitrary $l$}. Since the result holds for both cases, we have:
    $$
        T_s^r(l, d) = c_1 \cdot l \cdot \left[ \left( 2k + 2^{-k} \right) \cdot \log_2(l) - \left( k^2 + k - 2 + (k + 2) \cdot 2^{-k} \right) \right] + \mathcal{O} \left( 2^k \right) + \mathcal{O}\left( k \cdot \log(l) \right)
    $$
\end{proof}

\begin{proof}[Proof of \autoref{theorem:8}]
    \label{proof:theorem:8}
    The proof starts with the results of \autoref{theorem:7}:
    $$
        T_s^r(l, d) = l \cdot \left[ \left( 2k + 2^{-k} \right) \cdot \log_2(l) - \left( k^2 + k - 2 + (k + 2) \cdot 2^{-k} \right) \right] + \mathcal{O} \left( 2^k \right) + \mathcal{O}\left( k \cdot \log(l) \right)
    $$
    with $k = e - 1$.
    We split the proof in two parts (one for each case).

    \noindent\textbf{Saturated regime}\qquad In the saturated regime ($l \geq 2^{d-1}$), we have $k = d - 1 \leq \log_2(l)$. We derive both an upper bound and a lower bound.
    For the first term, we have:
    \begin{align*}
        \left( 2k + 2^{-k} \right) \cdot \log_2(l) \leq (2k + 1) \log_2(l) = \mathcal{O} (d \cdot \log(l))\\
        0 \leq \left( k^2 + k - 2 + (k + 2) \cdot 2^{-k} \right) = \mathcal{O} \left( k^2 \right) \subseteq \mathcal{O} (d \cdot \log(l))
    \end{align*}
    Therefore, we have:
    $$
        l \cdot \left[ \left( 2k + 2^{-k} \right) \cdot \log_2(l) - \left( k^2 + k - 2 + (k + 2) \cdot 2^{-k} \right) \right] = \mathcal{O} \left( d \cdot l \cdot \log(l) \right)
    $$
    Both correction terms being dominated by the first term, we obtain the desired upper bound:
    $$
        T_s^r(l, d) = \mathcal{O} \left( d \cdot l \cdot \log(l) \right)
    $$

    For the lower bound, let's view the bracketed expression as a function of $x = \log_2(l)$:
    $$
        g(x) = \left( 2k + 2^{-k} \right) \cdot x - \left( k^2 + k - 2 + (k + 2) \cdot 2^{-k} \right)
    $$
    Since $g$ is affine in $x$ with slope $2k + 2^{-k} \geq k$, and since $k \leq \log_2(l)$ in the saturated regime, for any $x \geq k$ we have:
    $$
        g(x) = g(k) + (x - k) \cdot \left( 2k + 2^{-k} \right) \geq g(k) + (x - k) \cdot k
    $$
    We now lower-bound $g(k) = k^2 - k + 2 - 2^{1-k}$ for every integer $k \geq 0$.
    For $k = 0$, we have $g(0) = 0 = 0^2/2$.
    For $k = 1$, we have $g(1) = 1 \geq 1/2 = 1^2/2$.
    For $k \geq 2$, using $2^{1-k} \leq 2$, we have:
    $$
        g(k) \geq k^2 - k + 2 - 2 = k(k-1) \geq \frac{k^2}{2}
    $$
    In every case, $g(k) \geq k^2/2$. Combining with the slope bound above, for any $x \geq k$:
    $$
        g(x) \geq \frac{k^2}{2} + (x - k) \cdot k = kx - \frac{k^2}{2} \geq \frac{kx}{2}
    $$
    where the last step uses $x \geq k$. Substituting $x = \log_2(l)$, this holds for every $l$ in the saturated regime (i.e. every $l \geq 2^k$), not only at $l = 2^k$:
    $$
        \left( 2k + 2^{-k} \right) \cdot \log_2(l) - \left( k^2 + k - 2 + (k + 2) \cdot 2^{-k} \right) \geq \frac{k \cdot \log_2(l)}{2}
    $$
    Therefore, we have:
    $$
        l \cdot \left[ \left( 2k + 2^{-k} \right) \cdot \log_2(l) - \left( k^2 + k - 2 + (k + 2) \cdot 2^{-k} \right) \right] = \Omega(d \cdot l \cdot \log(l))
    $$
    Once again, both correction terms being dominated by the first term, we obtain the desired lower bound:
    $$
        T_s^r(l, d) = \Omega \left( d \cdot l \cdot \log(l) \right)
    $$
    With both the upper and lower bounds matching, we have:
    $$
        T_s^r(l, d) = \Theta \left( d \cdot l \cdot \log(l) \right)
    $$

    \noindent\textbf{Unsaturated regime}\qquad In the unsaturated regime ($l < 2^{d-1}$), we have $k = \lfloor \log_2(l) \rfloor$, hence $\log_2(l) = k + \{\log_2(l)\}$.
    Focusing on the terms inside the brackets, we have:
    \begin{align*}
        &\left( 2k + 2^{-k} \right) \cdot \log_2(l) - \left( k^2 + k - 2 + (k + 2) \cdot 2^{-k} \right)\\
        &= \left( 2k + 2^{-k} \right) \cdot (k + \{\log_2(l)\}) - \left( k^2 + k - 2 + (k + 2) \cdot 2^{-k} \right)\\
        &= k^2 + \underbrace{(2 k \cdot \{\log_2(l)\} - k + 2)}_{=\mathcal{O}(k)} + \underbrace{2^{-k} (\{\log_2(l)\} - 2)}_{o(1)}\\
        &= k^2 + \mathcal{O}(k^2) + o(1)\\
        &= \Theta \left( k^2 \right)\\
        &= \Theta \left( (\log(l))^2 \right)
    \end{align*}
    Therefore, for the first term, we have:
    $$
        l \cdot \left[ \left( 2k + 2^{-k} \right) \cdot \log_2(l) - \left( k^2 + k - 2 + (k + 2) \cdot 2^{-k} \right) \right] = \Theta \left( l \cdot (\log(l))^2 \right)
    $$
    Moreover, both correction terms are dominated by the first term:
    \begin{align*}
        \mathcal{O}\left( 2^k \right) &= \mathcal{O}(l)\\
        \mathcal{O}\left( k \cdot \log(l) \right) &= \mathcal{O}\left( (\log(l))^2 \right)
    \end{align*}
    Therefore, we have:
    $$
        T_s^r(l, d) = \Theta \left( l \cdot \left( \log(l) \right)^2 \right)
    $$
\end{proof}

\begin{proof}[Proof of \autoref{theorem:9}]
    \label{proof:theorem:9}
    An interval of width $1$ contributes no extraction cost: its single value is copied directly, with no sorting, no quantile interpolation, and no mean computed. For every other interval, of width $m > 1$, the extraction cost consists of two terms:
    \begin{itemize}
        \item the cost of extracting the quantiles, which is proportional to the number of quantiles, and
        \item the cost of computing the mean of the subseries, which is proportional to the series length $m$.
    \end{itemize}
    Thus, the cost of the extraction step for a single sorted subseries of width $m > 1$ is:
    $$
        \text{extraction}(m) = c_2 \cdot n_q(m, \nu) + c_3 \cdot m
    $$
    with $c_2$ being the empirical per-quantile extraction cost (in seconds/quantile), and $c_3$ being the empirical per-element extraction cost (in seconds/element).
    Summing over all the intervals of width strictly greater than $1$ (\autoref{theorem:5}), we have:
    $$
        T_e^r(l, d, \nu) = c_2 \cdot N_q^{>1}(l, d, \nu) + c_3 \cdot W^{>1}(l, d)
    $$
\end{proof}

\begin{proof}[Proof of \autoref{theorem:10}]
    \label{proof:theorem:10}
    We first show that restricting the count and the total width to intervals of width strictly greater than $1$ does not change their asymptotic order. By \autoref{theorem:5}, $N_i^{(1)}(l, d) \leq 2 \cdot 2^{k+1} = \mathcal{O} \left( 2^k \right)$, and $N_i(l, d) = \Theta \left( 2^k \right)$ (\autoref{theorem:1}), so:
    $$
        N_i^{>1}(l, d) = N_i(l, d) - N_i^{(1)}(l, d) = \Theta \left( 2^k \right)
    $$
    Likewise, since $W(l, d) = \Theta(k \cdot l)$ (\autoref{theorem:2}) and $N_i^{(1)}(l, d) = \mathcal{O} \left( 2^k \right) = \mathcal{O}(l) \subseteq \mathcal{O}(k \cdot l)$, we have:
    $$
        W^{>1}(l, d) = W(l, d) - N_i^{(1)}(l, d) = \Theta(k \cdot l)
    $$

    For any length $m \geq 1$ and any $\nu \geq 1$, we have:
    $$
        1 \leq n_q(m, \nu) = 1 + \left\lfloor \frac{(m - 1)}{\nu} \right\rfloor \leq 1 + \frac{m}{\nu} \leq 1 + m
    $$
    Summing over all the $N_i^{>1}(l, d)$ intervals of width strictly greater than $1$, we have:
    $$
        N_i^{>1}(l, d) \leq N_q^{>1}(l, d, \nu) \leq N_i^{>1}(l, d) + W^{>1}(l, d)
    $$
    Using \autoref{theorem:9}, we have:
    $$
        c_2 \cdot N_i^{>1}(l, d) + c_3 \cdot W^{>1}(l, d) \leq T_e^r(l, d, \nu) \leq c_2 \cdot N_i^{>1}(l, d) + (c_2 + c_3) \cdot W^{>1}(l, d)
    $$
    Since $N_i^{>1}(l, d) = \Theta \left( 2^k \right)$ and $W^{>1}(l, d) \geq l - N_i^{(1)}(l,d) = \Omega(l)$ (as $W(l,d) \geq l$ by definition and $N_i^{(1)}(l,d) = \mathcal{O}(2^k) = \mathcal{O}(l)$), we have $N_i^{>1}(l, d) = \mathcal{O} \left( W^{>1}(l, d) \right)$, hence:
    $$
        T_e^r(l, d, \nu) = \Theta \left( W^{>1}(l, d) \right) = \Theta(k \cdot l) = \Theta(e \cdot l)
    $$
    Since $e = d$ in the saturated regime, and $e = \lfloor \log_2 l \rfloor + 1$ in the unsaturated regime, we conclude:
    $$
        T_e^r(l, d, \nu) = \begin{cases}
            \Theta(d \cdot l) & \textnormal{if } l \geq 2^{d-1} \text{ (saturated regime)}\\
            \Theta(l \cdot \log(l)) & \textnormal{if } l < 2^{d-1} \text{ (unsaturated regime)}
        \end{cases}
    $$
\end{proof}

\begin{proof}[Proof of \autoref{lemma:1}]
    \label{proof:lemma:1}
    Quant processes each non-trivial interval by first sorting its values and then extracting the (interpolated, possibly centered) quantiles from the sorted values.
    These are the only two operations performed on an interval, and they are applied sequentially, so the processing cost of a single interval is the sum of its sort cost and its extraction cost.
    Summing over all the intervals (\autoref{theorem:1}) preserves this additivity, giving $T^r(l, d, \nu) = T_s^r(l, d) + T_e^r(l, d, \nu)$, where $T_s^r(l, d)$ and $T_e^r(l, d, \nu)$ are respectively the total sort cost (\autoref{theorem:7}) and the total extraction cost (\autoref{theorem:9}).
\end{proof}

\begin{proof}[Proof of \autoref{theorem:11}]
    \label{proof:theorem:11}
    By \autoref{lemma:1}, we have:
    $$
        T^r(l, d, \nu) = T_s^r(l, d) + T_e^r(l, d, \nu)
    $$
    We recall the results of \autoref{theorem:8} and \autoref{theorem:10} below:
    \begin{align*}
        T_s^r(l, d) &= \begin{cases}
            \Theta \left( d \cdot l \cdot \log(l) \right) & \textnormal{if } l \geq 2^{d-1} \text{ (saturated regime)}\\
            \Theta \left( l \cdot \left( \log(l) \right)^2 \right) & \textnormal{if } l < 2^{d-1} \text{ (unsaturated regime)}
        \end{cases}\\
        T_e^r(l, d, \nu) &= \begin{cases}
            \Theta(d \cdot l) & \textnormal{if } l \geq 2^{d-1} \text{ (saturated regime)}\\
            \Theta(l \cdot \log(l)) & \textnormal{if } l < 2^{d-1} \text{ (unsaturated regime)}
        \end{cases}
    \end{align*}

    In the saturated regime, we have $d \leq \lfloor \log_2(l) \rfloor + 1$, so $d \cdot l = o \left( d \cdot l \cdot \log(l) \right)$, thus the extraction term is asymptotically negligible next to the sort term.

    In the non-saturated regime, we have $l \cdot \log(l) = o \left( l \cdot \left( \log(l) \right)^2 \right)$, thus the extraction term is also asymptotically negligible next to the sort term.

    Therefore, in both regimes, the extraction term is asymptotically negligible next to the sort term, leading to the desired result:
    $$
        T^r(l, d, \nu) = T_s^r(l, d) = \begin{cases}
            \Theta \left( d \cdot l \cdot \log(l) \right) & \textnormal{if } l \geq 2^{d-1} \text{ (saturated regime)}\\
            \Theta \left( l \cdot \left( \log(l) \right)^2 \right) & \textnormal{if } l < 2^{d-1} \text{ (unsaturated regime)}
        \end{cases}
    $$

\end{proof}

\begin{proof}[Proof of \autoref{theorem:12}]
    \label{proof:theorem:12}
    Using \autoref{theorem:6} and replacing $k$ with $5$, we have:
    $$
        T_s^r(l, 6) = c_1 \cdot l \cdot \left[ \frac{321}{32} \cdot \log_2(l) - \frac{903}{32} \right]
    $$
    Replacing $l$ with $2^b$, we have:
    $$
        T_s^r \left( 2^b, 6 \right) = c_1 \cdot 2^b \cdot \left[ \frac{321}{32} \cdot b - \frac{903}{32} \right] = c_1 \cdot 2^{b-5} \cdot (321 b - 903) = 3 \cdot c_1 \cdot 2^{b-5} \cdot \left( 107 \cdot b - 301 \right)
    $$
\end{proof}

\begin{proof}[Proof of \autoref{theorem:13}]
    \label{proof:theorem:13}
    The starting point of the proof is \autoref{theorem:9}, $T_e^r(l,d,\nu) = c_2 \cdot N_q^{>1}(l,d,\nu) + c_3 \cdot W^{>1}(l,d)$, together with \autoref{theorem:3}, $N_q(l,d,\nu) = N_i(l,d) + \left( W(l,d) - N_i(l,d) \right)/\nu - E(l,d,\nu)$, where $E(l,d,\nu) = \sum_j \{ (m_j - 1)/\nu \}$ is summed over all the base and shifted intervals, and \autoref{theorem:5}, which gives $N_i^{(1)}(2^b,6) = 0$ for every $b \geq 6$ considered below (so that $N_q^{>1} = N_q$ and $W^{>1} = W$ throughout the $b > 5$ case), and $N_i^{(1)}(32,6) = 32 \neq 0$ for $b = 5$, treated separately at the end.

    Since $l = 2^b$ is divisible by $2^r$ for every $r \in \{0, \ldots, b\}$, and in particular for every $r \in \{0, \ldots, k\}$ with $k = \min(5, b) = 5$ (as $b \geq 5$), every level-$r$ correction term in \autoref{theorem:2} vanishes exactly, so $N_i(2^b, 6)$ and $W(2^b, 6)$ reduce to the leading terms of \autoref{theorem:1} and \autoref{theorem:2}. For the same reason, every base and shifted interval at level $r$ has the exact same width $2^{b-r}$, so the corresponding term of $E(2^b, 6, 4)$ is identical across all of them, and the sum over intervals $j$ collapses to a sum over levels $r$: $E(2^b, 6, 4) = \sum_{r=0}^k \left( 2^{r+1} - 1 \right) \left\{ (2^{b-r} - 1)/4 \right\}$ if level $k$ has shifted intervals (i.e. $2^b > 1.5 \cdot 2^k$), or the same sum truncated to $r \in \{0, \ldots, k-1\}$ if it does not (i.e. $2^b = 2^k$), per the proof of \autoref{theorem:1}.

    There are two different cases: $b = 5$ and $b > 5$. We start with the latter.

    \noindent\ul{Case $b > 5$}. Here $l = 2^b > 1.5 \cdot 2^5 = 48$ (since $b > 5$ means $2^b \geq 64$), so \autoref{theorem:1} gives $N_i(2^b, 6) = 2^7 - 5 - 3 = 120$, \autoref{theorem:2} gives $W(2^b, 6) = (2 \cdot 5 + 2^{-5}) \cdot 2^b = \frac{321}{32} \cdot 2^b$, and level $k=5$ has shifted intervals, so $E(2^b, 6, 4) = \sum_{r=0}^5 \left( 2^{r+1} - 1 \right) \left\{ (2^{b-r}-1)/4 \right\}$. Substituting into \autoref{theorem:3} and \autoref{theorem:9}, we have:
    \begin{align*}
        T_e^r \left(2^b, 6, 4 \right) &= c_2 \cdot 120 \cdot \frac{3}{4} + 2^b \cdot \left( \frac{321}{32} \right) \left( \frac{c_2}{4} + c_3 \right) - c_2 \cdot \sum_{r=0}^5 \left( 2^{r+1} - 1 \right) \left\{ \frac{2^{b-r} - 1}{4} \right\}\\
        T_e^r \left(2^b, 6, 4 \right) &= 90 \cdot c_2 + 2^b \cdot \frac{321}{128} \cdot c_2 + 2^b \cdot \frac{321}{32} \cdot c_3 - c_2 \sum_{r=0}^5 \left( 2^{r+1} - 1 \right) \left\{ \frac{2^{b-r} - 1}{4} \right\}
    \end{align*}
    We have two cases to distinguish to simplify the last term. For any $r \in \{0, \ldots, 5\}$:
    \begin{itemize}
        \item if $b - r = 1$, then $\left\{ (2^{b-r} - 1) / 4 \right\} = 0.25$,
        \item if $b - r \geq 2$, then $\left\{ (2^{b-r} - 1) / 4 \right\} = 0.75$.
    \end{itemize}
    Thus, we have:
    $$
        \sum_{r=0}^5 \left( 2^{r+1} - 1 \right) \left\{ \frac{2^{b-r} - 1}{4} \right\} = \begin{cases}
            \displaystyle 0.75 \times (1 + 3 + 7 + 15 + 31) + 0.25 \times 63 = 58.5 & \text{if } b = 6\\
            \displaystyle 0.75 \times 120 = 90 & \text{if } b \geq 7
        \end{cases}
    $$
    Simplifying the formula with the value of the last term derived above, we have:
    $$
        T_e^r \left(2^b, 6, 4 \right) = \begin{cases}
            \displaystyle 192 \cdot c_2 + 642 \cdot c_3 & \text{if } b = 6\\
            \displaystyle 321 \cdot \left( 2^{b-7} \cdot c_2 + 2^{b-5} \cdot c_3 \right) & \text{if } b \geq 7
        \end{cases}
    $$

    \noindent\ul{Case $b = 5$}. Here $l = 32 = 2^5 < 1.5 \times 2^5 = 48$, so \autoref{theorem:1} gives $N_i(32, 6) = 3 \times 32 - 5 - 2 = 89$, \autoref{theorem:2} gives $W(32, 6) = (9 + 2^{-4}) \times 32 = 290$, and level $k=5$ has no shifted intervals here, so $E(32, 6, 4) = \sum_{r=0}^4 \left( 2^{r+1} - 1 \right) \left\{ (2^{5-r}-1)/4 \right\}$, which, using the same case-split as above with $b=5$, evaluates to $27.25$.
    Substituting into \autoref{theorem:3}, we have $N_q(32, 6, 4) = 89 \cdot 3/4 + (290 - 89 \times 3/4 \times 4/3)/4 - 27.25$, which simplifies to $N_q(32, 6, 4) = 112$.

    Unlike the $b > 5$ case, here $k = 5 = \lfloor \log_2(32) \rfloor$ is exactly the deepest level reached (\autoref{theorem:1}'s proof), so this case has trivial (length-$1$) intervals, and \autoref{theorem:9} actually applies. Since $32 < 1.5 \times 2^5 = 48$, \autoref{theorem:5} gives $N_i^{(1)}(32, 6) = 2^6 - 32 = 32$, so:
    \begin{align*}
        N_i^{>1}(32, 6) &= 89 - 32 = 57 \\
        W^{>1}(32, 6) &= 290 - 32 = 258 \\
        N_q^{>1}(32, 6, 4) &= 112 - 32 = 80
    \end{align*}
    Substituting into \autoref{theorem:9}, we have:
    $$
        T_e^r(32, 6, 4) = c_2 \cdot N_q^{>1}(32,6,4) + c_3 \cdot W^{>1}(32,6,4) = 80 \cdot c_2 + 258 \cdot c_3
    $$

    \noindent\ul{General case $b \geq 5$}. We conclude:
    $$
        T_e^r \left(2^b, 6, 4 \right) = \begin{cases}
            \displaystyle 80 \cdot c_2 + 258 \cdot c_3 & \text{if } b = 5\\
            \displaystyle 192 \cdot c_2 + 642 \cdot c_3 & \text{if } b = 6\\
            \displaystyle 321 \cdot \left( 2^{b-7} \cdot c_2 + 2^{b-5} \cdot c_3 \right) & \text{if } b \geq 7
        \end{cases}
    $$
\end{proof}

\begin{proof}[Proof of \autoref{theorem:14}]
    \label{proof:theorem:14}
    We use \autoref{theorem:12} to derive the total sort cost for $b = 5$ and $b = 6$, \autoref{theorem:13} to derive the total extraction cost, sum both terms and simplify the expressions to obtain the desired result:
    $$
        T^r\left( 2^b, 6, 4 \right) = \begin{cases}
            \displaystyle 702 \cdot c_1 + 80 \cdot c_2 + 258 \cdot c_3 & \text{if } b = 5\\
            \displaystyle 2046 \cdot c_1 + 192 \cdot c_2 + 642 \cdot c_3 & \text{if } b = 6\\
            \displaystyle 2^{(b-5)} \cdot \left[ 3 \cdot c_1 \cdot (107 \cdot b - 301) + \frac{321}{4} \cdot c_2 + 321 \cdot c_3 \right] & \text{if } b \geq 7
        \end{cases}
    $$
\end{proof}

\begin{proof}[Proof of \autoref{lemma:2}]
    \label{proof:lemma:2}
    The raw representation is the series itself, of length $l_1(l) = l$.
    The first-order difference of a length-$l$ series has length $l - 1$.
    Quant smooths it with a length-$5$ centered moving average after padding both ends by $2$ (using edge-value padding), which restores the original (post-differencing) length, so the smoothed first-difference representation has length $l_2(l) = l - 1$.
    The second-order difference (the first-order difference applied twice) removes one point at each step, giving length $l_3(l) = l - 2$.
    The one-sided magnitude spectrum of the real discrete Fourier transform of a length-$l$ real sequence has $\lfloor l/2 \rfloor + 1$ non-redundant frequency bins (indices $0$ through $\lfloor l/2 \rfloor$), so $l_4(l) = \lfloor l/2 \rfloor + 1$.
    We require $l \geq 3$ so that $l_3(l) \geq 1$, i.e., so that every representation is well-defined and non-empty.
\end{proof}

\begin{proof}[Proof of \autoref{theorem:15}]
    \label{proof:theorem:15}
    Quant processes the four representations of a series independently and sequentially, applying the same interval-building-and-sorting procedure to each.
    The total sort cost is therefore exactly the sum, over the four representations, of the per-representation sort cost.
    By \autoref{remark:representation-generalization-exact}, the per-representation sort cost of representation $p$ is given by \autoref{theorem:7} evaluated at length $l_p(l)$ (\autoref{lemma:2}), giving the desired result.
\end{proof}

\begin{proof}[Proof of \autoref{theorem:16}]
    \label{proof:theorem:16}
    Identical to the proof of \autoref{theorem:15}, using \autoref{theorem:9} in place of \autoref{theorem:7} for the per-representation extraction cost.
    Unlike \autoref{theorem:7}, \autoref{theorem:9} is exact (it carries no $\mathcal{O}(\cdot)$ correction term), so the total is exact as soon as $N_q^{>1}$ and $W^{>1}$ are evaluated exactly via \autoref{theorem:5}.
\end{proof}

\begin{proof}[Proof of \autoref{theorem:17}]
    \label{proof:theorem:17}
    The total processing cost is the sum of the total sort cost (\autoref{theorem:15}) and the total extraction cost (\autoref{theorem:16}), exactly as in \autoref{lemma:1}.
\end{proof}

\begin{proof}[Proof of \autoref{theorem:18}]
    \label{proof:theorem:18}
    We first show that each representation's length is within a bounded factor of $l$: by \autoref{lemma:2}, $l_1(l) = l$, $l_2(l) = l-1$, and $l_3(l) = l-2$ differ from $l$ by an additive constant, and $l_4(l) = \lfloor l/2 \rfloor + 1$ satisfies $l/2 \leq l_4(l) \leq l/2 + 1$.
    For every $p$ and every $l \geq 4$, we therefore have $l/2 \leq l_p(l) \leq l$, i.e., $l_p(l) = \Theta(l)$.

    Consequently, $\log_2(l_p(l)) = \log_2(l) + \mathcal{O}(1)$ for every $p$.
    Writing $k_p = \min(d-1, \lfloor \log_2(l_p(l)) \rfloor)$ (the exponent \autoref{theorem:7} associates with representation $p$) and recalling $k = \min(d-1, \lfloor \log_2(l) \rfloor)$ (the exponent for the raw representation), the bound above gives $k_p \in \{k - 1, k, k+1\} \cap \{0, \ldots, d-1\}$ for every $p$ and every $l \geq 4$: a bounded ($\mathcal{O}(1)$) deviation from $k$.
    Equivalently, $e_p := k_p + 1 = e + \mathcal{O}(1)$, with $e_p \leq d$ always.

    We now bound each term of \autoref{theorem:15} and \autoref{theorem:16} using \autoref{theorem:8} and \autoref{theorem:10} applied at $l_p(l)$ in place of $l$:
    $$
        T_s^{r_p}(l_p(l), d) = \Theta \left( l_p(l) \cdot e_p \cdot \log(l_p(l)) \right), \qquad T_e^{r_p}(l_p(l), d, \nu) = \Theta(e_p \cdot l_p(l))
    $$
    Since $l_p(l) = \Theta(l)$ and $e_p = e + \mathcal{O}(1)$ with both $e_p$ and $e$ bounded above by $d$ and below by $1$ (so that an $\mathcal{O}(1)$ additive shift is also a $\Theta(1)$ multiplicative factor whenever $e = \Theta(1)$, and is asymptotically negligible whenever $e = \Theta(\log l) \to \infty$), we have $e_p = \Theta(e)$ and $\log_2(l_p(l)) = \Theta(\log(l))$.
    Substituting:
    $$
        T_s^{r_p}(l_p(l), d) = \Theta(l \cdot e \cdot \log(l)), \qquad T_e^{r_p}(l_p(l), d, \nu) = \Theta(e \cdot l)
    $$
    for every $p \in \{1,2,3,4\}$.
    Summing four terms of matching $\Theta$-order does not change the order (the constant factor $4$ is absorbed into the $\Theta$), so:
    $$
        T_s(l,d) = \sum_{p=1}^4 T_s^{r_p}(l_p(l), d) = \Theta(l \cdot e \cdot \log(l)), \qquad T_e(l,d,\nu) = \sum_{p=1}^4 T_e^{r_p}(l_p(l), d, \nu) = \Theta(e \cdot l)
    $$
    Since $e \cdot l = \mathcal{O}(l \cdot e \cdot \log(l))$ (as $\log_2(l) \geq 1$ for $l \geq 2$), $T_e(l,d,\nu)$ is dominated by $T_s(l,d)$, exactly as in the proof of \autoref{theorem:11}, so, by \autoref{theorem:17}, $T(l,d,\nu) = T_s(l,d) + T_e(l,d,\nu) = \Theta(l \cdot e \cdot \log(l))$.
    Splitting by which term wins in the definition of $e$ recovers the saturated/unsaturated case split, identical in form to \autoref{theorem:11}.
\end{proof}

\begin{proof}[Proof of \autoref{theorem:19}]
    \label{proof:theorem:19}
    The interval-outer implementation performs exactly $N_i(l,d)$ iterations of its outer loop (\autoref{theorem:1}), one per interval, but only the $N_i^{>1}(l,d)$ iterations visiting an interval of width strictly greater than $1$ (\autoref{theorem:5}) dispatch the vectorized sort-and-extract call that $\kappa^{(I)}(l)$ accounts for: an iteration visiting a width-$1$ interval instead copies a single value directly, at a cost that does not scale with $n$ the way the sort-and-extract dispatch cost does, and which \autoref{theorem:9} already established is otherwise negligible for the purpose of this cost model. Summing over the $N_i^{>1}(l,d)$ sort-and-extract iterations gives the setup term $N_i^{>1}(l,d) \cdot \kappa^{(I)}(l)$.

    Within a single interval's vectorized call, NumPy sorts and extracts from each of the $n$ rows of the $(n \times m)$ sub-array independently, at the same per-row cost as sorting and extracting from a single length-$m$ array. Summing this cost across the $n$ rows, and then across the $N_i(l,d)$ intervals, therefore reproduces exactly $n$ times the per-series sort-work and extraction accounting of \autoref{theorem:7} and \autoref{theorem:9}, evaluated with this implementation's own constants $c_1^{(I)}, c_2^{(I)}, c_3^{(I)}$. This gives the marginal term $n \cdot T^{r,(I)}(l,d,\nu)$.
\end{proof}

\begin{proof}[Proof of \autoref{theorem:20}]
    \label{proof:theorem:20}
    The series-outer implementation performs exactly $n$ iterations of its outer loop, one per series.
    Each iteration dispatches a single compiled call that processes all $N_i(l,d)$ intervals of that series internally.
    Because the loop over intervals is compiled rather than interpreted, it contributes no dispatch overhead of its own, so the only cost per iteration beyond the compiled kernel's own work is the single call-dispatch overhead $\kappa^{(S)}(l)$ of invoking it, a quantity that may itself vary with $l$ (e.g., through the cost of marshalling an $l$-dependent amount of data into and out of the compiled call).
    The compiled kernel implements the same sort-then-extract algorithm accounted for in \autoref{theorem:7} and \autoref{theorem:9}, so its cost for one series is exactly $T^{r,(S)}(l,d,\nu)$, using this implementation's own constants $c_1^{(S)}, c_2^{(S)}, c_3^{(S)}$.
    Summing the per-iteration cost $\kappa^{(S)}(l) + T^{r,(S)}(l,d,\nu)$ over the $n$ independent series gives the result.
\end{proof}

\begin{proof}[Proof of \autoref{theorem:21}]
    \label{proof:theorem:21}
    Both $T^{r,(I)}$ and $T^{r,(S)}$ are affine in $n$ (\autoref{theorem:19}, \autoref{theorem:20}): $T^{r,(I)}(n) = A_I + B_I \cdot n$ and $T^{r,(S)}(n) = B_S \cdot n$, with $A_I \geq 0$.

    If $B_S \leq B_I$: since $A_I \geq 0$, we have $T^{r,(S)}(n) = B_S \cdot n \leq B_I \cdot n \leq A_I + B_I \cdot n = T^{r,(I)}(n)$ for every $n \geq 1$.

    If $B_S > B_I$: the difference $T^{r,(I)}(n) - T^{r,(S)}(n) = A_I - (B_S - B_I) \cdot n$ is a strictly decreasing affine function of $n$, non-negative at $n = 0$ (since $A_I \geq 0$), and equal to zero at $n = n^*(l,d,\nu) := A_I / (B_S - B_I)$. Therefore, $T^{r,(I)}(n) > T^{r,(S)}(n)$ for $n < n^*$, and $T^{r,(I)}(n) < T^{r,(S)}(n)$ for $n > n^*$.
\end{proof}

\begin{proof}[Proof of \autoref{theorem:22}]
    \label{proof:theorem:22}
    Computing the moments of a single interval of width $m$ requires exactly one Welford/Pebay pass over its $m$ elements (\autoref{sec6.1}): each element is visited exactly once, and each visit performs the same fixed number of arithmetic operations regardless of $m$ or of the element's position within the interval.
    Charging this fixed per-element work at rate $\tilde c_1$, the cost of computing the moments of a single interval of width $m$ is exactly $\tilde c_1 \cdot m$, with no further dependence on $m$ or on $d$.
    Summing over all $N_i(l,d)$ intervals, and using $\sum_j m_j = W(l,d)$ (\autoref{theorem:2}), the total moment computation cost is exactly:
    $$
        \widetilde T_m^r(l,d) = \tilde c_1 \cdot \sum_j m_j = \tilde c_1 \cdot W(l,d)
    $$
\end{proof}

\begin{proof}[Proof of \autoref{theorem:23}]
    \label{proof:theorem:23}
    By \autoref{theorem:22}, $\widetilde T_m^r(l,d) = \tilde c_1 \cdot W(l,d)$, and by \autoref{theorem:2}, $W(l,d) = \Theta(k \cdot l) = \Theta(e \cdot l)$ with $k = e - 1$.
    Therefore, $\widetilde T_m^r(l,d) = \Theta(e \cdot l)$.
    Splitting by regime as in the proof of \autoref{theorem:8} ($e = d$ in the saturated regime, $e = \lfloor \log_2(l) \rfloor + 1 = \Theta(\log(l))$ in the unsaturated regime) gives the stated two cases.
\end{proof}

\begin{proof}[Proof of \autoref{theorem:24}]
    \label{proof:theorem:24}
    For a single interval whose moments have already been computed (\autoref{theorem:22}), the extraction step consists of two sub-steps, which the two constants keep separate precisely because they behave differently on trivial (length-$1$, kind $=0$) intervals.
    Converting the raw moments $(m, M_2, M_3, M_4)$ into (variance, skewness, excess kurtosis) (\autoref{sec6.1}) is applied uniformly to every interval, including trivial ones, at the fixed per-interval rate $\tilde c_3$, since Quant computes it as a single element-wise pass over the moments of \emph{all} $N_i(l,d)$ intervals at once, with no branch skipping trivial ones.
    Evaluating the Cornish-Fisher expansion, by contrast, is applied only to a trivial interval's non-existent request for an approximated quantile.
    A trivial interval's single requested position is instead filled directly from its (degenerate) mean, exactly as its exact-mode counterpart is filled directly from the interval's raw value (\autoref{theorem:9}).
    Thus, only the interior, genuinely-evaluated positions of the $N_q^{>1}(l,d,\nu)$ non-trivial-interval quantiles (\autoref{theorem:5}) are charged at rate $\tilde c_2$.
    Therefore, the extraction cost for a single non-trivial interval with $n_q(m,\nu)$ requested quantiles is $\tilde c_2 \cdot n_q(m,\nu) + \tilde c_3$, while a trivial interval costs only $\tilde c_3$ (the conversion, still performed, but no Cornish-Fisher evaluation).
    Summing over all $N_i(l,d)$ intervals for the $\tilde c_3$ term, and over the non-trivial ones only for the $\tilde c_2$ term, and using $\sum_{j:\, m_j > 1} n_q(m_j,\nu) = N_q^{>1}(l,d,\nu)$ (\autoref{theorem:5}), we obtain:
    $$
        \widetilde T_e^r(l,d,\nu) = \tilde c_2 \cdot N_q^{>1}(l,d,\nu) + \tilde c_3 \cdot N_i(l,d)
    $$
\end{proof}

\begin{proof}[Proof of \autoref{theorem:25}]
    \label{proof:theorem:25}
    By the same per-interval argument as in the proof of \autoref{theorem:10} ($1 \leq n_q(m,\nu) \leq m$ for every interval), restricted to the $N_i^{>1}(l,d)$ non-trivial intervals: $N_i^{>1}(l,d) \leq N_q^{>1}(l,d,\nu) \leq N_i^{>1}(l,d) + W(l,d)$ (the upper bound using $W(l,d)$, rather than the tighter $W^{>1}(l,d)$, since $W^{>1}(l,d) \leq W(l,d)$).
    Combined with \autoref{theorem:24}, this gives:
    $$
        \tilde c_2 \cdot N_i^{>1}(l,d) + \tilde c_3 \cdot N_i(l,d) \leq \widetilde T_e^r(l,d,\nu) \leq \tilde c_2 \cdot \left( N_i^{>1}(l,d) + W(l,d) \right) + \tilde c_3 \cdot N_i(l,d)
    $$
    Since $N_i(l,d) = \mathcal{O}(l) = \mathcal{O}(W(l,d))$ (\autoref{theorem:1}, and $W(l,d) \geq l$ by definition) and likewise $N_i^{>1}(l,d) \leq N_i(l,d) = \mathcal{O}(W(l,d))$, both the lower and upper bounds above are $\Theta(W(l,d))$, exactly as in the proof of \autoref{theorem:10}, so:
    $$
        \widetilde T_e^r(l,d,\nu) = \Theta(W(l,d)) = \Theta(e \cdot l)
    $$
    using \autoref{theorem:2}.
    Splitting by regime as in \autoref{theorem:10} gives the stated two cases.
\end{proof}

\begin{proof}[Proof of \autoref{lemma:3}]
    \label{proof:lemma:3}
    The approximate algorithm processes each interval by first computing its moments and then extracting the (Cornish-Fisher-approximated, possibly centered) quantiles from those moments.
    These are the only two operations performed on an interval, applied sequentially, exactly as in the proof of \autoref{lemma:1} for the exact algorithm's sort-then-extract processing. Summing over all the intervals (\autoref{theorem:1}) preserves this additivity.
\end{proof}

\begin{proof}[Proof of \autoref{theorem:26}]
    \label{proof:theorem:26}
    By \autoref{lemma:3}, $\widetilde T^r(l,d,\nu) = \widetilde T_m^r(l,d) + \widetilde T_e^r(l,d,\nu)$.
    By \autoref{theorem:23} and \autoref{theorem:25}, both terms are $\Theta(e \cdot l)$, unlike the exact algorithm (\autoref{theorem:11}), where the sort term dominates the extraction term.
    Here the two terms are already of the same order, so their sum is $\Theta(e \cdot l)$ as well.
    Splitting by regime as in \autoref{theorem:8} gives the stated two cases.
\end{proof}

\begin{proof}[Proof of \autoref{theorem:27}]
    \label{proof:theorem:27}
    Quant computes the moments of the four representations of a series independently and sequentially, applying the same interval-building-and-moment-computation procedure to each. The total moment computation cost is therefore exactly the sum, over the four representations, of the per-representation moment cost. By \autoref{remark:representation-generalization-approx}, the per-representation cost of representation $p$ is given by \autoref{theorem:22} evaluated at length $l_p(l)$ (\autoref{lemma:2}), giving the desired result.
\end{proof}

\begin{proof}[Proof of \autoref{theorem:28}]
    \label{proof:theorem:28}
    Identical to the proof of \autoref{theorem:27}, using \autoref{theorem:24} in place of \autoref{theorem:22} for the per-representation extraction cost. As in \autoref{theorem:16}, \autoref{theorem:24} is exact (it carries no $\mathcal{O}(\cdot)$ correction term), so the total is exact as soon as $N_q^{>1}$ and $N_i$ are evaluated exactly via \autoref{theorem:5} and \autoref{theorem:1}.
\end{proof}

\begin{proof}[Proof of \autoref{theorem:29}]
    \label{proof:theorem:29}
    The total processing cost is the sum of the total moment computation cost (\autoref{theorem:27}) and the total extraction cost (\autoref{theorem:28}), exactly as in \autoref{lemma:3}.
\end{proof}

\begin{proof}[Proof of \autoref{theorem:30}]
    \label{proof:theorem:30}
    By the proof of \autoref{theorem:18}, $l_p(l) = \Theta(l)$ and, writing $e_p := \min(d, \lfloor \log_2(l_p(l)) \rfloor + 1)$, we have $e_p = \Theta(e)$ for every $p \in \{1,2,3,4\}$. Applying \autoref{theorem:23} and \autoref{theorem:25} at $l_p(l)$ in place of $l$:
    $$
        \widetilde T_m^{r_p}(l_p(l), d) = \Theta(e_p \cdot l_p(l)) = \Theta(e \cdot l), \qquad \widetilde T_e^{r_p}(l_p(l), d, \nu) = \Theta(e_p \cdot l_p(l)) = \Theta(e \cdot l)
    $$
    for every $p$. Summing four terms of matching $\Theta$-order does not change the order (the constant factor $4$ is absorbed into the $\Theta$), so, using \autoref{theorem:27} and \autoref{theorem:28}:
    $$
        \widetilde T_m(l,d) = \sum_{p=1}^4 \widetilde T_m^{r_p}(l_p(l), d) = \Theta(e \cdot l), \qquad \widetilde T_e(l,d,\nu) = \sum_{p=1}^4 \widetilde T_e^{r_p}(l_p(l), d, \nu) = \Theta(e \cdot l)
    $$
    By \autoref{theorem:29}, $\widetilde T(l,d,\nu) = \widetilde T_m(l,d) + \widetilde T_e(l,d,\nu)$ is the sum of two terms of the same order, hence itself $\Theta(e \cdot l)$. Splitting by which term wins in the definition of $e$ gives the stated two cases, identical in form to \autoref{theorem:23} and \autoref{theorem:25}.
\end{proof}

\begin{proof}[Proof of \autoref{theorem:31}]
    \label{proof:theorem:31}
    For a single representation $p$, the moment computation of all $N_i(l_p(l),d)$ intervals (\autoref{theorem:22}) is dispatched as a single call, independent of $n_i$.
    It is the subsequent per-interval extraction step, converting each interval's moments into its requested quantiles, that the interval-outer implementation runs as $N_i(l_p(l),d)$ separate outer-loop iterations, one per interval (\autoref{theorem:1}, evaluated at $l_p(l)$), mirroring the proof of \autoref{theorem:19}.
    Of these, only the $N_i^{>1}(l_p(l),d)$ iterations visiting a non-trivial interval (\autoref{theorem:5}) dispatch a genuine Cornish-Fisher evaluation call.
    An iteration visiting a trivial interval instead copies its already-known mean directly, at a cost this proof, like that of \autoref{theorem:19} and \autoref{theorem:24}, treats as negligible.
    This gives the setup term $N_i^{>1}(l_p(l),d) \cdot \tilde\kappa^{(I)}(l_p(l))$ for representation $p$.
    The moment computation and (for non-trivial intervals) Cornish-Fisher extraction are each applied to all $n$ rows at once within their respective calls, reproducing $n$ times \autoref{theorem:22} and \autoref{theorem:24}'s per-series accounting at length $l_p(l)$, evaluated with this implementation's own constants, giving the marginal term $n \cdot \widetilde T^{r,(I)}(l_p(l),d,\nu)$.
    Since the four representations are processed independently and sequentially, summing this per-representation cost over $p \in \{1,2,3,4\}$ gives the total, using $\sum_p n \cdot \widetilde T^{r,(I)}(l_p(l),d,\nu) = n \cdot \widetilde T^{(I)}_\Sigma(l,d,\nu)$.
\end{proof}

\begin{proof}[Proof of \autoref{theorem:32}]
    \label{proof:theorem:32}
    For a single representation $p$, the argument is identical in structure to the proof of \autoref{theorem:20}: the series-outer implementation performs $n$ outer-loop iterations, each incurring a per-series overhead $\tilde\kappa^{(S)}(l_p(l))$, possibly varying with the representation's own length $l_p(l)$.
    The compiled kernel's cost for one series at length $l_p(l)$ is exactly $\widetilde T^{r,(S)}(l_p(l),d,\nu)$, using this implementation's own constants.
    Summing over the $n$ independent series gives $n \cdot \left[ \tilde\kappa^{(S)}(l_p(l)) + \widetilde T^{r,(S)}(l_p(l),d,\nu) \right]$ for representation $p$ alone.
    Since the four representations are processed independently and sequentially, each incurring its own per-series call-dispatch overhead $\tilde\kappa^{(S)}(l_p(l))$, summing over $p \in \{1,2,3,4\}$ gives:
    \begin{align*}
        \widetilde T^{(S)}(n,l,d,\nu) &= \sum_{p=1}^4 n \cdot \left[ \tilde\kappa^{(S)}(l_p(l)) + \widetilde T^{r,(S)}(l_p(l),d,\nu) \right]\\
        &= n \cdot \left[ \sum_{p=1}^4 \tilde\kappa^{(S)}(l_p(l)) + \sum_{p=1}^4 \widetilde T^{r,(S)}(l_p(l),d,\nu) \right]\\
        T^{(S)}(n,l,d,\nu) &= n \cdot \left[ \tilde\kappa^{(S)}_\Sigma(l) + \widetilde T^{(S)}_\Sigma(l,d,\nu) \right]
    \end{align*}
\end{proof}

\begin{proof}[Proof of \autoref{theorem:33}]
    \label{proof:theorem:33}
    Identical to the proof of \autoref{theorem:21}, substituting $\widetilde T^{(I)}, \widetilde T^{(S)}, \widetilde A_I, \widetilde B_I, \widetilde B_S$ for their exact-algorithm counterparts: the argument only uses that both $\widetilde T^{(I)}(n)$ and $\widetilde T^{(S)}(n)$ are affine in $n$ (\autoref{theorem:31}, \autoref{theorem:32}), with a non-negative intercept for $\widetilde T^{(I)}$, which holds here by the same reasoning as in the exact algorithm.
\end{proof}

\section{Additional results}\label{secB}

\autoref{sec8.5} shows the exact-vs-approx quantile correlation histograms for StarLightCurves in the main text, the largest and most fully depth-populated of the six data sets used there.
\autoref{fig:ecg200-correlation} through \autoref{fig:yoga-correlation} show the same figure for the remaining five data sets (see \autoref{table:quantile_correlation_stats} for their summary statistics).
ItalyPowerDemand only populates depths $0$ to $4$, since its series length ($l=24$) is short enough that the depth cap $\min(d, \lfloor \log_2(l) \rfloor + 1)$ (\autoref{sec3}) binds before the configured $d=6$ is reached, leaving its depth-$5$ panel empty.

\begin{figure}[tbp]
    \centering
    \includegraphics[width=\textwidth]{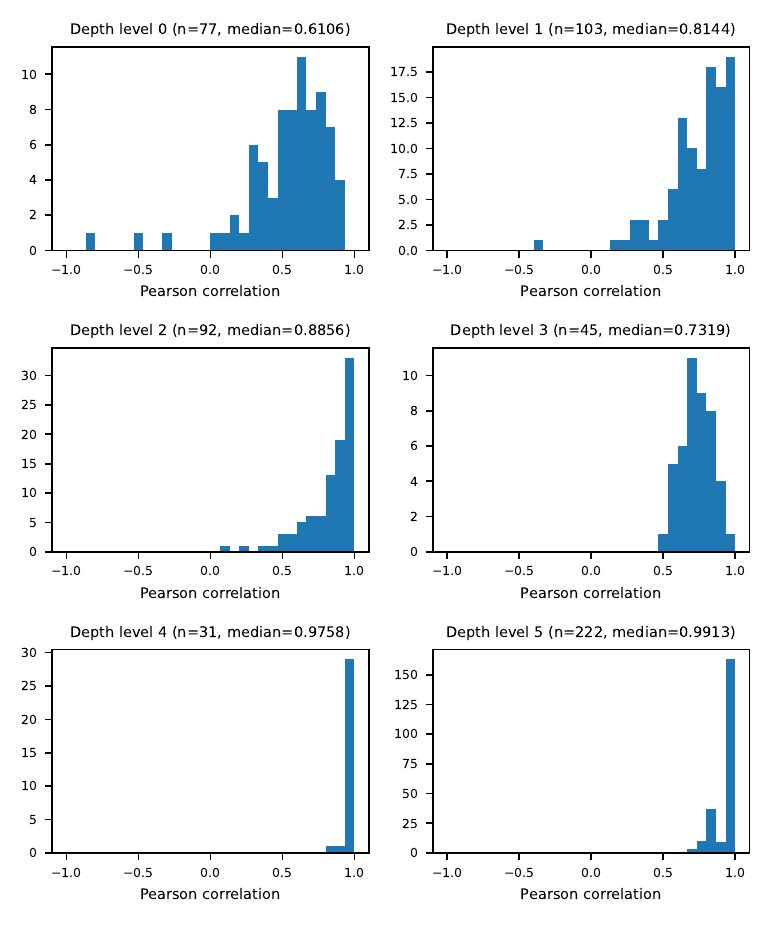}
    \caption{
        Distribution of the Pearson correlation coefficients between the exact and approximated moment-based quantile values on the ECG200 data set ($l = 96$), with one histogram per depth level $0$ to $5$.
        Each panel's title reports the number of genuinely-approximated columns at that depth and their median correlation.
    }
    \label{fig:ecg200-correlation}
\end{figure}

\begin{figure}[tbp]
    \centering
    \includegraphics[width=\textwidth]{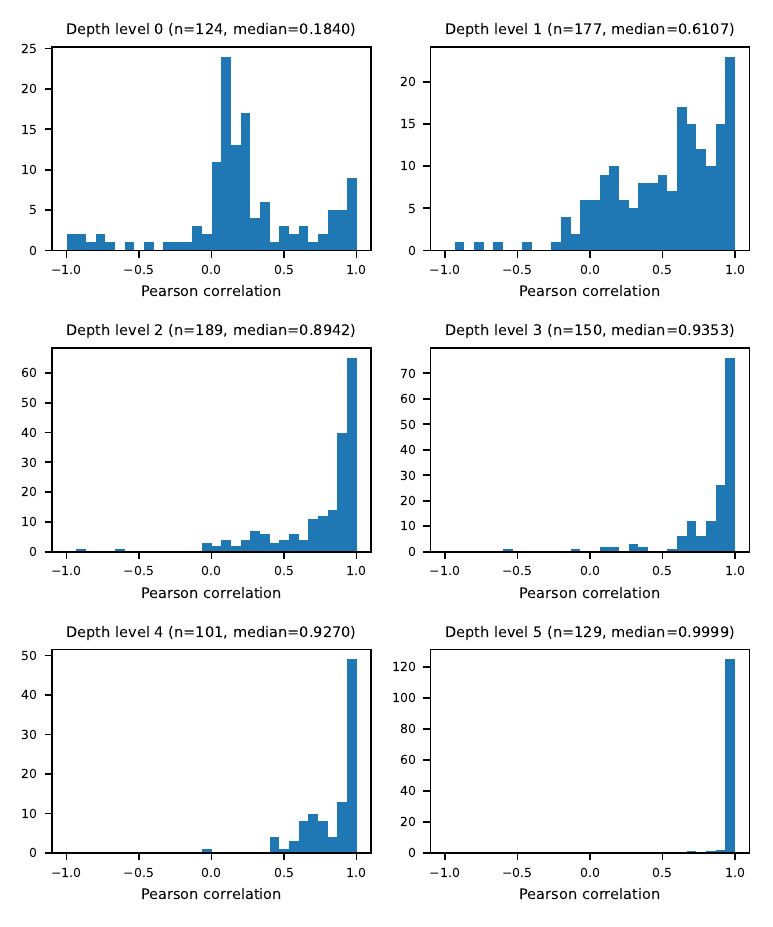}
    \caption{
        Distribution of the Pearson correlation coefficients between the exact and approximated moment-based quantile values on the GunPoint data set ($l = 150$), with one histogram per depth level $0$ to $5$.
        Each panel's title reports the number of genuinely-approximated columns at that depth and their median correlation.
    }
    \label{fig:gunpoint-correlation}
\end{figure}

\begin{figure}[tbp]
    \centering
    \includegraphics[width=\textwidth]{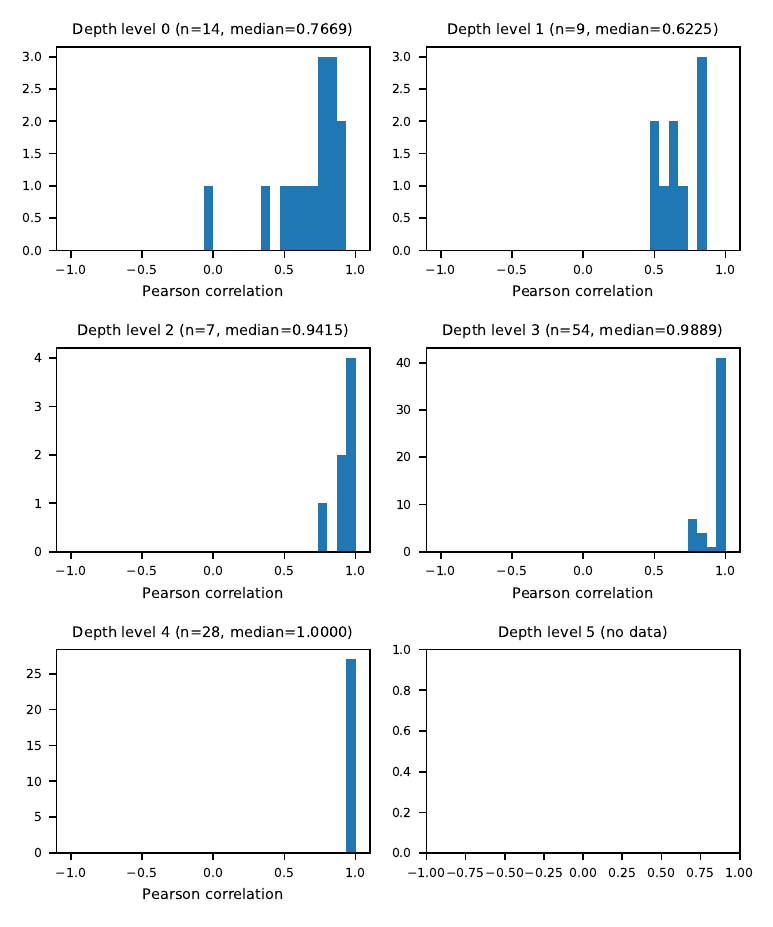}
    \caption{
        Distribution of the Pearson correlation coefficients between the exact and approximated moment-based quantile values on the ItalyPowerDemand data set ($ l =24$), with one histogram per depth level $0$ to $5$.
        The depth cap binds for this data set, so the panel for depth level $5$ has no data.
        Each panel's title reports the number of genuinely-approximated columns at that depth and their median correlation.
    }
    \label{fig:italypowerdemand-correlation}
\end{figure}

\begin{figure}[tbp]
    \centering
    \includegraphics[width=\textwidth]{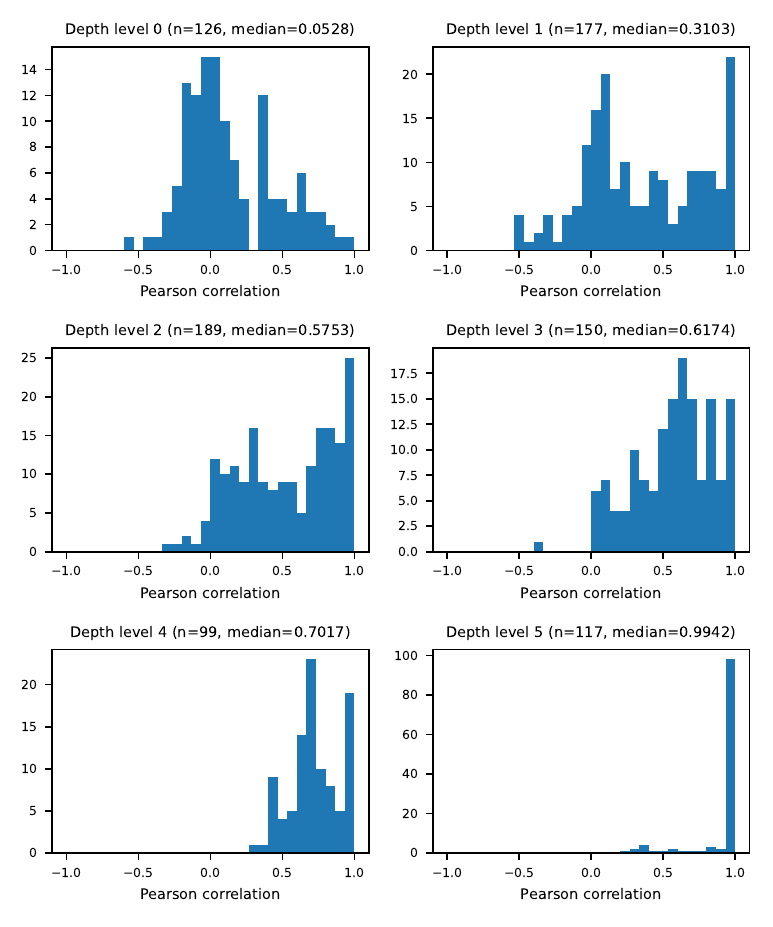}
    \caption{
        Distribution of the Pearson correlation coefficients between the exact and approximated moment-based quantile values on the Wafer data set ($l = 152$), with one histogram per depth level $0$ to $5$.
        Each panel's title reports the number of genuinely-approximated columns at that depth and their median correlation.
    }
    \label{fig:wafer-correlation}
\end{figure}

\begin{figure}[tbp]
    \centering
    \includegraphics[width=\textwidth]{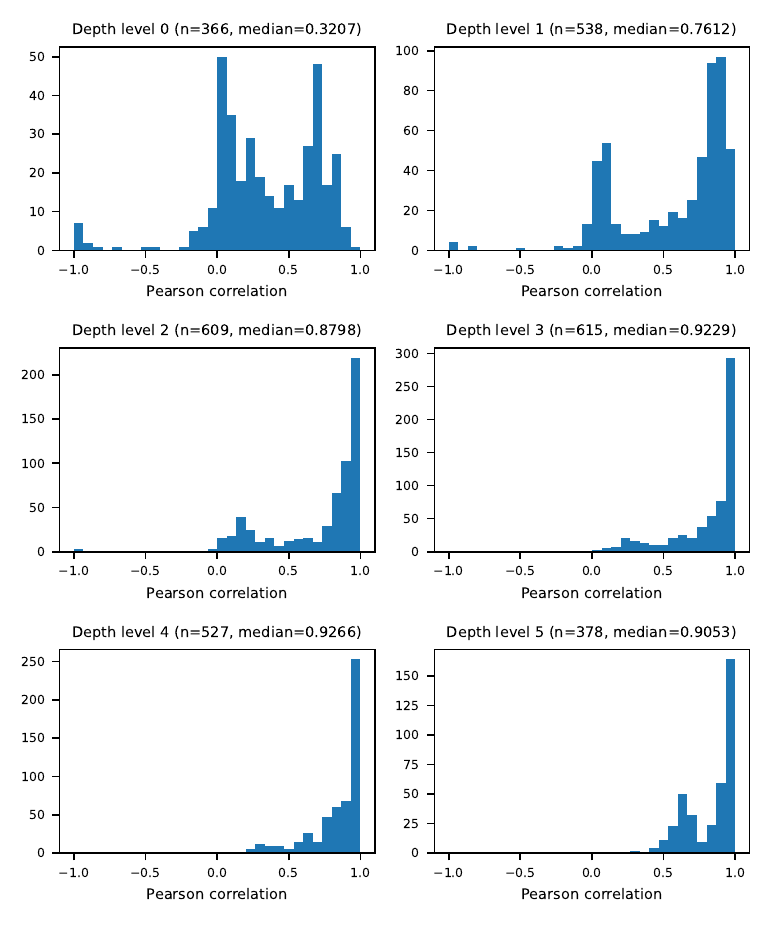}
    \caption{
        Distribution of the Pearson correlation coefficients between the exact and approximated moment-based quantile values on the Yoga data set ($l = 426$), with one histogram per depth level $0$ to $5$.
        Each panel's title reports the number of genuinely-approximated columns at that depth and their median correlation.
    }
    \label{fig:yoga-correlation}
\end{figure}

\end{appendices}

\clearpage

\bibliography{sn-bibliography}

\end{document}